%% file: neurips_2026.tex
\documentclass{article}

\usepackage[main,final]{neurips_2026}

\usepackage[utf8]{inputenc} 
\usepackage[T1]{fontenc}    
\usepackage{hyperref}       
\usepackage{url}            
\usepackage{booktabs}       
\hypersetup{
  hidelinks,
  pdftitle={NAGO: Noise-Aware Generative Operator via Flow Matching},
  pdfauthor={Guanyu Chen, Shengze Xu, Pengwei Liu, Xingyu Ren, Pengkai Wang, Antonios Armaou, Tieyong Zeng, Dong Ni}
}
\usepackage{xspace}
\usepackage{graphicx}
\usepackage{amsmath}
\usepackage{amssymb}
\usepackage{amsthm}
\usepackage{subcaption}
\usepackage{float}
\usepackage{algorithm}
\usepackage{wrapfig}
\usepackage{algorithmic}
\usepackage{multirow}
\usepackage{nicefrac}       
\usepackage{microtype}      
\usepackage{xcolor}         
\newcommand{\themodel}{\textbf{\texttt{NAGO}}\xspace}
\usepackage{natbib}
\usepackage{makecell}
\usepackage{titletoc}
\newtheorem{theorem}{Theorem}[section]

\newtheorem{definition}[theorem]{Definition}

\newtheorem{remark}[theorem]{Remark}
\newtheorem*{remark*}{\textbf{Remark}}
\usepackage[textsize=tiny]{todonotes}
\usepackage{enumitem}

\title{NAGO: Noise-Aware Generative Operator via Flow Matching}

\author{
  {\bfseries Guanyu Chen\textsuperscript{1}\thanks{These authors contributed equally to this work.} \quad
  Shengze Xu\textsuperscript{2}\footnotemark[1] \quad
  Pengwei Liu\textsuperscript{1} \quad
  Xingyu Ren\textsuperscript{1}} \\
  {\bfseries Pengkai Wang\textsuperscript{3} \quad
  Antonios Armaou\textsuperscript{4} \quad
  Tieyong Zeng\textsuperscript{2} \quad
  Dong Ni\textsuperscript{1}\thanks{Corresponding author: \texttt{dni@zju.edu.cn}.}} \\
  \textsuperscript{1}Zhejiang University \quad
  \textsuperscript{2}The Chinese University of Hong Kong \\
  \textsuperscript{3}The Hong Kong Polytechnic University \quad
  \textsuperscript{4}The Pennsylvania State University
}

\begin{document}

\maketitle

\input{src/0_Abstract}
\input{src/1_Introduction}
\input{src/3_Preliminary}
\input{src/4_Methodology}
\input{src/5_Experiment}
\input{src/6_Conclusion}

\begin{ack}
This work was supported by the National Key Research and Development Project of China under Grant No.~2025YFE0204600 and the Key Program of the National Natural Science Foundation of China under Grant No.~62533019.
The authors declare no competing financial interests.
\end{ack}

\newpage
\bibliographystyle{plainnat}
\bibliography{reference}

\input{src/7_Appendix_1}

\input{src/7_Appendix_2}

\input{src/7_Appendix_3}

\input{src/7_Appendix_4}

\stopcontents[appendix]

\newpage
\input{checklist.tex}

\end{document}

%% file: src/0_Abstract.tex

\begin{abstract}
We introduce \themodel, a noise-aware generative operator for efficient sampling from invariant measures of stochastic partial differential equations (SPDEs) in function space. Our central idea is to treat the source measure as a design variable rather than a fixed Gaussian default. We establish admissibility conditions for well-defined infinite-dimensional flows and show that the finite-band source--target covariance ratio controls the spectral stiffness of the marginal velocity field. This analysis yields an admissibility-constrained spectral-matching rule: choose a source that is sufficiently rough for well-posedness, then minimize covariance mismatch over the resolved Fourier modes. On three representative stochastic-field benchmarks, this source design improves few-step spectral accuracy and several complementary distributional and physical statistics. In a controlled Mat\'ern-field ablation, it reduces the accurate RK4 budget from about 30 steps for Gaussian functional flow matching to 3--5 steps, a $6$--$10\times$ reduction in neural ODE evaluations. These gains are strongest for stationary multiscale fields and depend on the metric, resolution, and guidance protocol.
\end{abstract}

%% file: src/1_Introduction.tex
\section{Introduction}
Stochastic partial differential equations (SPDEs) constitute a fundamental mathematical framework for modeling the evolution of infinite-dimensional random systems under uncertainty and provide a rigorous description of diverse spatiotemporal phenomena, including statistical physics \citep{huang2009introduction} and composite materials \citep{hornung1992applications}. However, in many practical applications, such as geophysical modeling \citep{lindgren2011explicit}, turbulent flow \citep{kida1987kolmogorov, hairer2006ergodicity, carmona1999stochastic}, and thin film growth \citep{hairer2013solving}, the exact evolution of a single realization is often less important than the long-term statistical behavior of the system \citep{Eckmann_2000}. 

A central problem in this context is the characterization of invariant measures \citep{da1996ergodicity, Da_Prato_Zabczyk_2014}, which represent the most compressed yet complete description of long-term stochastic behavior, as discussed in Appendix~\ref{apdx:intro}. 
For many SPDEs of physical and statistical interest, these invariant measures take the form of Gaussian or non-Gaussian probability measures on suitable function spaces, with representative examples including the Ornstein--Uhlenbeck equations \citep{sarkka2019applied} and nonlinear models such as the stochastic Allen--Cahn equation \citep{xu2025interface} and the stochastic Navier--Stokes equation \citep{bensoussan1995stochastic}, which may admit unique ergodic invariant measures under appropriate conditions.
Despite their importance, sampling invariant measures of SPDEs remains challenging due to the infinite-dimensional nature of the dynamics, the absence of closed-form solutions in nonlinear regimes, and the intricate interaction between noise, dissipation, and spatial operators. Classical numerical approaches, including long-time integration of discretized SPDEs \citep{Lord_Powell_Shardlow_2014, Da_Prato_Zabczyk_2014} and Markov chain Monte Carlo (MCMC) sampling in function space \citep{chen2025new}, may approximate invariant measures but are often computationally prohibitive, particularly in high-dimensional or stiff systems exhibiting pronounced multiscale structure.

These challenges have motivated growing interest in data-driven and generative modeling approaches that aim to learn invariant measures directly, bypassing explicit long-time simulation. Recent advances in generative modeling \citep{ho2020denoising, songdenoising, songscore, lipmanflow} and operator-learning frameworks \citep{kovachki2023neural, li2021fourierneuraloperatorparametric} have shown considerable promise: generative models provide expressive distributional approximations, while neural operators enable mappings between functional distributions. 
Recent progress has been made toward generative modeling directly in Hilbert or Banach spaces.
\citet{rahman2022generative, seidman2023variational} proposed GANO and VANO, while
\citet{pidstrigach2024infinite, lim2023score, kerrigan2022diffusion, kerrigan2023functional, franzese2023continuous}
develop the theoretical foundations for diffusion models and flow matching in infinite-dimensional settings. More details about related work are reviewed in Appendix~\ref{sec_related_work}.

Nevertheless, an important gap remains between the theory of functional generative modeling and its practical use for multiscale SPDE data.
Most implementations still adopt standard Gaussian noise as the source distribution by default.
For invariant measures with nontrivial spatial regularity, this choice can be poorly conditioned: the discretized Gaussian noise injects excessive high-frequency energy, forcing the learned velocity field to suppress unresolved modes and thereby making training and inference resolution-dependent.
Existing remedies, such as interpolation schedules \citep{chen2025scale}, Schr{\"o}dinger bridge formulations \citep{shi2023diffusion}, and adaptive numerical solvers \citep{chen2024adaptive}, can improve the transport dynamics or numerical integration, but they do not directly modify the source measure that induces the stiffness.
We address this issue by treating the source measure as a design variable in functional flow matching.
This perspective leads to an admissibility-constrained spectral-matching principle: the source must first be rough enough for function-space well-posedness, and should then be chosen to reduce covariance mismatch over the frequencies resolved by the model.

Based on this principle, we propose \themodel, a noise-designed generative operator for sampling from multiscale SPDE invariant measures. By selecting an admissible Mat\'ern source with a finite-band spectrum aligned to the target, \themodel reduces high-frequency mismatch and yields better-conditioned marginal velocity fields. We support this approach with an infinite-dimensional analysis, a quantitative link between source--target covariance ratios and spectral stiffness, and an operational source-selection rule. Our main contributions are summarized as follows:
\vspace{-0.6em}
\begin{itemize}[leftmargin=1.2em]
    \item We identify a function-space mechanism behind source design: source regularity determines admissibility, while the resolved source--target covariance ratio governs spectral stiffness.

    \item We develop an admissibility-constrained spectral-matching rule for selecting Mat\'ern sources from target samples, including both smoothness and correlation-length selection.

    \item Across three stochastic-field benchmarks, we demonstrate improved few-step generation and complementary distributional statistics. A controlled ablation reduces the accurate RK4 budget by $6$--$10\times$ relative to a Gaussian-source model; a separate post-training benchmark shows an $89.1\times$ raw-throughput ratio to the SNS solver.
\end{itemize}

\begin{figure}[t]
    \centering
    \includegraphics[width=\textwidth]{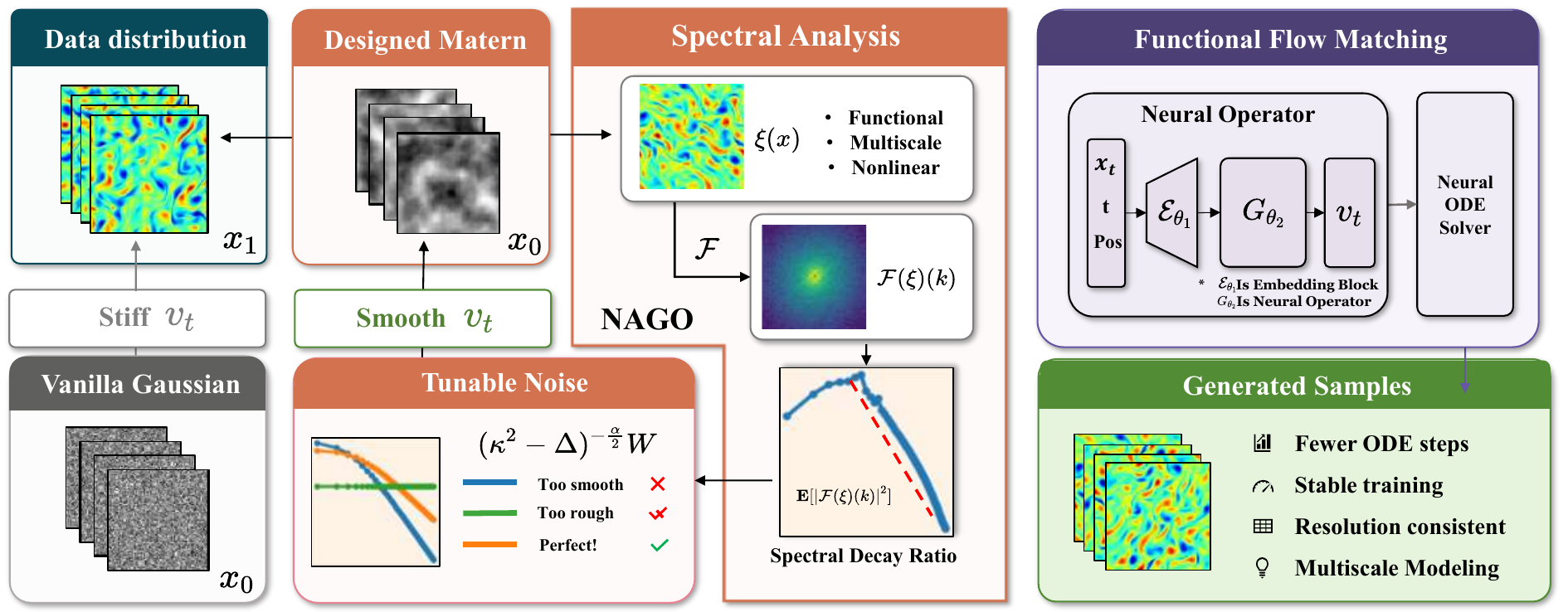}
    \caption{The structure of the \themodel model.}
    \label{fig:NAGO}
    \vspace{-1.5em}
\end{figure}

\paragraph{Code availability.} The code release is available at \url{https://github.com/cgymmy/NAGO}.

%% file: src/3_Preliminary.tex
\section{Preliminary}

In this section, the fundamental concepts of random fields and SPDEs are first introduced, followed by a discussion of invariant measures of SPDEs, and concluded with a brief overview of the basic theory underlying functional flow-matching generative models.
\subsection{Random Field and Stochastic PDEs}\label{3_1}
To formalize invariant measures and noise distributions in function space, we briefly recall the notion of random fields. Usually, a random field is regarded as a random variable taking values in a function space; we formalize this as follows:
\begin{definition}[Random Field]
For a set $D \subset \mathbb{R}^d$, a (real-valued) random field $\{u(x): x \in D\}$ is a collection of real-valued random variables defined on a probability space $(\Omega, \mathcal{F}, P)$, where $\Omega$ denotes the sample functional space, $\mathcal{F}$ is a $\sigma$-algebra of subsets of $\Omega$, and $P$ is a probability measure on $(\Omega, \mathcal{F})$.
\end{definition}

The class of random fields is broad and contains many important special cases with distinct structural properties as discussed in Appendix~\ref{apdx:srf}. Among them, the Matérn field \citep{lindgren2011explicit} plays a particularly prominent role, as it defines a Gaussian random field with tunable smoothness and correlation length. These properties make it both mathematically tractable and practically useful, especially for studying the effect of regularity in function-space probabilistic modeling. We therefore introduce the Matérn field as a representative example.
\begin{definition}[Mat\'ern Field]
A Gaussian random field $u(x)$ with Mat\'ern covariance is called a Mat\'ern field, which is a solution to the linear fractional stochastic partial differential equation:
\begin{equation}\label{matern_field}
    \mathcal{L}^{\alpha/2}u(x) = (\kappa^2 - \Delta)^{\alpha/2} u(x) = W(x),
\end{equation}
where $x\in D\subset \mathbb{R}^d$, $\nu = \alpha - d/2>0$ describes the smoothness, $\rho = \frac{\sqrt{2\nu}}{\kappa}>0$ for the correlation length, $\Delta$ is the Laplacian operator, and $W(x)$ is a spatial Gaussian white noise. 
\end{definition}


The Mat\'ern field provides a canonical example of Gaussian measures.
On $\mathbb{R}^d$, or under periodic boundary conditions, the spectral
solution of $u$ defined by Eq.~\eqref{matern_field} is given by
\begin{equation}\label{eq_matern}
    u(x)
    =
    \mathcal{F}^{-1}
    \left[
        G(k)\widehat W(k)
    \right](x),
    \quad
    G(k)
    =
    (\kappa^2+\|k\|^2)^{-\alpha/2},
\end{equation}
where $\mathcal{F}$ is the Fourier transform defined in
Appendix~\ref{FourierTransform}, and $G(k)$ is the Fourier multiplier
of the solution operator. The corresponding spectral density of $u$ is
    $S_u(k)\propto |G(k)|^2=(\kappa^2+\|k\|^2)^{-\alpha}$.
The covariance function of $u$ is then given by:
\begin{equation}\label{eq:cov_matern}
    c(x) = \frac{\sigma^2}{2^{\nu -1}\Gamma(\nu)}(\kappa \|x\|)^\nu K_\nu (\kappa \|x\|),
\end{equation}
where $\nu = \alpha - d/2$, $\rho = \frac{\sqrt{2\nu}}{\kappa}, \sigma^2 = \frac{\Gamma(\nu)}{(4\pi)^{d/2}\kappa^{2\nu}\Gamma(\alpha)}$ and $K_\nu$ is the modified Bessel function of the second kind. 
Detailed derivations are provided in Appendix~\ref{apdx_sec_matern}. In addition, Gaussian noise, white noise, and general Mat\'ern fields are closely related, as summarized in Table~\ref{tab_noise}.
\begin{wraptable}{r}{0.58\textwidth}
\centering
\vspace{-5pt}   
\caption{Comparison of different random fields.}
\label{tab_noise}
\begin{tabular}{lccc}
\toprule
 & Mat\'ern & Gaussian & White \\
\midrule
Smoothness $\alpha$ 
& $> d/2$ 
& $0$ 
& $0$ \\
Range parameter $\kappa$ 
& $>0$ 
& -- 
& -- \\
Normalized 
& -- 
& Yes 
& No \\
\bottomrule
\end{tabular}
\vspace{-10pt}   
\end{wraptable}
Here, ``Normalized'' indicates whether the corresponding noise has been rescaled to have unit marginal variance in the discretized setting. In the remainder of this paper, we use the term \textbf{Mat\'ern noise} to denote noise generated from a Mat\'ern field.

Having introduced random fields and the Matérn field, we now turn to the SPDEs of interest in this work. In general, SPDEs can be divided into two categories \citep{Lord_Powell_Shardlow_2014}: stationary SPDEs, which describe mappings between probability measures as introduced in Appendix~{\ref{apdx_spde}}, and semilinear evolution equations, which describe the temporal evolution of systems perturbed by noise. 
Analogous to ordinary time-dependent PDEs, semilinear evolution SPDEs are treated as semilinear stochastic ordinary differential equations (SODEs) on a Hilbert space $\mathcal{H}_U$, taking the form:
\begin{equation}\label{SPDE}
\left\{
\begin{aligned}
&du = [\mathcal{L}u+F(u)]dt + G(u)dW(t),\\
&u(0, x) = u_0, \quad x \in D.
\end{aligned}
\right.
\end{equation}
Here, $D$ is a bounded spatial domain in $\mathbb{R}^d$, $\mathcal{L}$ denotes a linear operator that generates the semigroup $P_t=e^{t\mathcal{L}}$; $W(t)$ is an $\mathcal{H}_\xi$-valued $Q$-Wiener process, where $\mathcal{H}_\xi$ is a separable Hilbert space; $F: \mathcal{H}_U \rightarrow \mathcal{H}_U$ denotes a nonlinear operator; and $G: \mathcal{H}_U \rightarrow \mathcal{L}\left(\mathcal{H}_{0,\xi}, \mathcal{H}_U\right)$ is a Hilbert--Schmidt operator associated with the Cameron--Martin space $\mathcal{H}_{0,\xi}$ of $\mathcal{H}_\xi$.
Under appropriate dissipativity and ergodicity conditions (see Thm.~\ref{condition_invariant}), the SPDE admits a unique invariant measure $\mu$, which is defined as follows:
\begin{definition}[Invariant Measure]\label{def_invariant_measure}
A Borel probability measure $\mu$ on $(H,\mathcal{B}(H))$ is called an \emph{invariant measure} for Equ.\eqref{SPDE} if it is invariant under the semigroup $(P_t)_{t\ge 0}$, i.e. $\forall\, t\ge 0,\ \forall\, \varphi\in B_b(H),$
\[
\int_H (P_t\varphi)(x)\,\mu(dx) \;=\; \int_H \varphi(x)\,\mu(dx),
\]
Equivalently, if $u(0)\sim \mu$, then $u(t)\sim \mu$ for every $t\ge 0$.
\end{definition}
Moreover, we say the SPDE is \emph{ergodic} (or \emph{mixing}) with invariant measure $\mu$ if for every initial condition $u_0\in H, Law(u^x(t)) \rightarrow \mu$ as $t\to\infty$, where $\rightarrow$ denotes weak convergence of probability measures on $H$. Further details on the existence and properties of invariant measures for SPDEs are provided in Appendix~\ref{apdx_spde}.

\subsection{Infinite-dimensional Flow Matching}
Compared with finite-dimensional generative models, the marginal velocity field in infinite-dimensional generative modeling is well-defined only under rather stringent theoretical conditions \citep{pidstrigach2024infinite, kerrigan2023functional}.
Here, we briefly review the theoretical framework for infinite-dimensional generative models and the conditions required for well-posedness.

Assume that the initial noise $g \in \mathcal{U}$ follows a Gaussian measure $\mu_0$ with covariance operator $Q:\mathcal{U}\to\mathcal{U}$, where $Q$ is self-adjoint, nonnegative, and trace-class, and $(\mathcal{U},\langle\cdot,\cdot\rangle)$ is a separable Hilbert space.
Data samples are denoted by $f$, distributed according to the target measure $\mu_1$.
Our objective is to construct an infinite-dimensional flow $\psi_t:[0,1]\times\mathcal{U}\to\mathcal{U}$ that pushes $\mu_0$ to $\mu_1$.
Then the marginal probability measure at time $t$ is defined as $\mu_t := [\psi_t]_\# \mu_0$, which admits the representation $\mu_t(A)=\int_{\mathcal{U}}\mu_t(A|f)\,d\mu_1(f),
    \quad \forall A\in \mathcal{B}(\mathcal{U}),$ 
where $\mu_t(\cdot|f)$ denotes the conditional measure and $\mathcal{B}(\mathcal{U})$ is the Borel $\sigma$-algebra on $\mathcal{U}$. Then we define the map $\psi_t$ which is associated with a time-dependent velocity field $v_t:[0,1]\times\mathcal{U}\to\mathcal{U}$ through
\begin{equation}
    \frac{d}{dt}\psi_t(g)=v_t(\psi_t(g)),\qquad \psi_0(g)=g\sim\mu_0.
\end{equation}
When the target measure $\mu_1$ is fixed, the central object of interest is the \emph{marginal velocity field} $v_t$, whose regularity and well-posedness directly determine the stability and learnability of the generative flow.
The corresponding infinite-dimensional flow matching objective is $    \mathcal{L}_{\mathrm{FFM}}
    = \mathbb{E}_{t,\,h\sim\mu_t}\!\left[\|v_t(h)-v_\theta(h)\|^2\right].$ 
As in finite-dimensional settings, the marginal velocity field is generally intractable.
We therefore consider conditional velocity fields $v_t(\cdot\mid f)$ associated with conditional measures $\mu_t(\cdot\mid f)$. We recall two standard results from functional flow matching that express the marginal field through conditional dynamics and characterize the required absolute continuity; our new source-design analysis begins in Sec.~\ref{observation}.
\begin{theorem}[Recalled marginalization result]\label{thm:3_1}
Given
$\int_0^1\!\int\!\!\int \|v_t(h|f)\|^2\,d\mu_t(h|f)\,d\mu_1(f)\,dt$ is finite,
and that for a.e. $f\sim\mu_1$ and $t\in[0,1]$, the conditional measure $\mu_t(\cdot|f)$ is absolutely continuous with respect to $\mu_t$ ($\mu_t(\cdot|f)\ll\mu_t$).
Then the marginal velocity field admits the representation
\begin{equation}
    v_t(h)
    = \int_{\mathcal{U}} v_t(h|f)\,
    \frac{d\mu_t(h|f)}{d\mu_t(h)}\,d\mu_1(f).
\end{equation}
\end{theorem}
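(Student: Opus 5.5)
The plan is to characterize the marginal velocity field weakly, through the continuity equation satisfied by $\mu_t$, and then verify that the proposed formula satisfies it. Following \citet{kerrigan2023functional}, I take the operational meaning of ``$v_t$ generates $\mu_t$'' to be the weak continuity equation: for every cylindrical test function $\varphi(h)=\phi(\langle h,e_1\rangle,\dots,\langle h,e_n\rangle)$ with $\phi\in C_c^\infty(\mathbb{R}^n)$,
\begin{equation*}
\frac{d}{dt}\int_{\mathcal{U}}\varphi\,d\mu_t=\int_{\mathcal{U}}\langle \nabla\varphi(h),v_t(h)\rangle\,d\mu_t(h).
\end{equation*}
Cylindrical functions separate Borel probability measures on a separable Hilbert space, so this is the natural weak formulation. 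By hypothesis, each conditional field $v_t(\cdot|f)$ generates $\mu_t(\cdot|f)$ in this sense. I therefore define $\tilde v_t$ by the right-hand side of the claimed representation and show that it also satisfies the continuity equation for $\mu_t$.

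\textbf{Step 1 (well-definedness).} Absolute continuity $\mu_t(\cdot|f)\ll\mu_t$ gives Radon--Nikodym derivatives $\rho_t(h,f)=d\mu_t(\cdot|f)/d\mu_t(h)$. I need $\rho_t$ to be jointly measurable in $(h,f)$, and I would obtain this via the disintegration theorem on the Polish space $\mathcal{U}\times\mathcal{U}$. Specifically, I would consider the joint law $\pi_t(dh,df)=\mu_t(dh|f)\,\mu_1(df)$. Its $h$-marginal is $\mu_t$, and $\rho_t$ is a jointly measurable density of $\pi_t$ with respect to $\mu_t\otimes\mu_1$. Then $\int\rho_t(h,f)\,d\mu_1(f)=1$ for $\mu_t$-a.e.\ $h$. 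By Jensen's inequality with respect to the probability measure $\rho_t(h,f)\,d\mu_1(f)$,
\begin{equation*}
\int\|\tilde v_t(h)\|^2\,d\mu_t(h)\le\iint\|v_t(h|f)\|^2\,d\mu_t(h|f)\,d\mu_1(f).
\end{equation*}
Integrating in $t$ and using the finiteness hypothesis shows that $\tilde v_t$ is well defined as a Bochner integral for a.e.\ $t$, and that $\tilde v\in L^2$.

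\textbf{Step 2 (continuity equation).} I write $\int\varphi\,d\mu_t=\int\!\int\varphi\,d\mu_t(\cdot|f)\,d\mu_1(f)$ and pass $d/dt$ inside the $f$-integral. This is the main obstacle. I would handle it by working with the integrated-in-time form of the weak equation over $[t_0,t_1]$ instead of the derivative. Fubini then applies because $|\langle\nabla\varphi,v_t(h|f)\rangle|\le\|\nabla\varphi\|_\infty\|v_t(h|f)\|$, which is integrable by Cauchy--Schwarz and the $L^2$ hypothesis. After that step the right-hand side is $\iint\langle\nabla\varphi(h),v_t(h|f)\rangle\,\rho_t(h,f)\,d\mu_t(h)\,d\mu_1(f)$. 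Exchanging the order of integration once more, again justified by Fubini, turns it into $\int\langle\nabla\varphi,\tilde v_t\rangle\,d\mu_t$. So $\tilde v_t$ generates $\mu_t$.

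\textbf{Step 3 (identification).} Finally, I identify $\tilde v_t$ with the marginal velocity field $v_t$. If $v_t$ is defined as the field generating $\mu_t$, Step 2 gives this directly. For uniqueness I would note the following. The representation is exactly the conditional expectation $\mathbb{E}[v_t(h|f)\mid h]$ under $\pi_t$. It is therefore the $L^2(\mu_t)$-minimizer of the flow matching objective $\mathcal{L}_{\mathrm{FFM}}$, and its conditional counterpart differs from $\mathcal{L}_{\mathrm{FFM}}$ only by a constant. This is the sense in which the formula is used later in the paper.
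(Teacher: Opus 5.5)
Your proof is correct, but it takes a different and more complete route than the paper. The paper effectively takes the marginal velocity to be a conditional average. It defines $v_t$ by the right-hand side, as a Bochner integral against $\rho_t=d\mu_t(\cdot|f)/d\mu_t$. It then uses Fubini against bounded measurable test functions $\varphi$ to show
\[
\int\varphi\,v_t\,d\mu_t=\iint\varphi(h)\,v_t(h|f)\,d\mu_t(h|f)\,d\mu_1(f),
\]
which says the formula equals $\mathbb{E}[v_t(h|f)\mid h]$ under the joint law. It never checks that this field generates $\mu_t$.

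That conditional-expectation identity is exactly what you invoke in Step 3, so the paper's whole argument sits inside your last step. Your Steps 1--2 supply what the paper leaves implicit:
\begin{itemize}
\item joint measurability of $\rho_t$, obtained as the density of $\pi_t$ with respect to $\mu_t\otimes\mu_1$ (the paper only asserts ``a measurable density'');
\item existence of the Bochner integral and the $L^2(\mu_t)$ bound via Jensen;
\item the time-integrated weak continuity equation, which is the real content of the functional flow-matching marginalization theorem and is what justifies integrating the learned field to sample.
\end{itemize}
For that last point you need $\rho$ chosen jointly measurable in $(t,h,f)$. The same product Radon--Nikodym argument on $[0,1]\times\mathcal{U}\times\mathcal{U}$ provides this, provided $t\mapsto\mu_t(\cdot|f)$ is a measurable kernel.

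The price of your route is an extra hypothesis: each $v_t(\cdot|f)$ must generate $\mu_t(\cdot|f)$. This is not written in the statement, but it holds for the affine conditional paths used in the paper. The paper's purely conditional-expectation argument does not need it.

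You are also right that ``generating $\mu_t$'' alone does not single out a field. It is the conditional-expectation characterization in Step 3 that pins down ``the'' marginal velocity field. This matches how the paper later uses $v_t(x)=\mathbb{E}[\dot f_t\mid f_t=x]$ in Theorem~\ref{thm:3_4}.
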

The absolute continuity condition $\mu_t(\cdot|f)\ll\mu_t$ is therefore a key requirement for the existence of a well-defined marginal velocity field.
We adopt affine conditional flows of the form
\begin{equation}\label{interpolation}
    f_t=\psi_t(g|f)=\alpha_t g+\beta_t f,
\end{equation}
which push $g\sim\mu_0=\mathcal{N}(0,Q)$ toward a given $f$.
The corresponding conditional measure is $\mu_t(\cdot|f)=\mathcal{N}(\beta_t f,\alpha_t^2 Q)$.
\begin{theorem}[Recalled absolute-continuity criterion]\label{thm:3_2}
    Consider a given measure $\mu$ on a Hilbert space $\mathcal{U}$ and the corresponding conditional probability measure $\mu_t(\cdot|f), f\sim \mu$. If $a.e. f,g\sim \mu$, $\mu_t(\cdot|f)$ is equivalant to $\mu_t(\cdot|g)$, then the marginal probability measure $\mu_t$ satisfies $\mu_t(\cdot|f)\ll\mu_t$ for $a.e. f\sim \mu$. Proof is shown in Appendix~{\ref{pf:3.2}}.
\end{theorem}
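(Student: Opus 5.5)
We need to show: if $\mu_t(\cdot\mid f)\sim\mu_t(\cdot\mid g)$ for $\mu\otimes\mu$-a.e.\ $(f,g)$, then $\mu_t(\cdot\mid f)\ll\mu_t$ for $\mu$-a.e.\ $f$, where $\mu_t(A)=\int\mu_t(A\mid g)\,d\mu(g)$. The plan is a Fubini argument on null sets. Fix $t$. We assume, as is implicit in the disintegration formula for $\mu_t$, that $g\mapsto\mu_t(A\mid g)$ is measurable for each Borel $A$, i.e.\ that $\mu_t(\cdot\mid\cdot)$ is a probability kernel. Let $N\subset\mathcal{U}\times\mathcal{U}$ be the $\mu\otimes\mu$-null set of pairs on which equivalence fails. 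By Fubini/Tonelli, for $\mu$-a.e.\ $f$ the section $N_f=\{g:(f,g)\in N\}$ is $\mu$-null. Fix such an $f$; then $\mu_t(\cdot\mid g)\sim\mu_t(\cdot\mid f)$ for $\mu$-a.e.\ $g$.

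Next we take a Borel set $A$ with $\mu_t(A)=0$ and show $\mu_t(A\mid f)=0$. Since
\[
0=\mu_t(A)=\int_{\mathcal{U}}\mu_t(A\mid g)\,d\mu(g)
\]
and the integrand is nonnegative, $\mu_t(A\mid g)=0$ for $\mu$-a.e.\ $g$. Intersecting this full-measure set with the complement of $N_f$ gives a set of positive (full) $\mu$-measure, so it is nonempty. Pick any $g$ in it. Then $\mu_t(A\mid g)=0$, and because $\mu_t(\cdot\mid g)\sim\mu_t(\cdot\mid f)$, also $\mu_t(A\mid f)=0$. Hence $\mu_t(\cdot\mid f)\ll\mu_t$ for every $f$ outside a $\mu$-null set.

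One subtlety is that the exceptional set in $g$ depends on $A$. The argument handles this because $A$ is fixed first and then a single witness $g$ is chosen; no uniformity over $A$ is required. If the hypothesis is only given pairwise a.e.\ without joint measurability of $N$, we first need to check measurability. For the affine Gaussian paths of Eq.~\eqref{interpolation}, equivalence of $\mathcal{N}(\beta_t f,\alpha_t^2Q)$ and $\mathcal{N}(\beta_t g,\alpha_t^2Q)$ holds iff $\beta_t(f-g)\in Q^{1/2}\mathcal{U}$ by Cameron--Martin. That set is Borel, being a countable union of images of balls under the compact operator $Q^{1/2}$, so joint measurability holds in the intended application.

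I expect this measurability bookkeeping to be the main obstacle, rather than the core estimate. The rest is elementary, and a "for a.e.\ $t$" version follows by repeating the argument at each fixed $t$.
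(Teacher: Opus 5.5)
Your proof is correct and follows the paper's own argument. Both fix a Borel $A$ with $\mu_t(A)=0$, use nonnegativity of the mixture integrand to get $\mu_t(A\mid g)=0$ for $\mu$-a.e.\ $g$, pick one witness $g$, and transfer nullity to $\mu_t(\cdot\mid f)$ through equivalence. The only difference is bookkeeping: the paper takes the hypothesis directly as a full-measure set $G$ on which all pairs are equivalent (your product-a.e.\ reading gives this by transitivity), whereas you obtain the good $f$'s from Fubini sections of $N$, where no measurability issue actually arises if $N$ is taken to be a measurable null set containing the failure set, as ``a.e.''\ already provides.
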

By the Feldman--Hájek theorem, this equivalence holds if and only if $f-g\in \alpha_t \mathcal{Q}^{1/2}(\mathcal{U})$, implying that the data distribution must be supported on the Cameron--Martin space associated with the noise covariance. In other words, the noise distribution must be rougher than the data distribution, an observation that underlies the conclusions of most existing theoretical works.

%% file: src/4_Methodology.tex
\section{Methodology} 
The framework of the proposed Noise-Aware Generative Operator (\themodel) is presented in this section.
We first introduce the problem setting and establish the notation used throughout the paper.
We then extend the theoretical framework for infinite-dimensional generative models, focusing on how different noise measures affect the marginal velocity field in flow matching.
Finally, we describe the complete \themodel framework together with the associated algorithmic strategy.

\subsection{Problem Setting}
Our goal is to generate independent functional samples from the invariant distribution $\mu$ defined in Def.~(\ref{def_invariant_measure}), without explicitly simulating the underlying SPDE over long time horizons.
In addition to unconditional generation, a learned generative model should support reconstruction under partial observations and enable principled uncertainty quantification through guided sampling.

\subsection{Noise and Marginal Velocity Fields}
\begin{figure}[t]
\centering
\includegraphics[width=\linewidth]{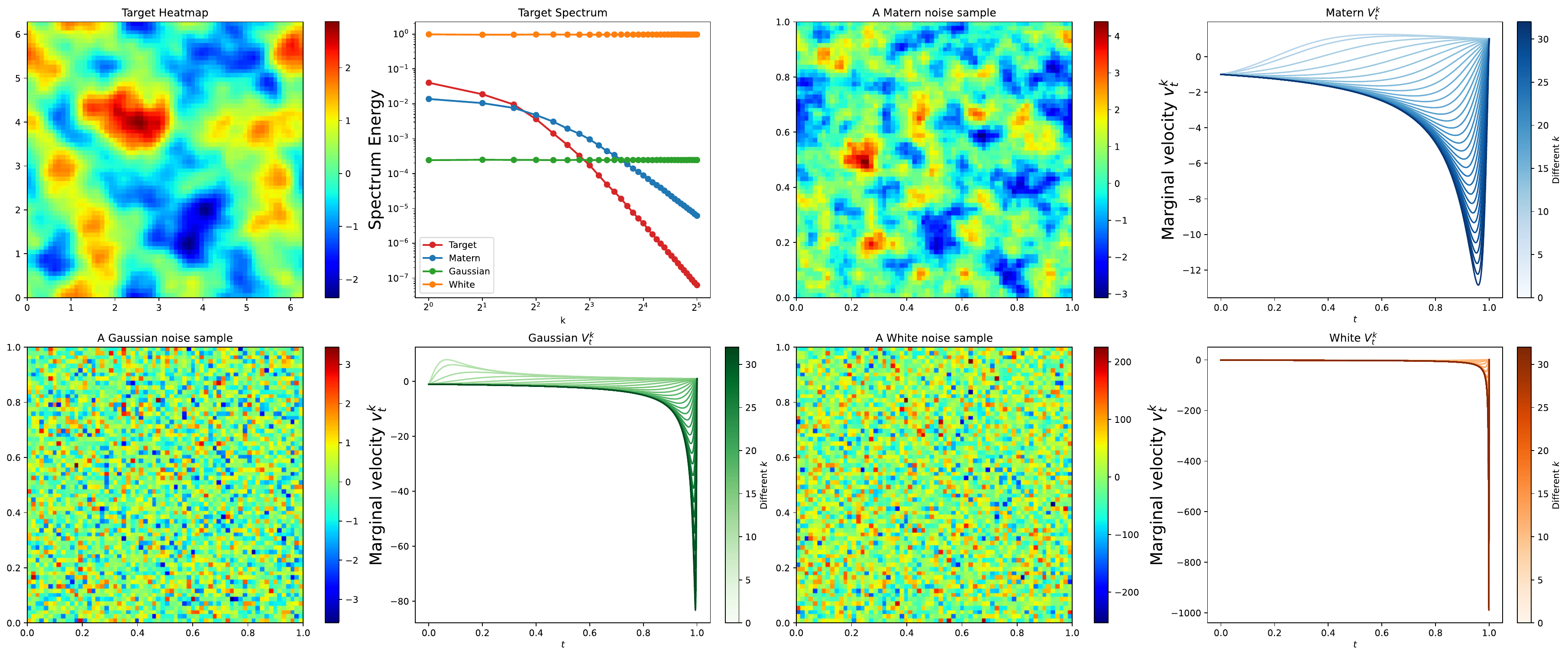}
  \caption{
Samples under various noise distributions and the target distribution, their spectra, and the marginal velocity fields at different frequencies for each noise type.
  }
  \label{fig:different_noises}
  \vspace{-1.5em}
\end{figure}
Here, we focus on how different noise choices affect infinite-dimensional generative models, and analyze their impact primarily through a spectral perspective.
In practice, the i.i.d. Gaussian array used after discretization is extremely rough. It is a finite-dimensional baseline rather than a trace-class Gaussian measure on the continuum Hilbert space assumed in Sec.~\ref{3_1}; this distinction is important when interpreting cross-resolution behavior.
We therefore generalize the noise family to Mat\'ern measures mentioned above, which are analytically tractable and practically flexible.
\begin{theorem}\label{thm_cm_space} 
Let $u$ be Mat\'ern noise with smoothness parameter $\nu$ in dimension $d$.
Then $u \in H^{s}$ almost surely for all $s<\nu$, while the associated Cameron--Martin space is $\mathcal{H}_{\mathrm{CM}} = H^{\nu + d/2}$.
The proof is provided in Appendix~\ref{pf:3.3}.
\end{theorem}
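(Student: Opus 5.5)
We work on the periodic torus $D=\mathbb{T}^d$, the setting in which Eq.~\eqref{eq_matern} holds, and diagonalize everything in the Fourier basis $\{e_k\}_{k\in\mathbb{Z}^d}$. Write the white noise as $W=\sum_k \xi_k e_k$ with $\xi_k$ i.i.d.\ standard (complex, with Hermitian symmetry) Gaussians. Then the Mat\'ern noise is the random series $u=\sum_k \lambda_k^{-\alpha/2}\xi_k e_k$, where $\lambda_k=\kappa^2+|k|^2$. Equivalently, $u\sim\mathcal{N}(0,Q)$ with $Q=(\kappa^2-\Delta)^{-\alpha}$, whose eigenvalues are $q_k=\lambda_k^{-\alpha}$. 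Both claims then become statements about these eigenvalues, using the norm equivalence $\|h\|_{H^s}^2\asymp\sum_k \lambda_k^{s}|\hat h_k|^2$.

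\textbf{Sample regularity.} For $s<\nu$ we compute
\[
\mathbb{E}\|u\|_{H^s}^2\asymp\sum_k \lambda_k^{s-\alpha}.
\]
Since $\#\{k:|k|\approx r\}\sim r^{d-1}$, this sum converges if and only if $2(\alpha-s)>d$, that is, $s<\alpha-d/2=\nu$. A finite second moment gives $u\in H^s$ almost surely. For the converse, showing that membership fails for $s\ge\nu$, we would use the Kolmogorov three-series theorem, or the zero-one law for Gaussian series: $\sum_k \lambda_k^{s-\alpha}|\xi_k|^2$ diverges almost surely when $\sum_k \lambda_k^{s-\alpha}=\infty$, because the terms are independent and nonnegative with bounded variance ratio. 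This converse is not strictly required by the statement, but it explains why $\nu$ is the sharp threshold.

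\textbf{Cameron--Martin space.} For a centered Gaussian measure with covariance $Q$ on $L^2$, the Cameron--Martin space is $Q^{1/2}(L^2)$ with norm $\|Q^{-1/2}h\|_{L^2}$. Here $Q^{-1/2}=(\kappa^2-\Delta)^{\alpha/2}$, so
\[
\|h\|_{\mathrm{CM}}^2=\sum_k \lambda_k^{\alpha}|\hat h_k|^2\asymp\|h\|_{H^{\alpha}}^2 .
\]
Hence $\mathcal{H}_{\mathrm{CM}}=H^{\alpha}=H^{\nu+d/2}$, with equivalent norms. To justify that $u$ really is a Gaussian measure on $L^2$, we note that $Q$ is trace class exactly when $\alpha>d/2$, i.e.\ $\nu>0$. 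This is also the $s=0$ case of the previous computation, and it matches the condition in the definition of the Mat\'ern field.

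\textbf{Main obstacle.} The main difficulty is not computational but one of setting. On a bounded domain with non-periodic boundary conditions, or on $\mathbb{R}^d$, the Fourier diagonalization is replaced by Dirichlet/Neumann eigenfunctions or by the Fourier transform, and $H^s$ must be interpreted accordingly (a local Sobolev space on $\mathbb{R}^d$, since stationary samples are not in $L^2(\mathbb{R}^d)$). We would state the result on the torus, where the argument is clean. We would then remark that on general domains the same argument goes through once Weyl's law $\lambda_k\sim k^{2/d}$ is used for the eigenvalue counting, which reproduces the threshold $\alpha-d/2$.
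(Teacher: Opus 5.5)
Your proposal is correct and follows the paper's proof essentially step for step. Both arguments work on $\mathbb{T}^d$ and diagonalize the covariance in the Fourier basis. Both show $\mathbb{E}\|u\|_{H^s}^2\asymp\sum_k(\kappa^2+|k|^2)^{s-\nu-d/2}<\infty$ exactly when $s<\nu$, pass from finite second moment to almost-sure membership, and identify $\mathcal{H}_{\mathrm{CM}}=Q^{1/2}(L^2)$ with $H^{\nu+d/2}$ through the norm $\sum_k(\kappa^2+|k|^2)^{\nu+d/2}|\hat h_k|^2$. Your additions are correct but not needed for the statement: the trace-class check $\nu>0$, and the sharpness of the threshold via almost-sure divergence of $\sum_k\lambda_k^{s-\alpha}|\xi_k|^2$, which the paper only asserts later in the proof of its admissibility criterion.
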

This theorem identifies the source regularity for which typical target
samples lie in the Mat\'ern source's Cameron--Martin space.

However, we find that satisfying Thm.~\ref{thm_cm_space} remains a rather loose criterion for Mat\'ern noise.
We therefore study the consequences of such regularity mismatch and quantify how noise regularity shapes the geometry and spectral stiffness of the marginal velocity field.
To make the analysis explicit, we first derive a closed-form expression for the marginal velocity field in the Gaussian setting.
\begin{theorem}\label{thm:3_4}
Given $\alpha_t, \beta_t\in C^1([0,1])$ with $\alpha_t^2+\beta_t^2>0$ for every $t$, consider the interpolant process defined in Equ.~\eqref{interpolation}, where $g\sim \mathcal{N}(0,Q_0)$ and $f\sim \mathcal{N}(0,Q_1)$ are independent samples from Gaussian measures supported on a Hilbert space $\mathcal{H}$. Assume that $Q_0$ and $Q_1$ have a common eigenbasis with strictly positive eigenvalues.
Then the marginal velocity field admits the closed-form expression
\begin{equation}\label{eq:v(t)}
\begin{aligned}
    v_t(x) &= \mathbb{E}[\dot{f}_t \mid f_t = x] = V(t)x, \\
    V(t) &= \left( \dot{\alpha}_t \alpha_t I + \dot{\beta}_t \beta_t Q_1Q_0^{-1} \right)
           \left( \alpha_t^2I + \beta_t^2 Q_1Q_0^{-1} \right)^{-1}.
\end{aligned}
\end{equation}
The proof is provided in Appendix~\ref{pf:3.4}.
\end{theorem}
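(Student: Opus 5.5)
The plan is to reduce the statement to a countable family of independent scalar Gaussian regression problems by diagonalizing in the common eigenbasis. Let $\{e_k\}$ be the shared orthonormal eigenbasis with $Q_0 e_k=\lambda_k e_k$ and $Q_1 e_k=\gamma_k e_k$, where $\lambda_k,\gamma_k>0$. Write $g_k=\langle g,e_k\rangle$ and $f_k=\langle f,e_k\rangle$. By the Karhunen--Lo\`eve expansion of the two independent Gaussian measures, all the coordinates $g_k,f_k$ are mutually independent, with $g_k\sim\mathcal N(0,\lambda_k)$ and $f_k\sim\mathcal N(0,\gamma_k)$. The interpolant $f_t=\alpha_t g+\beta_t f$ and its time derivative $\dot f_t=\dot\alpha_t g+\dot\beta_t f$ then decompose coordinatewise. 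The pair $(f_{t,k},\dot f_{t,k})$ is jointly Gaussian, centered, and independent across $k$.

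The key step is the scalar computation. We have
\[
\mathrm{Var}(f_{t,k})=\alpha_t^2\lambda_k+\beta_t^2\gamma_k,
\qquad
\mathrm{Cov}(\dot f_{t,k},f_{t,k})=\dot\alpha_t\alpha_t\lambda_k+\dot\beta_t\beta_t\gamma_k .
\]
The variance is strictly positive because $\alpha_t^2+\beta_t^2>0$ and the eigenvalues are positive. By independence across modes, $\mathbb E[\dot f_{t,k}\mid f_t]=\mathbb E[\dot f_{t,k}\mid f_{t,k}]$. The jointly Gaussian conditional expectation formula then gives
\[
\mathbb E[\dot f_{t,k}\mid f_{t,k}]
=\frac{\dot\alpha_t\alpha_t\lambda_k+\dot\beta_t\beta_t\gamma_k}{\alpha_t^2\lambda_k+\beta_t^2\gamma_k}\,f_{t,k}.
\]
Dividing numerator and denominator by $\lambda_k$ produces exactly the $k$-th eigenvalue of $V(t)$, since $Q_1Q_0^{-1}$ acts diagonally with eigenvalues $\gamma_k/\lambda_k$. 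Reassembling the modes gives $v_t(x)=V(t)x$. That $\mathbb E[\dot f_t\mid f_t]$ is the marginal velocity field follows from Theorem~\ref{thm:3_1} with conditional velocity $v_t(h|f)=\dot\alpha_t g+\dot\beta_t f$. Alternatively, one can simply take the conditional-expectation characterization $v_t(x)=\mathbb E[\dot f_t\mid f_t=x]$ as the definition used in the statement.

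The main obstacle is justifying the infinite-dimensional conditioning. Conditioning on the whole of $f_t$ must agree with the mode-by-mode conditioning, and the resulting series must define an element of $\mathcal H$ for $\mu_t$-a.e.\ $x$. I would handle this in three steps.
\begin{itemize}[leftmargin=1.2em]
\item First, condition on the first $N$ coordinates. Joint independence gives $\mathbb E[\dot f_{t,k}\mid \sigma(f_{t,1},\dots,f_{t,N})]=c_k f_{t,k}$ for $k\le N$, where $c_k$ is the scalar ratio above.
\item Second, apply martingale convergence (L\'evy's upward theorem) in $L^2$ as $N\to\infty$. Combined with mode independence, this shows $\mathbb E[\dot f_{t,k}\mid f_t]=c_k f_{t,k}$ for every $k$.
\item Third, establish square-summability. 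Writing $V(t)f_t$ mode by mode gives
\[
\mathbb E\|V(t)f_t\|^2=\sum_k c_k^2(\alpha_t^2\lambda_k+\beta_t^2\gamma_k)
=\sum_k \frac{(\dot\alpha_t\alpha_t\lambda_k+\dot\beta_t\beta_t\gamma_k)^2}{\alpha_t^2\lambda_k+\beta_t^2\gamma_k}.
\]
By Cauchy--Schwarz, each term is at most $\dot\alpha_t^2\lambda_k+\dot\beta_t^2\gamma_k$. Hence the sum is bounded by $\dot\alpha_t^2\,\mathrm{tr}Q_0+\dot\beta_t^2\,\mathrm{tr}Q_1<\infty$.
\end{itemize}
This shows $V(t)f_t\in\mathcal H$ almost surely, so the formula is meaningful even though $Q_0^{-1}$ may be unbounded. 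I would note explicitly that the composite $V(t)$ is a bounded function of the commuting diagonal operators, so the inverse in the statement is to be read spectrally, mode by mode.
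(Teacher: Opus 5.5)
Your argument is correct. It rests on the same Gaussian-regression computation as the paper, but it is organized differently.

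\textbf{How the paper argues.} The paper works at the operator level. It writes $\mathbb{E}[\dot f_t\mid y]=\operatorname{Cov}(\dot f_t,y)\operatorname{Cov}(y)^{-1}y$, with $\operatorname{Cov}(y)=\alpha_t^2Q_0+\beta_t^2Q_1$ and $\operatorname{Cov}(\dot f_t,y)=\dot\alpha_t\alpha_tQ_0+\dot\beta_t\beta_tQ_1$, and with the inverse taken on the range of $\operatorname{Cov}(y)$. Only afterwards does it invoke the common eigenbasis to rewrite the product through $Q_1Q_0^{-1}$. This is compact and shows the general form of the answer. In infinite dimensions, however, it is only formal. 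A centered Gaussian $y$ with injective trace-class covariance belongs to $\operatorname{Ran}(\operatorname{Cov}(y)^{1/2})$, and hence to the smaller set $\operatorname{Ran}(\operatorname{Cov}(y))$, only with probability zero. So $\operatorname{Cov}(y)^{-1}y$ is undefined for almost every $y$, and the finite-dimensional regression formula cannot simply be quoted.

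\textbf{What your route adds.} You diagonalize first and solve the scalar regressions. You then justify exactly the two points the paper passes over: that conditioning on all of $f_t$ reduces to conditioning on $f_{t,k}$, and that the reassembled series defines an element of $\mathcal{H}$, with $V(t)$ bounded for each $t$. Your proof is therefore the rigorous version of the paper's, restricted to the commuting case that the statement assumes anyway.

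Two of your steps can be shortened. First, $\sigma(f_t)=\sigma(f_{t,j}:j\ge 1)$ for separable $\mathcal{H}$, and $\sigma(f_{t,k},\dot f_{t,k})$ is independent of $\sigma(f_{t,j}:j\neq k)$. So $\mathbb{E}[\dot f_{t,k}\mid f_t]=\mathbb{E}[\dot f_{t,k}\mid f_{t,k}]$ follows from the standard independence lemma, without martingale convergence. Second, your Cauchy--Schwarz bound is just the $L^2(\Omega;\mathcal{H})$-contractivity of conditional expectation, written mode by mode.

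\textbf{One aside should be dropped.} Identifying $\mathbb{E}[\dot f_t\mid f_t]$ with the marginal velocity does not in general follow from Theorem~\ref{thm:3_1}, because its hypothesis $\mu_t(\cdot\mid f)\ll\mu_t$ typically fails here. For $\alpha_t\beta_t\neq 0$, Feldman--H\'ajek shows that $\mathcal{N}(\beta_t f,\alpha_t^2Q_0)$ and $\mathcal{N}(0,\alpha_t^2Q_0+\beta_t^2Q_1)$ are mutually singular for $\mu_1$-a.e.\ $f$ unless $\sum_k\gamma_k/\lambda_k<\infty$. For instance, the case $Q_0=Q_1$ discussed in the paper's Observation is not covered. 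Your alternative is the right reading: take $v_t(x)=\mathbb{E}[\dot f_t\mid f_t=x]$ as the definition, as the statement itself does. Your computation then needs no absolute continuity at all.
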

To further examine the impact of noise regularity, we consider two Mat\'ern fields with covariance operators $Q_i$, $i=0,1$ as in Fig.~\ref{fig:different_noises}.
In this case, the marginal velocity field can be analyzed in the Fourier domain, where it admits the representation
\begin{equation}
    \tilde{V}_k(t)
    = \frac{(t-1)+t\mu_k}{(t-1)^2 + t^2\mu_k},
    \qquad
    \mu_k = \frac{\mathcal{F}_k(Q_1)}{\mathcal{F}_k(Q_0)}.
\end{equation}
Here, $\tilde{V}_k(t)$ denotes the Fourier coefficient of the velocity field at frequency $k$.
For Mat\'ern covariances of the form $Q_i = c_i^2(\kappa_i^2-\Delta)^{-(\nu_i + d/2)}$, we obtain 
\begin{equation}\label{eq_mu_k}
    \mu_k
    = \frac{c_1^2}{c_0^2}
      \frac{(\kappa_0^2 + 4\pi^2\|k\|^2)^{\nu_0 + d/2}}
           {(\kappa_1^2 + 4\pi^2\|k\|^2)^{\nu_1 + d/2}}.
\end{equation}
\paragraph{Observation.}\label{observation}
When the extreme case $Q_0 = Q_1$ is considered, we have $\mu_k = 1$ for all $k$, and the marginal velocity field in the Fourier domain reduces to
$\tilde{V}_k(t) = \frac{2t-1}{2t^2 - 2t + 1}$, which is independent of frequency.
In this regime, the marginal velocity field is spectrally smooth, making it comparatively easier for a model to learn.
In contrast, when the noise and data distributions are mismatched in regularity, particularly when $\nu_0 < \nu_1$, it follows from \citet{chen2025scale} that $\mu_k \to 0$ as $\|k\|\to\infty$.
Consequently, $\tilde{V}_k(t)$ develops increasingly sharp variations. For high-frequency modes with $0<\mu_k\leq 1$, its minimum on $t\in[0,1]$ is $\tilde{V}_k^{\min} = -\frac{1}{2}\left(\sqrt{\mu_k} + \frac{1}{\sqrt{\mu_k}}\right)$, 
diverging to $-\infty$ as $\mu_k \to 0$.
This behavior indicates severe spectral stiffness in the marginal velocity field, which poses significant challenges for both learning and numerical integration as in Fig.~\ref{fig:different_noises}.

In practice, functional data are represented on discretized meshes $k< \infty$. As a result, the spectral-stiffness effect is partially mitigated in experiments.
As illustrated in Fig.~\ref{fig:different_noises}, when the regularity mismatch between the noise and target measures is pronounced, as in the case of white noise, the high-frequency components of the marginal velocity field become extremely sharp as $t\to 1$.
Although discretized Gaussian noise partially mitigates this effect, an admissible Mat\'ern source can reduce the mismatch over the modes represented by the model.
This analysis motivates selecting the source through both continuum admissibility and finite-band spectral alignment.

\subsection{\themodel: Noise-Aware Generative Operator}
Here, we propose a noise-aware generative operator framework (NAGO) that uses target spectra to choose the source measure before training.
The overall pipeline is illustrated in Fig.~\ref{fig:NAGO}.
At a high level, NAGO consists of three steps:
(i) estimating the spectral decay of the target data measure;
(ii) selecting a Mat\'ern noise model whose Cameron--Martin space contains typical target samples, while controlling spectral stiffness; and
(iii) training a functional flow-matching model with a neural-operator backbone.

\textbf{Spectral characterization.}
Motivated by Equ.~\eqref{eq_matern}, we characterize the regularity of the target measure $\mu$ via its spectral energy.
Let $\tilde{f}(k)$ denote the Fourier coefficient of $f\sim\mu$ at frequency $k\in\mathbb{Z}^d$.
We define the spectral energy function as
\begin{equation}
    S_d(k) := \mathbb{E}_{f\sim \mu}\big[\|\tilde{f}(k)\|^2\big],\qquad k\in\mathbb{Z}^d.
\end{equation}
A classical characterization states that a function belongs to the Sobolev space $H^s$ if and only if
$\sum_{k\in\mathbb{Z}^d}(1+\|k\|^2)^s\,\|\tilde{f}(k)\|^2<\infty$.
Accordingly, the decay behavior of $S_d(k)$ provides an empirical proxy for the Sobolev regularity of the data measure.
In practice, we estimate the decay exponent $\beta$ by fitting a line to the log-log plot of the radially averaged spectrum over a high-frequency band; further details are provided in Appendix~\ref{apdx:pf_3_6}.

\textbf{Admissible noise selection.}
The following theorem gives a sufficient smoothness condition for typical
target samples to lie in the source Cameron--Martin space.
\begin{theorem}\label{thm:noise_condition}
    If the spectrum of the data measure exhibits polynomial decay of the form
    $S_d(k)\sim \|k\|^{-\beta}$ as $\|k\|\to\infty$, then choosing Mat\'ern source smoothness $\nu_0 < \frac{\beta}{2} - d$ ensures this Cameron--Martin inclusion. Proof is shown in Appendix~\ref{apdx_proof_noise_condition}.
    For example, if the target Mat\'ern field is characterized by $\nu = 2.0$
    (corresponding to $\beta = 6$ in $d=2$), then an admissible Mat\'ern noise must satisfy $\nu_0 < 1$.
\end{theorem}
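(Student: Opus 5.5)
The plan is to reduce the Cameron--Martin inclusion to a Sobolev-membership question and then settle it with a lattice-sum convergence test on the spectrum. By Thm.~\ref{thm_cm_space}, a Mat\'ern source with smoothness $\nu_0$ in dimension $d$ has Cameron--Martin space $\mathcal{H}_{\mathrm{CM}} = H^{\nu_0 + d/2}$. It therefore suffices to show that $f \in H^{s}$ almost surely for $f\sim\mu$, with $s := \nu_0 + d/2$, whenever $\nu_0 < \frac{\beta}{2}-d$.

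\textbf{Reduction to a moment bound.} I would use the Fourier characterization of $H^s$ recalled in the spectral-characterization paragraph, together with Tonelli's theorem, to write
\begin{equation*}
\mathbb{E}_{f\sim\mu}\big[\|f\|_{H^s}^2\big] \;=\; \sum_{k\in\mathbb{Z}^d} (1+\|k\|^2)^s\, S_d(k).
\end{equation*}
If the right-hand side is finite, then $\|f\|_{H^s}<\infty$ almost surely, so $\mu$-typical samples lie in $\mathcal{H}_{\mathrm{CM}}$. By linearity, the differences $f-g$ required in the Feldman--H\'ajek discussion after Thm.~\ref{thm:3_2} then lie there as well.

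\textbf{Convergence of the lattice sum.} By the assumption $S_d(k)\sim\|k\|^{-\beta}$, there are constants $C$ and $R$ such that $S_d(k)\le C\|k\|^{-\beta}$ for $\|k\|\ge R$. The finitely many modes with $\|k\|<R$ contribute a finite amount, since each $S_d(k)$ is finite ($f\in L^2$ is implicit). For the tail, I would compare the sum over $\mathbb{Z}^d$ with an integral, using unit cubes around lattice points and the fact that $\|k\|$ varies by a bounded factor on each cube. In polar coordinates this gives
\begin{equation*}
\sum_{\|k\|\ge R} \|k\|^{2s-\beta} \;\lesssim\; \int_{R}^{\infty} r^{2s-\beta+d-1}\,dr,
\end{equation*}
which is finite iff $2s-\beta+d<0$, that is, $s<\frac{\beta-d}{2}$.

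Substituting $s=\nu_0+d/2$ turns this into exactly $\nu_0<\frac{\beta}{2}-d$. For the worked example with $d=2$, $\beta=6$, the condition reads $\nu_0<1$. I would also note that the matching lower bound $S_d(k)\gtrsim\|k\|^{-\beta}$ shows the expectation diverges at the endpoint, so the strict inequality is sharp for this moment criterion.

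\textbf{Main obstacle.} The delicate step is passing from the statement about the second moment to the almost-sure inclusion, and justifying the interpretation of ``typical'' samples. Finiteness of the expectation gives almost-sure membership directly, which is all the statement claims. The converse direction (that membership fails otherwise) is not needed here and would require Gaussian-type structure, for instance Kolmogorov's three-series or zero--one arguments, so I would not attempt it. A secondary care point is that $S_d$ is a radially asymptotic statement: the lattice-to-integral comparison needs only the upper bound beyond some radius, so it is robust to any anisotropy in the lower-order terms of the spectrum.
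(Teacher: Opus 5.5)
Your proposal is correct and follows the paper's proof essentially step for step: the Tonelli identity $\mathbb{E}\|f\|_{H^s}^2=\sum_k(1+\|k\|^2)^s S_d(k)$, then convergence of the tail iff $s<(\beta-d)/2$, then the substitution $s=\nu_0+d/2$ from Thm.~\ref{thm_cm_space}; the only difference is that the paper counts lattice shells, $\#\{k: n\le\|k\|<n+1\}\asymp n^{d-1}$, where you compare with an integral over unit cubes, which amounts to the same thing. Like you, the paper only asserts, and does not prove, the Gaussian converse behind the word ``must'' in the $\nu_0<1$ example, so setting it aside loses nothing relative to the paper.
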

The admissible set from Thm.~\ref{thm:noise_condition} can be refined by
controlling the spectral velocity: excessively rough noise induces stiffness
through the covariance ratio $\mu_k$ in Equ.~\eqref{eq_mu_k},
leading to the spiky behavior discussed in Observation~\ref{observation}.
Therefore, subject to the admissibility constraint in Thm.~\ref{thm:noise_condition}, we give the following theorem:
\begin{theorem}\label{thm:3_6}
    Assume the source and target are Mat\'ern measures with the same $\kappa$ and positive Fourier spectra. Let the target smoothness be $\nu_1$, and let $\nu_{\rm adm}=\nu_1-d/2$ be the Cameron--Martin admissibility threshold from Thm.~\ref{thm:noise_condition}. Define $r_0=\mathcal{F}_0(Q_1)/\mathcal{F}_0(Q_0)$, the target-to-source covariance ratio at the zero mode. For a prescribed uniform bound $L>1$ on the magnitude of the Fourier marginal velocity coefficient, choose $\nu_0$ so that
    \begin{equation}\label{eq:choose_noise}
        0<\nu_0<\nu_{\rm adm}-\varepsilon,
    \quad
        r_0 M_N^{\nu_1-\nu_0}>C_L,
    \quad
    r_0<C_L^{-1},
    \end{equation}
    where $M_N = \frac{\kappa^2}{\kappa^2 + 4\pi^2N^2}<1$, $C_L = (L -\sqrt{L^2-1})^2<1$, $N$ is the maximum represented frequency, and $\varepsilon>0$ enforces strict admissibility. Then $|\tilde V_k(t)|<L$ for every $t\in[0,1]$ and $\|k\|\le N$. The proof is provided in Appendix~\ref{apdx:pf_3_6}.
\end{theorem}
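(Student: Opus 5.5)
The plan is to reduce everything to a one-parameter scalar problem in the covariance ratio $\mu_k$, and then read off the conditions in Eq.~\eqref{eq:choose_noise} as the statement that every resolved $\mu_k$ lies in an explicit interval. First I would specialise Eq.~\eqref{eq_mu_k} to $\kappa_0=\kappa_1=\kappa$. Because the exponents are $\nu_i+d/2$, the $d/2$ parts cancel and
\[
\mu_k=\frac{c_1^2}{c_0^2}\,(\kappa^2+4\pi^2\|k\|^2)^{\nu_0-\nu_1}
= r_0\, m_k^{\,\nu_1-\nu_0},
\qquad m_k:=\frac{\kappa^2}{\kappa^2+4\pi^2\|k\|^2},
\]
where $r_0=\frac{c_1^2}{c_0^2}\kappa^{2(\nu_0-\nu_1)}$ is the value at $k=0$. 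The first condition in Eq.~\eqref{eq:choose_noise} gives $\nu_0<\nu_{\rm adm}<\nu_1$, so $\nu_1-\nu_0>0$. Since $m_k\in[M_N,1]$ for $\|k\|\le N$, the ratio $\mu_k$ is monotone decreasing in $\|k\|$ and satisfies
\[
r_0M_N^{\nu_1-\nu_0}\le \mu_k\le r_0 .
\]
The second and third conditions in Eq.~\eqref{eq:choose_noise} then give $\mu_k\in(C_L,C_L^{-1})$ for all resolved modes. The admissibility part of the first condition, via Thm.~\ref{thm:noise_condition}, only guarantees that the flow is well-defined. It plays no further role in the bound, apart from supplying $\nu_1>\nu_0$.

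Second, for a fixed $\mu>0$ I would compute $\sup_{t\in[0,1]}|V(t;\mu)|$, where
\[
V(t;\mu)=\frac{(t-1)+t\mu}{(t-1)^2+t^2\mu}.
\]
The endpoint values are $V(0)=-1$ and $V(1)=1$. For $0<\mu\le 1$, the Observation already gives the interior minimum $-\tfrac12(\sqrt\mu+1/\sqrt\mu)$. I would verify this by setting $\partial_tV=0$, which yields a quadratic in $t$, and substituting the root back.

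For $\mu\ge1$ I would use the symmetry $s=1-t$. Dividing numerator and denominator by $\mu$ gives $V(t;\mu)=-V(1-t;1/\mu)$, so the interior extremum is the maximum $+\tfrac12(\sqrt\mu+1/\sqrt\mu)$. The same argument should show there are no other interior extrema exceeding these in modulus. Hence
\[
\sup_{t\in[0,1]}|V(t;\mu)|=\max\Bigl\{1,\ \tfrac12\bigl(\sqrt\mu+1/\sqrt\mu\bigr)\Bigr\}.
\]
The main obstacle is this step: checking that the critical point lies in $[0,1]$ and that it is the global extremum of $|V|$.

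Third, I would solve the inequality $\tfrac12(\sqrt\mu+1/\sqrt\mu)<L$. Setting $y=\sqrt\mu$, it becomes $y^2-2Ly+1<0$, that is,
\[
y\in\bigl(L-\sqrt{L^2-1},\,L+\sqrt{L^2-1}\bigr).
\]
The two endpoints are reciprocal, because their product is $1$. Squaring therefore gives $\mu\in(C_L,C_L^{-1})$, which is exactly the interval obtained in the first step. Together with $1<L$, which handles the endpoint values $\pm1$, this yields $|\tilde V_k(t)|<L$ for all $t\in[0,1]$ and all $\|k\|\le N$.
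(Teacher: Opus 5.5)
Your proposal is correct and follows the paper's proof essentially step for step. You reduce the equal-$\kappa$ Mat\'ern ratio to $\mu_k=r_0\bigl(\kappa^2/(\kappa^2+4\pi^2\|k\|^2)\bigr)^{\nu_1-\nu_0}$, bound it between its values at $\|k\|=N$ and $k=0$, show $\sup_{t\in[0,1]}|\tilde V_k(t)|=\tfrac12(\sqrt{\mu_k}+1/\sqrt{\mu_k})$, and invert this to $C_L<\mu_k<C_L^{-1}$. The only cosmetic difference is that you treat $\mu_k\ge 1$ via the symmetry $V(t;\mu)=-V(1-t;1/\mu)$, whereas the paper simply lists the critical point $t=(1+\sqrt{\mu_k})/(1+\mu_k)$. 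Also, the step you flag as the main obstacle is routine: the critical-point equation has roots $t=(1\pm\sqrt{\mu})/(1+\mu)$, and at most one of them lies in $(0,1)$.
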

The shared-$\kappa$ result gives a spectral design guide for Mat\'ern sources. Operationally, we (i) estimate the tail exponent $\widehat\beta$, (ii) restrict $\nu_0\leq \widehat\beta/2-d-\varepsilon$, and (iii) choose $(\nu_0,\kappa_0)$ from this admissible set to minimize the empirical mismatch of the covariance ratio $\widehat\mu_k=\widehat S_1(k)/S_0(k)$ over the resolved band. We use
\begin{equation}\label{eq:finite_band_selection}
 (\nu_0,\kappa_0)=\arg\min_{\mathcal A_\varepsilon}
 \sum_{k\in\mathcal K_N}w_k\bigl(\log \widehat\mu_k-\overline{\log\widehat\mu}\bigr)^2,
 \qquad \mathcal A_\varepsilon=\{\nu_0\leq\widehat\beta/2-d-\varepsilon\},
\end{equation}
where $\mathcal K_N$ contains the represented Fourier modes and $w_k$ can downweight unreliable tail estimates. This criterion seeks a nearly frequency-independent ratio, as suggested by Observation~\ref{observation}; the length-scale sweep in Appendix~\ref{apdx:source_selection_ablation} shows why smoothness matching alone is insufficient.
\textbf{Implication for common noise choices.}
Discretized Gaussian noise can be regarded as spectral noise with constant high-frequency energy.
Although it is always available on a fixed grid, its high-frequency energy tends to induce sharper variations in the learned marginal velocity field than an admissible Mat\'ern source selected by Eq.~\eqref{eq:finite_band_selection}.
This explains why appropriately regularized spectral noise can substantially improve both stability and efficiency.

\paragraph{Training and guided generation.}
The subsequent training follows the standard flow matching framework, with Fourier Neural Operator (FNO) used as the backbone architecture.
To support guided generation under partial observations, we employ the infinite-dimensional Tweedie formula.
Assume that $\mu_t \ll \mu_0$, $\dot\alpha_t\neq0$, and
$\beta_t-\dot\beta_t\alpha_t/\dot\alpha_t\neq0$. Then the Tweedie formula is
\begin{equation}\label{eq:tweedie}
\mathbb{E}[f_1|f_t] =
\left(\beta_t - \dot{\beta_t}\frac{\alpha_t}{\dot{\alpha_t}}\right)^{-1}
\left(f_t - \frac{\alpha_t}{\dot{\alpha_t}}v_t(f_t)\right).
\end{equation}
The proof is provided in Appendix~\ref{pf:tweedie}.
The complete training and sampling algorithms are summarized in Appendix~\ref{sec_alg}.

%% file: src/5_Experiment.tex
\section{Experiments}

\begin{figure}[!t] 
  \centering
    \includegraphics[width=\textwidth]{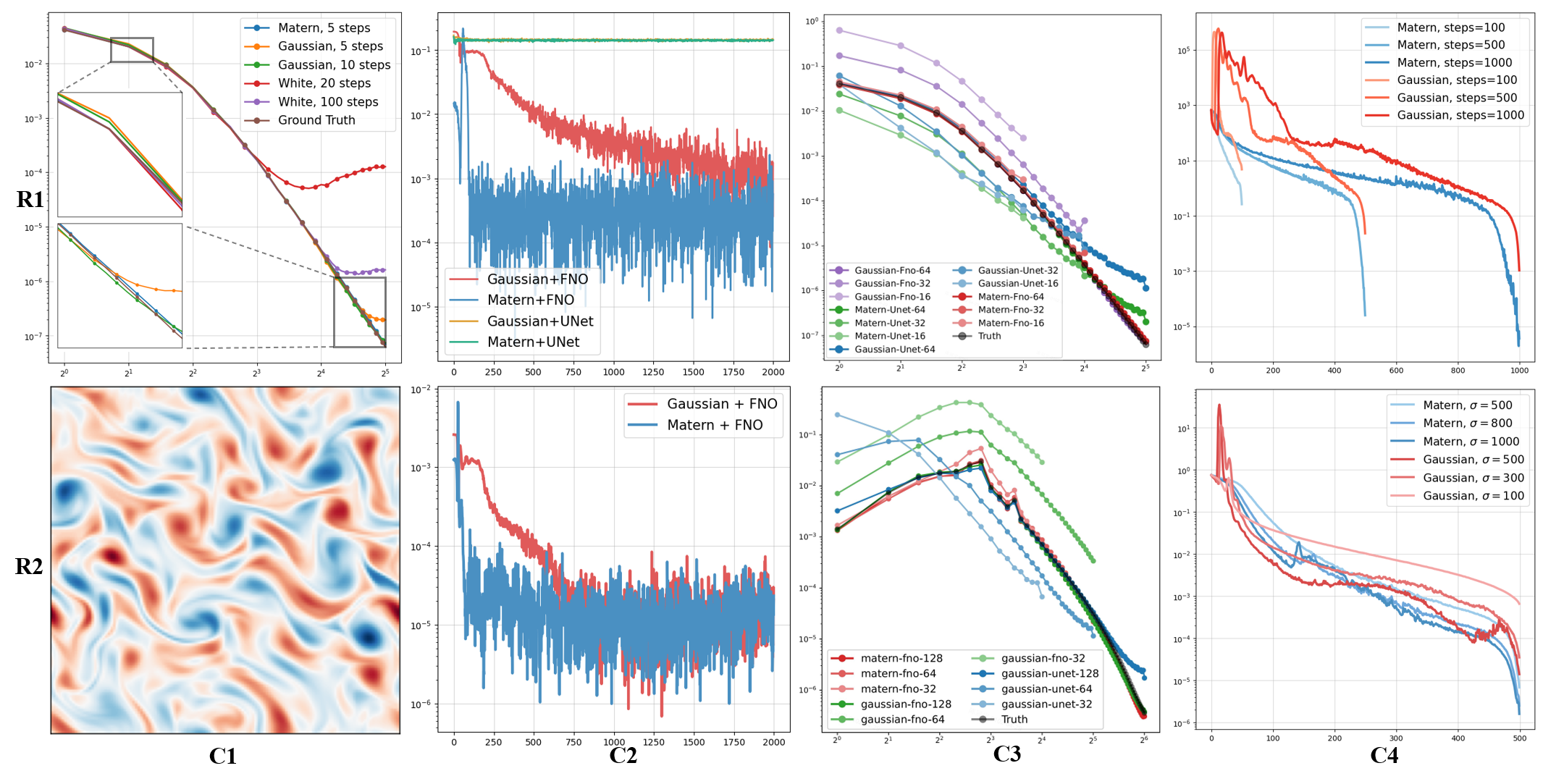}
\caption{Noise-source comparison on the Mat\'ern-field (top) and SNS (bottom) benchmarks. Columns show representative samples or spectra, error versus solver budget, cross-resolution spectra, and guided reconstruction loss, respectively.}
\label{fig_all}
\vspace{-15pt}
\end{figure}

In this section, we empirically evaluate the proposed \themodel with the explicit goal of validating the hypotheses raised in the Introduction regarding noise-induced spectral stiffness and its impact on training stability, sampling efficiency, resolution consistency, and guided generation.
Specifically, we focus on three aspects:
(i) generation accuracy and efficiency at a fixed resolution,
(ii) consistency across varying spatial resolutions,
and (iii) numerical stability and accuracy under guided generation. 
Additional experimental details and ablation studies are provided in Appendix~\ref{apdx_exp_details}.

\textbf{Benchmarks.}
Three datasets are considered in this study, including linear Mat\'ern fields, the stochastic Allen--Cahn (SAC) equation, and the stochastic Navier--Stokes (SNS) equation.
Detailed descriptions of the underlying SPDEs and data generation procedures are provided in Appendix~\ref{apdx:intro_dataset}. 

\textbf{Experimental Setup.}
We evaluate three complementary aspects corresponding to the core claims of this work:
generation quality and efficiency,
resolution consistency,
and stability under guided generation.
The spectral accuracy $E(k)$ is defined as $E(k) = \sum_{k\leq \|m\|\leq k+1}\left|\tilde{u}(m)\right|^2,$ 
and the log-MSE (lMSE),
$\frac{1}{N}\sum_k (\log \tilde{E}_k-\log E_k)^2$,
is used as the primary spectral metric. For SNS, Appendix~\ref{apdx:sns_distribution_metrics} additionally reports multiscale MMD, marginal Wasserstein and Jensen--Shannon distances, structure functions, spatial correlations, and physical observables.
All experiments employ RK4 as the neural ODE solver to ensure that performance differences arise from noise design rather than solver choice.

\textbf{Baselines.}
We compare combinations of source distributions and backbones, summarized in Table~\ref{tab:baseline}. Controlled source ablations keep the data, architecture, optimizer, training budget, RK4 solver, and evaluation samples fixed; only the source distribution changes.
\subsection{Generation at Fixed Resolution}
We first evaluate unconditional generation at the same spatial resolution as the training data. 
For the Mat\'ern field benchmark, Fig.~\ref{fig_all} (top) shows that the source choice strongly affects the learned spectral energy distribution.
White noise concentrates excessive energy in high-frequency modes, causing the marginal velocity field to become ill-conditioned as $t\to 1$, and even 100 integration steps are insufficient for accurate sampling.
Gaussian noise partially alleviates this issue but still requires substantially finer temporal discretization.
This behavior is consistent with the noise-induced spectral stiffness discussed in the Introduction. In contrast, the selected Mat\'ern source attains low spectral error with 3--5 RK4 steps. Similar few-step trends occur for SAC and SNS.

Table~\ref{tab_all} reports the benchmark-level comparison. A denser controlled sweep on the Mat\'ern target (Appendix~\ref{apdx:ode_step_sweep}) shows that Gaussian FFM reaches its stable spectral-error regime at about 30 RK4 steps, whereas NAGO attains lower lMSE at 3--5 steps. Thus, source design reduces the accurate step and neural-function-evaluation budget by $6$--$10\times$ in this setting. The U-Net results indicate that the source can also help a grid-based backbone, but they do not establish architecture-independent gains.


\textbf{Generation Time Cost.}
On one RTX~4090 at batch size 16, the SNS pseudo-spectral solver requires 20.3814\,s per recorded batch, while five-step NAGO requires a median 0.228848\,s, a raw post-training throughput ratio of $89.1\times$. This comparison excludes dataset construction and model training and is not an equal-fidelity or end-to-end break-even claim. The Mat\'ern-specific efficiency evidence is the separate $6$--$10\times$ reduction in RK4 steps above; the timing protocol and autocorrelation-adjusted accounting are given in Appendix~\ref{apdx:speed_protocol}.



\subsection{Generation at Varying Resolutions}
We next separate same-resolution generation at $64^2$ from transfer to $32^2$ and $128^2$. As shown in Fig.~\ref{fig_all} (C3), the Mat\'ern source better preserves the reference spectrum across these grids. At $128^2$, it retains $93.99\%$ of energy in the frequency band shared with training and places $2.39\%$ in newly available modes, compared with $23.64\%$ and $76.92\%$ for the Gaussian source (Appendix~\ref{apdx:resolution_diagnostics}).

These results show preservation of the learned shared-frequency distribution and suppression of spurious new-band energy, not reconstruction of an unseen continuum tail. Indeed, the $128^2$ Mat\'ern palinstrophy ratio remains 6.72, so fine-scale extrapolation is still imperfect.

\subsection{Generation with Guidance}
Finally, we consider guided generation, where low-resolution observations are provided and the goal is to reconstruct the full field while quantifying uncertainty.

Previous studies \citep{lu2025dpm} have reported numerical oscillations or divergence during guided generation, particularly under strong guidance, which is closely linked to steep or ill-conditioned marginal velocity fields.
Several works \citep{chen2025scale, albergobuilding, albergo2025stochastic} have proposed interpolation or guidance strength schedules to mitigate these effects.

Figure~\ref{fig_all} (C4) and Table~\ref{tab:guidance_sns_compare} compare guided SNS reconstruction with 25\% observations under the original per-step guidance parameterization. Mat\'ern runs show smoother early loss trajectories and lower error for several matched settings. However, when total guidance intervention is held fixed as the solver budget changes, Gaussian sources can be better on observation MSE while Mat\'ern sources can retain better spectral fidelity. We therefore interpret guidance as a metric- and hyperparameter-dependent secondary result rather than direct causal confirmation of the stiffness mechanism; the fixed-total controls are reported in Appendix~\ref{apdx:guide_loss}.

\textbf{Remark.}
Guidance strength and its accumulation across steps materially affect the comparison. Appendix~\ref{apdx:guide_loss} reports this sensitivity and separates ODE refinement from cumulative guidance intervention.

\begin{table*}[!t]
\centering
\caption{
Log-MSE for different test cases.
Values in parentheses indicate the number of ODE integration steps used during inference.
Best results are highlighted in bold.
}
\resizebox{\textwidth}{!}{
\begin{tabular}{lccccccccc}
\toprule
\multirow{2}{*}{Noise}
& \multirow{2}{*}{Model/Steps}
& \multicolumn{3}{c}{Mat\'ern Field $(\times 10^{-3})$}
& \multicolumn{3}{c}{Stochastic Allen--Cahn $(\times 10^{-3})$}
& \multicolumn{2}{c}{Stochastic NS $(\times 10^{-2})$} \\
\cmidrule(lr){3-5} \cmidrule(lr){6-8} \cmidrule(lr){9-10}
& & C1(5) & C2(10) & C3(50) & C1(5) & C2(10) & C3(50) & C1(5) & C2(10) \\
\midrule

\multirow{2}{*}{Mat\'ern}
& FNO
& $\mathbf{1.70}$
& $\mathbf{0.638}$
& $\mathbf{1.44}$
& $\mathbf{0.553}$
& $\mathbf{0.778}$
& $\mathbf{0.612}$
& $\mathbf{4.44}$
& $\mathbf{4.40}$ \\
& UNet
& $312$
& $342$
& $275$
& $30.7$
& $32.1$
& $32.7$
& $7.27$
& $7.55$ \\

\midrule

\multirow{2}{*}{Gaussian}
& FNO
& $113$
& $5.66$
& $1.52$
& $7.16$
& $2.31$
& $0.899$
& $5.11$
& $9.03$ \\
& UNet
& $2860$
& $2700$
& $2760$
& $7.99$
& $6.03$
& $4.84$
& $44.7$
& $28.7$ \\

\midrule

White
& FNO
& $1.77{\times}10^{4}\,(20)$
& $8930\,(50)$
& $1780\,(100)$
& $204$
& $5.16$
& $1.20$
& $5220$
& $2670$ \\

\bottomrule
\end{tabular}
}
\label{tab_all}
\vspace{-10pt}
\end{table*}

\begin{table}[!t]
\label{dps}
\centering
\caption{Sampling with guidance for the SNS benchmark.}
\label{tab:guidance_sns_compare}
\footnotesize
\setlength{\tabcolsep}{3pt}
\renewcommand{\arraystretch}{1.12}
\resizebox{\linewidth}{!}{%
\begin{tabular}{lcccccccc}
\toprule
\multirow{2}{*}{Steps}
& \multicolumn{2}{c}{Mat\'ern + FNO}
& \multicolumn{2}{c}{Gaussian + FNO}
& \multicolumn{2}{c}{Mat\'ern + U-Net}
& \multicolumn{2}{c}{Gaussian + U-Net} \\
\cmidrule(lr){2-3}
\cmidrule(lr){4-5}
\cmidrule(lr){6-7}
\cmidrule(lr){8-9}
& L2 & MSE & L2 & MSE & L2 & MSE & L2 & MSE \\
\midrule
100
& \(\mathbf{8.42{\times}10^{-2}}\)
& \(\mathbf{7.27{\times}10^{-3}}\)
& \(1.16{\times}10^{-1}\)
& \(2.83{\times}10^{-2}\)
& \(6.26{\times}10^{-2}\)
& \(4.01{\times}10^{-3}\)
& \(3.10{\times}10^{-1}\)
& \(9.85{\times}10^{-2}\) \\
500
& \(\mathbf{1.03{\times}10^{-1}}\)
& \(\mathbf{1.08{\times}10^{-2}}\)
& \(1.59{\times}10^{-1}\)
& \(2.62{\times}10^{-2}\)
& \(2.76{\times}10^{-1}\)
& \(7.85{\times}10^{-2}\)
& \(1.52\)
& \(2.37\) \\
1000
& \(\mathbf{8.70{\times}10^{-2}}\)
& \(\mathbf{7.75{\times}10^{-3}}\)
& \(1.53{\times}10^{-1}\)
& \(2.39{\times}10^{-2}\)
& \(4.66{\times}10^{-1}\)
& \(2.27{\times}10^{-1}\)
& \(1.98\)
& \(4.04\) \\
\bottomrule
\end{tabular}%
}
\vspace{-2em}
\end{table}

%% file: src/6_Conclusion.tex
\section{Conclusion and Limitations}

In this work, we introduced \themodel, a noise-aware generative operator for sampling invariant measures of SPDEs. The key idea is to impose function-space admissibility first and then align the source and target covariance over the modes represented by the model. This separates the theoretical roughness requirement from the practical objective of reducing spectral stiffness. Across three stationary stochastic-field benchmarks, the selected Mat\'ern source improves few-step spectral accuracy and several complementary distributional statistics; on the controlled Mat\'ern target, it reduces the accurate RK4 budget from about 30 to 3--5 steps. The scope is nevertheless limited to stationary fields on periodic regular grids, where spectra can be estimated reliably and Mat\'ern sampling is FFT-diagonalizable. Cross-resolution results preserve shared frequencies but do not recover an unseen spectral tail, and guidance comparisons remain metric- and hyperparameter-dependent. Extending the admissibility-constrained criterion to irregular geometries, nonstationary spectra, and learned anisotropic sources is an important direction for future work.

%% file: src/7_Appendix_1.tex
\newpage
\appendix
\onecolumn

\startcontents[appendix]
\pdfbookmark[0]{Appendix Contents}{appendix.contents}
\section*{Appendix Contents}
\printcontents[appendix]{}{1}{\setcounter{tocdepth}{2}}
\clearpage

\input{src/2_Related}

\section{Preliminaries and notation}
\label{apdx:random_field_prelim}

We briefly review several standard facts on random fields, covariance operators, Karhunen--Lo\`eve expansions, and Gaussian measures on Hilbert spaces.
These notions provide the functional-analytic foundation for the noise measures considered in this work.
\subsection{Random Fields}
\subsubsection{Random Fields as Hilbert-Valued Random Variables}

Let $(\Omega,\mathcal{F},\mathbb{P})$ be a probability space and let $D\subset \mathbb{R}^d$ be a bounded domain.
We denote by $H$ a separable Hilbert space of functions on $D$, typically $H=L^2(D)$ or a Sobolev space.
A random field is viewed as an $H$-valued random variable
\begin{equation}
    u:\Omega\rightarrow H.
\end{equation}
For each $\omega\in\Omega$, $u(\omega)$ is a function on $D$.
When emphasizing the spatial variable, we write $u(x,\omega)$.

For $1\leq p<\infty$, the Bochner space $L^p(\Omega;H)$ consists of all strongly measurable random variables $u:\Omega\rightarrow H$ satisfying
\begin{equation}
    \mathbb{E}\|u\|_H^p<\infty,
\end{equation}
with norm
\begin{equation}
    \|u\|_{L^p(\Omega;H)}
    :=
    \left(\mathbb{E}\|u\|_H^p\right)^{1/p}.
\end{equation}
When $p=2$, $L^2(\Omega;H)$ is a Hilbert space with inner product
\begin{equation}
    \langle u,v\rangle_{L^2(\Omega;H)}
    :=
    \mathbb{E}\langle u,v\rangle_H.
\end{equation}

The law of $u$ is the probability measure
\begin{equation}
    m := u_{\#}\mathbb{P}
\end{equation}
on $H$.
Thus, a random field can equivalently be regarded as a probability measure on a function space.

\subsubsection{Mean and Covariance Operator}

Assume that $u\in L^2(\Omega;H)$.
Its mean is the element $\mu\in H$ defined by
\begin{equation}
    \mu := \mathbb{E}u.
\end{equation}
The covariance operator $\mathcal{C}:H\rightarrow H$ is defined weakly by
\begin{equation}
    \langle \mathcal{C}\phi,\psi\rangle_H
    =
    \mathbb{E}
    \left[
        \langle u-\mu,\phi\rangle_H
        \langle u-\mu,\psi\rangle_H
    \right],
    \qquad
    \forall \phi,\psi\in H .
    \label{eq:cov_operator}
\end{equation}
Equivalently,
\begin{equation}
    \mathcal{C}
    =
    \mathbb{E}\bigl[(u-\mu)\otimes(u-\mu)\bigr],
\end{equation}
where $(a\otimes b)\phi := a\langle b,\phi\rangle_H$.

The operator $\mathcal{C}$ is self-adjoint, positive semidefinite, and trace class.
Moreover,
\begin{equation}
    \operatorname{Tr}(\mathcal{C})
    =
    \mathbb{E}\|u-\mu\|_H^2.
    \label{eq:trace_second_moment}
\end{equation}

When $H=L^2(D)$ and $\mathcal{C}$ admits an integral kernel $C(x,y)$, we have
\begin{equation}
    (\mathcal{C}\phi)(x)
    =
    \int_D C(x,y)\phi(y)\,dy,
    \qquad
    C(x,y)
    =
    \operatorname{Cov}(u(x,\cdot),u(y,\cdot)).
\end{equation}

\subsubsection{Karhunen--Lo\`eve Expansion}

Assume that $H=L^2(D)$ and that the covariance operator $\mathcal{C}$ is compact, self-adjoint, positive semidefinite, and trace class.
Let $\{(\lambda_i,\phi_i)\}_{i\geq 1}$ be its eigenpairs:
\begin{equation}
    \mathcal{C}\phi_i=\lambda_i\phi_i,
    \qquad
    \lambda_i\geq 0,
\end{equation}
where $\{\phi_i\}_{i\geq 1}$ is an orthonormal basis of $H$.
Then the Karhunen--Lo\`eve expansion of $u$ is
\begin{equation}
    u(x,\omega)
    =
    \mu(x)
    +
    \sum_{i=1}^{\infty}
    \sqrt{\lambda_i}\,\xi_i(\omega)\phi_i(x),
    \label{eq:kl_expansion}
\end{equation}
where, for $\lambda_i>0$,
\begin{equation}
    \xi_i(\omega)
    =
    \frac{1}{\sqrt{\lambda_i}}
    \langle u(\omega)-\mu,\phi_i\rangle_H .
\end{equation}
The random variables $\{\xi_i\}_{i\geq 1}$ satisfy
\begin{equation}
    \mathbb{E}[\xi_i]=0,
    \qquad
    \mathbb{E}[\xi_i\xi_j]=\delta_{ij}.
\end{equation}
If $u$ is Gaussian, then $\{\xi_i\}_{i\geq 1}$ are independent standard normal random variables.

The expansion converges in $L^2(\Omega;H)$, and
\begin{equation}
    \mathbb{E}\|u-\mu\|_H^2
    =
    \sum_{i=1}^{\infty}\lambda_i
    =
    \operatorname{Tr}(\mathcal{C}).
\end{equation}

If $C(x,y)$ is continuous, then Mercer's theorem gives
\begin{equation}
    C(x,y)
    =
    \sum_{i=1}^{\infty}
    \lambda_i\phi_i(x)\phi_i(y),
    \label{eq:mercer}
\end{equation}
with convergence under the usual Mercer assumptions.

\subsubsection{Gaussian Measures on Hilbert Spaces}

An $H$-valued random variable $u$ is Gaussian if
\begin{equation}
    \langle u,\phi\rangle_H
\end{equation}
is a real-valued Gaussian random variable for every $\phi\in H$.
If $u$ has mean $\mu$ and covariance operator $\mathcal{C}$, we write
\begin{equation}
    u\sim \mathcal{N}(\mu,\mathcal{C}).
\end{equation}
Equivalently,
\begin{equation}
    \langle u,\phi\rangle_H
    \sim
    \mathcal{N}
    \left(
        \langle \mu,\phi\rangle_H,
        \langle \mathcal{C}\phi,\phi\rangle_H
    \right),
    \qquad
    \forall \phi\in H .
\end{equation}

In infinite-dimensional Hilbert spaces, the covariance operator of a Gaussian measure must be trace class.
Conversely, if $\mathcal{C}:H\rightarrow H$ is self-adjoint, positive semidefinite, and trace class, then there exists a Gaussian measure $\mathcal{N}(\mu,\mathcal{C})$ on $H$.

If $\mathcal{C}$ has eigenpairs $\{(\lambda_i,\phi_i)\}_{i\geq 1}$, then a centered Gaussian random variable with covariance $\mathcal{C}$ admits the representation
\begin{equation}
    u
    =
    \sum_{i=1}^{\infty}
    \sqrt{\lambda_i}\,\xi_i\phi_i,
    \qquad
    \xi_i\overset{\mathrm{i.i.d.}}{\sim}\mathcal{N}(0,1).
    \label{eq:hilbert_gaussian_series}
\end{equation}
The series converges in $L^2(\Omega;H)$ if and only if $\sum_{i=1}^{\infty}\lambda_i<\infty.$

\subsubsection{Spectral Decay and Regularity}

The covariance spectrum determines the regularity of Gaussian random fields.
Assume that $H=L^2(D)$ and that $\{\phi_k\}$ is a Fourier-type basis indexed by frequency $k$.
Let
\begin{equation}
    u
    =
    \sum_k
    \sqrt{\lambda_k}\,\xi_k\phi_k,
    \qquad
    \xi_k\overset{\mathrm{i.i.d.}}{\sim}\mathcal{N}(0,1).
\end{equation}
Then, formally,
\begin{equation}
    \mathbb{E}\|u\|_{H^s(D)}^2
    \asymp
    \sum_k
    (1+|k|^2)^s\lambda_k.
    \label{eq:regularity_spectral_decay}
\end{equation}
Hence $u\in H^s(D)$ in mean square whenever
\begin{equation}
    \sum_k
    (1+|k|^2)^s\lambda_k<\infty.
\end{equation}
Thus, faster decay of $\lambda_k$ corresponds to smoother samples, while slower decay corresponds to rougher samples with stronger high-frequency content.

A Mat\'ern-type Gaussian measure provides a convenient family of covariance spectra with tunable regularity.
In a Fourier basis, a typical Mat\'ern covariance has eigenvalues of the form
\begin{equation}
    \lambda_k
    \propto
    \left(\kappa^2+|k|^2\right)^{-\alpha},
    \label{eq:matern_eigenvalue_decay}
\end{equation}
where $\kappa>0$ controls the correlation length and $\alpha>0$ controls spectral decay.
Increasing $\alpha$ suppresses high-frequency modes and produces smoother random fields.

\subsubsection{Implication for Source Noise Design}

The preceding facts show that a Gaussian noise measure on a function space is determined by its covariance operator.
In particular, choosing the source noise distribution is equivalent to choosing a covariance spectrum $\{\lambda_k\}$, which fixes the regularity and high-frequency energy of the initial random field.

Therefore, in function-space generative modeling, the source noise is not a neutral default.
If the source noise is much rougher than the target measure, the induced generative dynamics must remove excessive high-frequency energy.
This produces spectral imbalance in the marginal velocity field and may lead to stiff dynamics.
Regularity-aware noise design aims to reduce this mismatch by choosing a source covariance spectrum whose decay is compatible with the target invariant measure.

\subsection{Noise Measure}
\subsubsection{Gaussian Measure}
Since we consider the infinite-dimensional case, unlike finite dimensions, not all translations preserve the measure. 
We need to consider those directions in $H$ along which translating a Gaussian measure does not change its essential nature (i.e., keeps it equivalent). 
Including the SPDE, these problems are also dealt with in Hilbert Space.
First, we have some definitions:
\begin{definition}
  Let $m_1, m_2$ be two measures on $H$.
  \begin{itemize}
    \item $m_1 \ll m_2$ means measure $m_1$ is absolutely continuous respect to $m_2$: if $m_1(A) = 0$ for all $A$ s.t. $m_2(A) = 0$. 
     This means that the support of $m_1$ is a subset of the support of $m_2$.
    \item If $m_1 \ll m_2, m_2 \ll m_1$, $m_1, m_2$ are said to be equivalent $m_1 \sim m_2$.
    \item If there exists a measurable set $A$ s.t. $m_1(A) = 0$ and $m_2(A^c) =1$, then $m_1, m_2$ are singular $m_1 \perp m_2$. 
  \end{itemize}
\end{definition}

\begin{theorem}[Radon-Nikodym Theorem]
  Let $S = (H,\mathcal{B}(H))$ be a measurable space. $m_1, m_2$ are two $\sigma$-finite measures on $S$. If $m_1 \ll m_2$, then there exists a measurable function $f:H\rightarrow [0, \infty)$ s.t. $m_1(A) = \int_A f m_2(du), \forall A \in \mathcal{B}(H)$.

  The function $f$ is called the Radon-Nikodym derivative of $m_1$ with respect to $m_2$ and is denoted by $f = \frac{dm_1}{dm_2}$.
\end{theorem}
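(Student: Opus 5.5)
The statement to prove is the Radon--Nikodym theorem: for $\sigma$-finite $m_1 \ll m_2$ on $(H,\mathcal{B}(H))$ there is a measurable $f\ge 0$ with $m_1(A)=\int_A f\,dm_2$. I will prove it by von Neumann's Hilbert-space argument, since it fits the $L^2$ framework used throughout this appendix.

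\textbf{Reduction to finite measures.} By $\sigma$-finiteness of both measures, choose a countable measurable partition $\{H_n\}$ of $H$ with $m_1(H_n)<\infty$ and $m_2(H_n)<\infty$; intersecting the two exhausting sequences gives such a partition. If a density $f_n$ is constructed on each $H_n$, then $f=\sum_n f_n\mathbf 1_{H_n}$ works by countable additivity and monotone convergence. So I assume from now on that both measures are finite.

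\textbf{Core construction.} Set $\lambda=m_1+m_2$, which is a finite measure. The functional $\Phi(g)=\int g\,dm_1$ is bounded on $L^2(\lambda)$, because Cauchy--Schwarz gives
\[
|\Phi(g)|\le \int |g|\,d\lambda\le \lambda(H)^{1/2}\|g\|_{L^2(\lambda)}.
\]
By the Riesz representation theorem there is $h\in L^2(\lambda)$ with $\int g\,dm_1=\int g h\,d\lambda$ for all $g$.

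\textbf{Bounds on $h$.} Testing with $g=\mathbf 1_A$ gives $m_1(A)=\int_A h\,d\lambda$. Taking $A=\{h<0\}$ and $A=\{h>1\}$ and comparing with $0\le m_1(A)\le\lambda(A)$ shows $0\le h\le 1$ $\lambda$-a.e., so I may modify $h$ to satisfy this everywhere.

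\textbf{Rearranging and using absolute continuity.} The representation rearranges to
\[
\int g(1-h)\,dm_1=\int g h\,dm_2 .
\]
Let $B=\{h=1\}$. Taking $g=\mathbf 1_B$ yields $m_2(B)=0$. Absolute continuity then forces $m_1(B)=0$, and this is the only place the hypothesis $m_1\ll m_2$ enters.

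\textbf{Defining the density.} Now take $g=(1+h+\dots+h^{n-1})\mathbf 1_A$, which is bounded and hence lies in $L^2(\lambda)$. The identity becomes
\[
\int_A(1-h^n)\,dm_1=\int_A h(1+\dots+h^{n-1})\,dm_2 .
\]
Let $n\to\infty$. On the left, $1-h^n\uparrow \mathbf 1_{B^c}$, and since $m_1(B)=0$ the left side tends to $m_1(A)$. On the right, the integrand increases to $f:=\frac{h}{1-h}\mathbf 1_{B^c}$. Monotone convergence on both sides gives $m_1(A)=\int_A f\,dm_2$, and $f$ is measurable and nonnegative, as required.

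\textbf{Main obstacle.} The delicate step is the last limit. It requires passing monotone convergence through the identity and handling the set $\{h=1\}$, where $f$ would be infinite. This is exactly where absolute continuity is needed. If I instead went through the Hahn decomposition and the supremum over $\{g:\int_A g\,dm_2\le m_1(A)\}$, the analogous hard point would be showing that the residual measure vanishes.
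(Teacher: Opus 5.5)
Your argument is correct: it is von Neumann's Hilbert-space proof of the Radon--Nikodym theorem. Each step holds up:
\begin{itemize}
\item the reduction to finite measures via a common partition;
\item boundedness of $g\mapsto\int g\,dm_1$ on $L^2(m_1+m_2)$;
\item the bounds $0\le h\le 1$;
\item the use of $m_1\ll m_2$ only to show $m_1(\{h=1\})=0$.
\end{itemize}
Your density is also finite everywhere, which matches the codomain $[0,\infty)$ in the statement.

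The paper gives no proof of this result. It quotes the theorem as standard background and uses it, for instance, to obtain the density $\rho_t$ in the proof of Theorem~\ref{thm:3_1}. So there is no competing argument to compare with.

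One small imprecision. On $B=\{h=1\}$ the integrand $h+h^2+\dots+h^n$ equals $n$ and diverges. So it increases to $f=\frac{h}{1-h}\mathbf 1_{B^c}$ only on $B^c$, not everywhere. Since you already showed $m_2(B)=0$, monotone convergence still gives $m_1(A)=\int_A f\,dm_2$. Alternatively, test with $g=(1+h+\dots+h^{n-1})\mathbf 1_{A\cap B^c}$ from the start and avoid the issue entirely.
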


\begin{remark*}
  In the finite-dimensional case, the Radon-Nikodym derivative is the probability density function. 
  Like $m_2$ is the Lebesgue measure, $m_1(dx) = p(x)dx$, then the Radon-Nikodym derivative is the probability density function $p(x)$. 
\end{remark*}

\begin{definition}[KL-divergence]
  The KL-divergence is actually related to the Radon-Nikodym derivative: if $m_1 \ll m_2$, then:
  \begin{equation}
    \operatorname{KL}(m_1||m_2) = \int_H \log\left(\frac{dm_1}{dm_2}(x)\right) dm_1(x) 
    = \int_H \log\left(f(x)\right) f(x)dm_2(x)
  \end{equation}
  This is the KL divergence between measures. We define the Radon-Nikodym derivative to be infinite if $m_1 \not\ll m_2$.
  So for $H=\mathbb{R}^d, dm_1 = p_1(x)dx, dm_2 = p_2(x)dx$, the KL-divergence can be written as:
  \begin{equation}
    \operatorname{KL}(m_1||m_2) = \int_{\mathbb{R}^d} \log\left(\frac{p_1(x)}{p_2(x)}\right) p_1(x) dx
  \end{equation}
  The Radon-Nikodym derivative captures local density ratios, and the KL divergence is their global average.
\end{definition}

Then we need to consider the characteristic functional of Gaussian measure, cause one can show that characteristic functions of two measures are identical, then the measures are identical as well:
\begin{definition}[Characteristic Functional]
  The characteristic functional of a measure $m$ is defined as:
  \begin{equation}
    \hat{m}(\lambda) = \int_H e^{i\langle \lambda, u\rangle_H} m(du)
  \end{equation}
\end{definition}
For Gaussian measure $m=N(\mu, \mathcal{C})$, we have the characteristic functional:
\begin{equation}
  \hat{m}(\lambda) = \int_H e^{i\langle \lambda, u\rangle_H} m(du) = e^{i\langle \lambda, \mu\rangle_H-\frac{1}{2}\langle \lambda, \mathcal{C}\lambda\rangle_H}, \qquad \lambda\in H
\end{equation}
So conversely, let $\mu\in H, \mathcal{C}$ is a trace class operator on $H$. Then there exists a unique probability measure $m$ on $H$ s.t. $\hat{m}(\lambda) = e^{i\langle \lambda, \mu\rangle_H-\frac{1}{2}\langle \lambda, \mathcal{C}\lambda\rangle_H}, \forall \lambda\in H$. Here $m=N(\mu, \mathcal{C})$. Meanwhile, we have $\int_H \|x\|^2_H m(dx) =\operatorname{Tr}(\mathcal{C}) + \|\mu\|^2_H$.

\begin{definition}[Reproducing Kernel Hilbert Space]
    The subspace $\mathcal{C}^{1/2}H$ is called the reproducing kernel of measure $N_{\mathcal{C}}$. If $\operatorname{Ker}(\mathcal{C})=0$, then $\mathcal{C}^{1/2}H$ is dense in $H$. 
\end{definition}
Then we have an isomorphism between $H$ and $L^2(H, N_{\mathcal{C}})$:
\begin{equation}
    f\in \mathcal{C}^{1/2}H \rightarrow W_f\in L^2(H, N_{\mathcal{C}}), \qquad W_f(x)=\langle \mathcal{C}^{-1/2}f, x\rangle_H,\quad x\in H
\end{equation}
for $\int_H W_f(x) W_g(x) N_{\mathcal{C}}(dx) = \langle f, g\rangle_{\mathcal{C}}, \quad \forall f, g\in H$.
\subsubsection{Camaron Martin Space}

Unlike the finite dimensional case, there is no natural Brownian Motion process in infinite dimensions. 
Here is why: For example, the space time white noise $W_t$ is defined on $H=L^2([0, 1])$, but it doesn't take values in $H$ a.s.
Since the white noise has covariance operator $I$, it is not trace class in $H$. 
So for Hilbert space $H$, we need to define the Brownian Motion process on $H$ by using the Cameron-Martin space $U$.
In other words, if we want to define the Diffusion Process on infinite dimensional space, 
we need to consider the direction in $H$ along which translating a measure preserves the absolutely continuity.
\begin{definition}[Cameron-Martin Space]
The Cameron-Martin space $U$ is defined as:
\begin{equation}
  U:=\{h\in H| m_h\ll m\},m_h(A)=T_h^\#m(A):= m(A-h),\forall A \in \mathcal{B}(H)
\end{equation} 
where $T_h$ is the translation operator: $T_h(u)=u + h$, $T_h^\#m$ is the push-forward measure of $m$ by $T_h$, 
\end{definition}
So the Cameron-Martin space $U$ is more regular than the original space $H$.

\begin{theorem}[Cameron-Martin Theorem]
  For $m=N(0, \mathcal{C})$, we have:
  \begin{equation}
    m_h\sim m \text{ if and only if } h\in \mathcal{C}^\frac{1}{2}H:=U
  \end{equation}
  Since $\mathcal{C}^{-1}$ is unbounded, $U$ is a proper subspace of $U = \mathcal{C}^{1/2}H$.
  The inner product is defined as:
\begin{equation}
  \langle \phi, \psi\rangle_U = \langle \mathcal{C}^{-1/2}\phi, \mathcal{C}^{-1/2}\psi\rangle_H
\end{equation}
\end{theorem}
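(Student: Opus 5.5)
The plan is to reduce the statement to a countable product of one-dimensional Gaussians by diagonalizing $\mathcal{C}$, and then to apply Kakutani's dichotomy for product measures. Since $\mathcal{C}$ is self-adjoint, nonnegative and trace class, the Karhunen--Lo\`eve representation \eqref{eq:hilbert_gaussian_series} gives an orthonormal eigenbasis $\{\phi_i\}$ with eigenvalues $\lambda_i\ge 0$. Under the coordinate map $x\mapsto(\langle x,\phi_i\rangle_H)_{i\ge1}$, the measure $m=N(0,\mathcal{C})$ becomes $\bigotimes_i N(0,\lambda_i)$. The translate $m_h$ becomes $\bigotimes_i N(h_i,\lambda_i)$ with $h_i=\langle h,\phi_i\rangle_H$. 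To justify this identification I will compare characteristic functionals: two Borel measures on $H$ with the same characteristic functional coincide, and both functionals are explicit.

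First I dispose of the kernel directions. If $\lambda_i=0$ and $h_i\neq 0$, the $i$-th factor of $m$ is $\delta_0$ and that of $m_h$ is $\delta_{h_i}$. Then $A=\{x:\langle x,\phi_i\rangle_H=0\}$ has $m(A)=1$ and $m_h(A)=0$, so the measures are singular. Such an $h$ also fails to lie in $\mathcal{C}^{1/2}H$, which is contained in $\overline{\operatorname{Ran}\,\mathcal{C}}=(\operatorname{Ker}\mathcal{C})^\perp$. Hence both sides of the equivalence are false, and we may restrict attention to the indices with $\lambda_i>0$.

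On these indices the one-dimensional factors are mutually equivalent. Their Hellinger affinity is a direct Gaussian computation:
\[
\rho_i=\int\sqrt{\frac{dN(h_i,\lambda_i)}{dN(0,\lambda_i)}}\,dN(0,\lambda_i)=\exp\!\Big(-\frac{h_i^2}{8\lambda_i}\Big).
\]
Kakutani's theorem says that an infinite product of pairwise equivalent probability measures is either equivalent to the other product (when $\prod_i\rho_i>0$) or singular to it (when $\prod_i\rho_i=0$). Here $\prod_i\rho_i>0$ holds exactly when $\sum_i h_i^2/\lambda_i<\infty$. That condition says precisely that $h=\mathcal{C}^{1/2}g$ with $g=\sum_i (h_i/\sqrt{\lambda_i})\phi_i\in H$, i.e.\ $h\in\mathcal{C}^{1/2}H$. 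This yields both directions simultaneously, and even the stronger dichotomy: $m_h\sim m$ or $m_h\perp m$. The formula for $\langle\cdot,\cdot\rangle_U$ is then just the norm $\sum_i h_i^2/\lambda_i=\|\mathcal{C}^{-1/2}h\|_H^2$ transported by $\mathcal{C}^{1/2}$.

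For the sufficiency direction I will also record the explicit density as a sanity check:
\[
\frac{dm_h}{dm}(x)=\exp\!\Big(W_{h}(x)-\tfrac12\|\mathcal{C}^{-1/2}h\|_H^2\Big),\qquad W_{h}(x)=\langle \mathcal{C}^{-1/2}h,x\rangle_H .
\]
Here $W_h$ is understood through the isometry $\mathcal{C}^{1/2}H\to L^2(H,N_{\mathcal{C}})$ recalled above, i.e.\ as the $L^2(m)$-limit of $\sum_{i\le n}h_i x_i/\lambda_i$. To verify the formula, I will compute the characteristic functional of this weighted measure and match it to $e^{i\langle\lambda,h\rangle_H-\frac12\langle\lambda,\mathcal{C}\lambda\rangle_H}$.

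The main obstacle is the necessity direction, i.e.\ the singularity when $\sum_i h_i^2/\lambda_i=\infty$. If invoking Kakutani is not acceptable, I would prove it by hand with the test functionals $T_n(x)=\sum_{i\le n}h_ix_i/\lambda_i$ and $s_n=\sum_{i\le n}h_i^2/\lambda_i\to\infty$. Under $m$, $T_n\sim N(0,s_n)$; under $m_h$, $T_n\sim N(s_n,s_n)$. By Chebyshev, the sets $A_n=\{T_n<s_n/2\}$ satisfy $m(A_n)\ge 1-4/s_n$ and $m_h(A_n)\le 4/s_n$. Passing to a subsequence with $\sum_k 1/s_{n_k}<\infty$ and applying Borel--Cantelli, the set $A=\liminf_k A_{n_k}$ has $m(A)=1$ and $m_h(A)=0$, which gives $m_h\perp m$ and in particular $m_h\not\ll m$.
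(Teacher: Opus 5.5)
The paper gives no proof of this statement. It is recalled in the appendix as the classical Cameron--Martin theorem, and the neighbouring Feldman--H\'ajek theorem is attributed to Da Prato--Zabczyk, so there is no in-paper argument to compare with.

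Your proof is correct and is essentially the standard textbook argument:
\begin{enumerate}
\item diagonalize $\mathcal{C}$;
\item identify $m$ and $m_h$ with $\bigotimes_i N(0,\lambda_i)$ and $\bigotimes_i N(h_i,\lambda_i)$;
\item eliminate the kernel directions with an explicit separating set;
\item apply Kakutani with affinities $e^{-h_i^2/(8\lambda_i)}$.
\end{enumerate}
Your test-functional argument with $T_n$, Chebyshev and Borel--Cantelli is also correct. It makes the singularity half self-contained, so Kakutani is only needed for equivalence, where your explicit density would also suffice. A useful by-product of the dichotomy ($m_h\sim m$ or $m_h\perp m$) is that it reconciles the paper's definition $U:=\{h: m_h\ll m\}$ with $U=\mathcal{C}^{1/2}H$ as used in the theorem.

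Two small repairs are needed.

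\textbf{The symbol in your density.} As written, $\langle\mathcal{C}^{-1/2}h,x\rangle_H$ is wrong. For $h\in\mathcal{C}^{1/2}H$ the vector $\mathcal{C}^{-1/2}h$ lies in $H$, so this is a genuine continuous functional equal to $\sum_i h_ix_i/\sqrt{\lambda_i}$. With it, the characteristic-functional check fails. The object you actually define, the $L^2(m)$-limit of $\sum_{i\le n}h_ix_i/\lambda_i$, is the right one (formally $\langle\mathcal{C}^{-1/2}h,\mathcal{C}^{-1/2}x\rangle_H$). The paper's recalled map $W_f(x)=\langle\mathcal{C}^{-1/2}f,x\rangle_H$ has the same slip: as written, $\int W_f^2\,dm=\|f\|_H^2$, not the Cameron--Martin norm. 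So do not cite it verbatim.

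\textbf{The proper-subspace clause.} You do not address the claim that $U$ is a proper subspace. It is printed as ``of $U$''; it must mean ``of $H$'', and it holds only when $\dim H=\infty$. In your coordinates this takes one line:
\begin{itemize}
\item If infinitely many $\lambda_i>0$, take $h=\sum_{\lambda_i>0}\sqrt{\lambda_i}\,\phi_i$. Then $h\in H$ because $\sum_i\lambda_i<\infty$, yet $\sum_i h_i^2/\lambda_i=\infty$, so $h\notin U$.
\item Otherwise $U$ is finite-dimensional.
\end{itemize}
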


This can be generalized to general Gaussian measure cases:
\begin{theorem}[Feldman-Hajek Theorem~\citep{Da_Prato_Zabczyk_2014}]\label{thm_FH}
  The following statements hold:\\
  1. $m_1 = N(\mu_1, \mathcal{C}_1), m_2 = N(\mu_2, \mathcal{C}_2)$ are either singular or equivalent.\\
  2. They are equivalent if and only if 
  \begin{itemize}
      \item $\mathcal{C}_1^{1/2}(H) = \mathcal{C}_2^{1/2}(H)=H_0$, $\mu_1 - \mu_2 \in H_0$
      \item $(\mathcal{C}_1^{-1/2}\mathcal{C}_2^{1/2})(\mathcal{C}_1^{-1/2}\mathcal{C}_2^{1/2})^*-I$ is a Hilbert-Schmidt operator.
  \end{itemize}
  Specifically, if $\mathcal{C}_1=\mathcal{C}_2=\mathcal{C}$, we have the Radon Nikodym derivative:
  \begin{equation} 
    \frac{dm_1}{dm_2}(u) = \exp{\left(\langle \mu_1 - \mu_2, \mathcal{C}^{-1}(u - \mu_2) \rangle_H - \frac{1}{2}\|\mathcal{C}^{-1/2}(\mu_1 - \mu_2)\|^2_H\right)}
  \end{equation}
  Therefore, the KL divergence between $m_1$ and $m_2$ is:
  \begin{equation}
    \operatorname{KL}[m_1|m_2] = \frac{1}{2}\left(\langle\Delta\mu, C^{-1}\Delta \mu\rangle_H\right)
  \end{equation}
\end{theorem}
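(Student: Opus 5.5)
The plan is to reduce everything to products of one-dimensional Gaussians and apply Kakutani's dichotomy for infinite product measures. Kakutani's theorem says that if $m_1=\bigotimes_k \nu_k$ and $m_2=\bigotimes_k \nu'_k$ with $\nu_k\sim\nu'_k$, then $m_1$ and $m_2$ are either equivalent or singular. They are equivalent exactly when the product of Hellinger affinities $\prod_k \int \sqrt{d\nu_k\,d\nu'_k}$ is strictly positive. This gives statement 1 at once, provided both measures can be written as products in a common coordinate system. Statement 2 then comes from evaluating the affinities. I would treat two building blocks separately, pure mean shifts and pure covariance changes, and combine them at the end.

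\emph{Mean shift, $\mathcal{C}_1=\mathcal{C}_2=\mathcal{C}$.} Take the eigenpairs $(\lambda_k,e_k)$ of $\mathcal{C}$. The coordinates $u_k=\langle u,e_k\rangle_H$ are independent, with law $N(\mu_{i,k},\lambda_k)$ under $m_i$. The affinity of $N(a,\lambda)$ and $N(b,\lambda)$ is $\exp(-(a-b)^2/(8\lambda))$. The infinite product is therefore positive iff $\sum_k (\mu_{1,k}-\mu_{2,k})^2/\lambda_k<\infty$, i.e.\ iff $\mu_1-\mu_2\in\mathcal{C}^{1/2}H$; this is the Cameron--Martin theorem recalled above. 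For the density, I would write the finite-dimensional ratio on the first $n$ coordinates,
\[
\exp\Bigl(\sum_{k\le n}\tfrac{\Delta\mu_k (u_k-\mu_{2,k})}{\lambda_k}-\tfrac12\sum_{k\le n}\tfrac{\Delta\mu_k^2}{\lambda_k}\Bigr).
\]
This is an $m_2$-martingale. Its exponent is a Gaussian series with summable variance $\|\mathcal{C}^{-1/2}\Delta\mu\|_H^2$, so it converges in $L^2(m_2)$ to the white-noise map $W_{\mathcal{C}^{-1/2}\Delta\mu}$ introduced earlier; this is how I would make the term $\langle \Delta\mu,\mathcal{C}^{-1}(u-\mu_2)\rangle_H$ rigorous. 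For the KL formula, I would integrate the logarithm of this density against $m_1$. Under $m_1$, $u-\mu_2=\Delta\mu+(u-\mu_1)$ and $u-\mu_1$ is centered, so the linear term has mean $\|\mathcal{C}^{-1/2}\Delta\mu\|_H^2$. Subtracting the constant half of that gives $\tfrac12\langle\Delta\mu,\mathcal{C}^{-1}\Delta\mu\rangle_H$. This KL formula holds only in the equal-covariance case, which is the case the statement specifies.

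\emph{Covariance change, centered measures.} First I would show that equivalence forces $\mathcal{C}_1^{1/2}H=\mathcal{C}_2^{1/2}H$. If the ranges differ, one can build a direction $h$ whose coordinate functional has finite variance under one measure and infinite variance under the other; a Kakutani-type argument along a sequence of such directions then yields singularity. When the ranges agree, the closed graph theorem makes $T=\mathcal{C}_1^{-1/2}\mathcal{C}_2^{1/2}$ bounded with bounded inverse. Then $S=TT^*-I$ is self-adjoint. If $S$ is Hilbert--Schmidt, it is in particular compact, so it has an orthonormal eigenbasis $f_k$ with eigenvalues $t_k>-1$. I would use the whitened coordinates $\xi_k(u)=W_{\mathcal{C}_1^{1/2}f_k}(u)$. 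These are i.i.d.\ $N(0,1)$ under $m_1$ and independent $N(0,1+t_k)$ under $m_2$. The affinity of $N(0,1)$ and $N(0,1+t)$ is $\bigl(2\sqrt{1+t}/(2+t)\bigr)^{1/2}=1-O(t^2)$, so the product is positive iff $\sum_k t_k^2<\infty$, which is exactly the Hilbert--Schmidt condition. Conversely, if $S$ is not Hilbert--Schmidt, I would extract an orthonormal family on which the quadratic form of $S$ is not square-summable and run the same product argument to get singularity. The general case follows by comparing $m_1$ and $m_2$ through the intermediate measure $N(\mu_1,\mathcal{C}_2)$, using transitivity of equivalence and the dichotomy.

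I expect the main obstacle to be the covariance-change block. Specifically, the hard part is showing rigorously that the $\xi_k$ are well defined $m_2$-a.s.\ and jointly independent under both measures, which needs the range equality and the isometry $f\mapsto W_f$. The necessity direction, constructing singular sets when $S$ fails to be Hilbert--Schmidt or the ranges differ, is the other delicate point. Here I would follow the standard treatment in \citet{Da_Prato_Zabczyk_2014} instead of re-deriving every measurability detail.
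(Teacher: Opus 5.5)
The paper gives no proof of this statement; it is quoted from Da Prato--Zabczyk, and your Kakutani-based plan is the standard route. Your mean-shift block is correct: the equal-variance affinity $\exp(-(a-b)^2/(8\lambda))$, the martingale construction of $dm_1/dm_2$ with the exponent realised through $W_{\mathcal{C}^{-1/2}\Delta\mu}$, and the KL computation, rightly restricted to $\mathcal{C}_1=\mathcal{C}_2$. So is the centered affinity $\bigl(2\sqrt{1+t}/(2+t)\bigr)^{1/2}=1-O(t^2)$.

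\emph{The gap: the general case.} The genuine gap is the final reduction through $m_3=N(\mu_1,\mathcal{C}_2)$. Transitivity settles three of the four cases. If $m_1\perp m_3$ and $m_3\perp m_2$, nothing follows, because singularity is not transitive. You cannot skip this case. Statement~1 needs a conclusion there. The necessity half of statement~2 must also exclude it, since $m_1\sim m_2$ together with $m_1\perp m_3$ forces $m_3\perp m_2$. The missing idea is that a mean shift cannot repair incompatible covariances. Here is one way to show it. If $m_1,m_2$ are not singular, their Hellinger affinity $\rho=\int\sqrt{dm_1\,dm_2}$ is positive, so $\rho(m_1\otimes m_1,m_2\otimes m_2)=\rho^2>0$. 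The images of these product measures under $(x,y)\mapsto(x-y)/\sqrt2$ are $N(0,\mathcal{C}_1)$ and $N(0,\mathcal{C}_2)$. Their affinity is at least $\rho^2$, because affinity cannot decrease under a measurable map. So the centered pair is not singular, hence equivalent by your centered dichotomy. Then $m_1\sim m_3$, and the mean block decides between $m_1\sim m_2$ and $m_1\perp m_2$.

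\emph{Smaller repairs in the covariance block.}
First, the white-noise map as you use it in the mean block is $W_f(x)=\langle\mathcal{C}^{-1/2}f,x\rangle_H$, an isometry from $H$. With that map, $W_{\mathcal{C}_1^{1/2}f_k}(u)$ is just $\langle f_k,u\rangle_H$. Its variance under $m_1$ is $\langle\mathcal{C}_1f_k,f_k\rangle_H$, not $1$. The whitened coordinates are $\xi_k=W^{(1)}_{f_k}$, using the white-noise map of $N(0,\mathcal{C}_1)$, formally $\langle\mathcal{C}_1^{-1/2}f_k,u\rangle_H$. With these, $\operatorname{Cov}_{m_2}(\xi_j,\xi_k)=\langle TT^*f_j,f_k\rangle_H=(1+t_k)\delta_{jk}$.

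Second, Kakutani only gives equivalence of the laws of $(\xi_k)_{k\ge1}$. To lift this to $m_1\sim m_2$, you must show that $u$ is the same measurable function of $(\xi_k)$ almost surely under both measures. For instance, take $u=\sum_k\xi_k\,\mathcal{C}_1^{1/2}f_k$. This series converges in $L^2(m_i;H)$ for $i=1,2$, because $\sum_k(1+t_k)\|\mathcal{C}_1^{1/2}f_k\|_H^2\le(1+\|S\|)\operatorname{Tr}\mathcal{C}_1$.

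Third, when $S$ is not Hilbert--Schmidt, the orthonormal family you extract must also satisfy $\langle Sf_j,f_k\rangle_H=0$ for $j\neq k$. Otherwise the $\xi_k$ are not independent under $m_2$, and Kakutani does not apply. If $S$ is compact, take its eigenvectors. Otherwise choose $\varepsilon>0$ such that the spectral subspace of $S$ for $(\varepsilon,\infty)$ or for $(-\infty,-\varepsilon)$ is infinite-dimensional. Then pick each $f_{n+1}$ in that subspace orthogonal to $f_1,\dots,f_n,Sf_1,\dots,Sf_n$. This gives $|\langle Sf_k,f_k\rangle_H|\ge\varepsilon$ with vanishing off-diagonal terms.
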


First, we assume that the Cylindrical Wiener Process $\hat{W}_t$ on $H$, but it does not take values in $H$ cause $I$ is not a trace class.
Here we suppose $\hat{W}_t$ is $H_1$-valued:
\begin{equation}
  \hat{W}_t = \sum_{i=1}^{\infty} \beta_i(t) \phi_i
\end{equation}
where $\beta_i(t)$ are independent standard Brownian motions and $\phi_i$ are the eigenbasis of $H$.
Normally, we can define the $\mathcal{C}$-Wiener Process on $H$ as:
\begin{definition}[$\mathcal{C}$-Wiener Process]
  A H-valued process $W_t$ is called $\mathcal{C}$-Wiener process if:
  1. $W_0 = 0$
  2. $W_t$ has continuous trajectories: $\mathbb{R}^+\rightarrow H$
  3. $W_t$ has independent Gaussian increments: $W_t - W_s \sim N(0, (t-s)\mathcal{C})$
\end{definition}
Therefore by applying the covariance operator $\mathcal{C}$ to cylindrical Wiener Process $\hat{W}_t$, we can define the $H$-valued $\mathcal{C}$-Wiener process $W_t$ as:
\begin{equation}
  W_t = \sum_{i=1}^{\infty} \sqrt{\lambda_i} \beta_i(t) \phi_i = \mathcal{C}^{1/2}\hat{W}_t
\end{equation}
where $\lambda_i$ are the eigenvalues of $\mathcal{C}$. 
So $H$ is actually the Cameron-Martin space of $H_1$.

To sum up, in infinite dimensional diffusion model, suppose we have $u\sim m_0, \eta\sim m_1$, where $m_0$ is the initial measure and $m_1$ is the target measure, all defined on $H$.
In the diffusion process, we add noise process $\eta$ to $u$, getting $v = u + \eta$. Then $v\sim \nu=m_0 * m_1$. Here we need to do some assumptions on measure $m_0$ and $m_1$: 
$m_0$ needs to be supported on the Cameron-Martin space $U$ of $m_1$, that is $m_0 \ll m_1$, $m_0$ should be more regular than $m_1$. 
Only in this way, we can define the Radon-Nikodym derivative of $\nu$ with respect to $m_1$, and the perturbed measure $\nu$ is equivalent to $m_1$.

Normally, to guarantee the diffusion progress, we have two choices: 1. The support of $m_{data}$ is contained in the Cameron-Martin space of $N(0, \mathcal{C}_{data})$ 2. Choose a large space $K$, s.t. the support of $m_{data}$ and $N(0, \mathcal{C})$ are supported on $K$.

\begin{definition}
    If A is a linear operator from $\mathcal{D}(A) \subset H$ to Hilbert space $H$, with an orthonormal basis of eigenfunctions $\{\phi_j\}$ 
    and corresponding increasing eigenvalues $\{\lambda_j\}$, 
    then $A^{\alpha}$ is defined as:
    \begin{equation}
        A^{\alpha}u = \sum_{j=1}^\infty \lambda_j^\alpha \langle u, \phi_j\rangle \phi_j
    \end{equation}
    and the domain $\mathcal{D}(A^{\alpha})$ is the set of all $u\in H$ such that $A^{\alpha}u\in H$.
\end{definition}

\begin{definition}[Absolute continuity of measures]\label{apdx_def}
Let $(X,\mathcal{F})$ be a measurable space, and let $\mu$ and $\nu$ be two measures on $(X,\mathcal{F})$.
We say that $\nu$ is \emph{absolutely continuous} with respect to $\mu$, and write
\[
\nu \ll \mu,
\]
if for every measurable set $A \in \mathcal{F}$, $\mu(A) = 0 \quad \Longrightarrow \quad \nu(A) = 0.$
\end{definition}

\begin{definition}\label{FourierTransform}
The Fourier transform of $u(x)$ in this paper is defined like this:
	\begin{equation}\left\{
		\begin{aligned}
			& \mathcal{F}u(k) = \int_{\mathbb{R}^d} u(x) e^{-ikx} dx,\\
			& \mathcal{F}^{-1}u(x) = \frac{1}{(2\pi)^{d}}\int_{\mathbb{R}^d} u(k) e^{ikx} dk.
		\end{aligned}\right.
	\end{equation}
\end{definition}

\subsubsection{Random Fields on $\mathbb{R}^d$}\label{apdx_sec_matern}
In this section, background knowledge on random fields is first reviewed, including their formulation, representation, and numerical treatment, with an emphasis on stationary random fields. Subsequently, more general classes of random fields beyond the Mat\'ern family are discussed, including non-stationary random fields that exhibit anisotropy or temporal dependence.
\subsubsection{Stationary Random Fields}\label{apdx:srf}
First we define stationary random field on $\mathbb{R}^d$ as:
\begin{definition}[Stationary Random Field]
  A second-order random field ${u(x): x\in D}$ is called stationary if the mean is constant and covariance function 
  depends only on the difference $x-y$, i.e. $\mu(x) = \mu,\ C(x, y) = C(x-y)$.
\end{definition}
Then we can define the covariance operator $\mathcal{C}$ as:
\begin{equation}
  \mathcal{C}\phi = \int_{\mathbb{R}^d} C(x-y)\phi(y)dy
\end{equation}
We find that it is actually the convolution operator of $C(x)$ with $\phi(x)$.
Stationary random fields have some beautiful properties.
\begin{theorem}[Wiener-Khinchin Theorem]
    There exists a stationary random field ${u(x): x\in D}$ with mean $\mu$ and covariance function $c(x)$ that is mean square continuous if and only if 
    the function $c(x): \mathbb{R}^d\rightarrow \mathbb{R}$ is such that 
    \begin{equation}
        c(x) = \frac{1}{(2\pi)^{d}}\int_{\mathbb{R}^d} e^{ik \cdot x}dF(k)=\frac{1}{(2\pi)^{d}}\int_{\mathbb{R}^d} e^{ik \cdot x}S(k)dk = \left(\mathcal{F}^{-1}S\right)(x)
    \end{equation}
    where $F(k)$ is some measure on $\mathbb{R}^d$ called spectral distribution and $\hat{S}(x)$ is the Fourier transform of $S(k)$, 
	called spectral density.
    Reversely, $S(k) = \left(\mathcal{F}c\right)(k) = \hat{c}(k)$.
    If $S(k)$ is non-negative and integrable, then $c(x)$ is a valid covariance function.
\end{theorem}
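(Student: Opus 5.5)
The plan is to reduce the statement to Bochner's theorem, which characterizes continuous positive-definite functions on $\mathbb{R}^d$ as Fourier transforms of finite nonnegative measures. Throughout I use the Fourier convention of Definition~\ref{FourierTransform}, which accounts for the factor $(2\pi)^{-d}$. I treat the two implications separately and handle the spectral-density case as a corollary.

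\emph{Necessity.} Let $u$ be a mean-square continuous stationary field with covariance $c(x-y)=\operatorname{Cov}(u(x),u(y))$.
\begin{itemize}
\item Positive definiteness: for points $x_1,\dots,x_n$ and complex coefficients $a_1,\dots,a_n$,
\[
\sum_{i,j}a_i\bar a_j\,c(x_i-x_j)=\mathbb{E}\Bigl|\sum_i a_i\bigl(u(x_i)-\mu\bigr)\Bigr|^2\ge 0 .
\]
\item Continuity of $c$: stationarity gives $\mathbb{E}|u(x+h)-u(x)|^2=2\bigl(c(0)-c(h)\bigr)$, so mean-square continuity makes $c$ continuous at $0$. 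Cauchy--Schwarz applied to $c(x+h)-c(x)=\operatorname{Cov}\bigl(u(x+h)-u(h),\,u(0)\bigr)$ then gives continuity everywhere.
\item Conclusion: Bochner's theorem supplies a finite nonnegative measure $F$ with $c(x)=(2\pi)^{-d}\int e^{ik\cdot x}\,dF(k)$.
\end{itemize}

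\emph{Sufficiency.} Given such an $F$, define $c$ by the displayed formula.
\begin{itemize}
\item $c$ is continuous by dominated convergence, since $F$ is finite.
\item $c$ is positive definite, because
\[
\sum_{i,j}a_i\bar a_j\,c(x_i-x_j)=(2\pi)^{-d}\int\Bigl|\sum_j a_j e^{ik\cdot x_j}\Bigr|^2 dF(k)\ge 0 .
\]
\item $c$ is real and even once $F$ is symmetric, which we may assume for a real-valued field by symmetrizing $F$.
\item Construction: all finite-dimensional matrices $\bigl(c(x_i-x_j)\bigr)_{ij}$ are valid covariance matrices. Kolmogorov's extension theorem therefore yields a Gaussian field with mean $\mu$ and covariance $c$. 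It is stationary by construction and mean-square continuous by the identity $\mathbb{E}|u(x+h)-u(x)|^2=2\bigl(c(0)-c(h)\bigr)$.
\end{itemize}

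\emph{Density case and the reverse formula.} If $S\ge0$ is integrable, then $dF=S\,dk$ is a finite measure, so $c$ is a valid covariance by the sufficiency direction. For the reverse identity $S=\mathcal{F}c$, I would first note that when $c$ is also integrable, Fourier inversion gives $S=\hat c$ pointwise wherever $S$ is continuous. In general the identity holds in the sense of tempered distributions, by uniqueness of the Fourier transform of finite measures.

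\emph{Main obstacle.} The substantive step is Bochner's theorem itself, used in the necessity direction. I would prove it along these lines:
\begin{itemize}
\item Regularize: multiply $c$ by a Gaussian $e^{-\epsilon|x|^2}$. Products of positive-definite functions are positive definite (Schur), and the product is integrable.
\item Show its Fourier transform $S_\epsilon$ is nonnegative, by approximating $\int\!\!\int \phi(x)\bar\phi(y)\,c_\epsilon(x-y)\,dx\,dy\ge0$ with Riemann sums and choosing $\phi=e^{ik\cdot x}$ times a wide Gaussian.
\item Bound the total mass uniformly: $(2\pi)^{-d}\int S_\epsilon\,dk=c(0)$.
\item Pass to the limit: extract a weakly convergent subsequence $S_\epsilon\,dk\to F$. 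Continuity of $c$ at $0$ gives tightness via L\'evy's continuity argument, and this identifies the limit as the spectral measure of $c$.
\end{itemize}
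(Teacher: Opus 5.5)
The paper gives no proof of this statement. It is quoted in the appendix as classical background and later invoked, e.g.\ in the Mat\'ern covariance derivation. So there is no competing route to compare with. Your argument is the standard one and is sound:
\begin{itemize}
\item necessity from positive definiteness and continuity of $c$ plus Bochner;
\item sufficiency by Kolmogorov extension of a Gaussian field;
\item Bochner itself via Gaussian regularization and L\'evy tightness.
\end{itemize}

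The useful comparison is with the statement itself. Your necessity direction yields only a finite nonnegative measure $F$, and that is all one can get. You should say explicitly that the middle equality in the display, a density $S$ with $c=\mathcal{F}^{-1}S$, is not part of the equivalence. For example, $u(x)=\xi\cos(k_0\cdot x)+\eta\sin(k_0\cdot x)$ with $\xi,\eta$ i.i.d.\ $\mathcal{N}(0,1)$ is stationary and mean-square continuous with $c(x)=\cos(k_0\cdot x)$. Its spectral measure $\tfrac{(2\pi)^d}{2}(\delta_{k_0}+\delta_{-k_0})$ has no density. Two further readings are needed:
\begin{itemize}
\item ``some measure'' must mean ``finite nonnegative measure'';
\item you are tacitly taking $D=\mathbb{R}^d$; for a proper subset $D$, necessity only constrains $c$ on $D-D$.
\end{itemize}

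There are also three small repairs.
\begin{itemize}
\item \emph{Continuity slip.} $\operatorname{Cov}(u(x+h)-u(h),u(0))=c(x+h)-c(h)$, not $c(x+h)-c(x)$. Use $\operatorname{Cov}(u(x+h)-u(x),u(0))=c(x+h)-c(x)$, and Cauchy--Schwarz then gives $|c(x+h)-c(x)|\le\sqrt{2c(0)\,(c(0)-c(h))}$.
\item \emph{Integrability of $S_\epsilon$.} In the Bochner sketch, $(2\pi)^{-d}\int S_\epsilon\,dk=c(0)$ is not plain Fourier inversion, because $S_\epsilon\in L^1$ is not yet known. Write $(2\pi)^{-d}\int S_\epsilon(k)e^{-\delta|k|^2}\,dk=\int c_\epsilon(x)g_\delta(x)\,dx$ with $g_\delta$ the heat kernel. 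Let $\delta\to0$, using boundedness and continuity of $c_\epsilon$ on the right and monotone convergence ($S_\epsilon\ge0$) on the left. Once $S_\epsilon\in L^1$, $L^1$ inversion gives $c_\epsilon=\mathcal{F}^{-1}S_\epsilon$ everywhere, since both sides are continuous; your L\'evy step needs this. Tightness is uniform in $\epsilon$ because $c_\epsilon(0)-c_\epsilon(x)\le|c(0)-c(x)|+c(0)\,\epsilon|x|^2$.
\item \emph{Symmetry of $F$.} No symmetrization is needed. If $c$ is real, then $F(-A)=F(A)$ by uniqueness of Fourier--Stieltjes transforms. Conversely, in the density case $c$ is real only when $S$ is even a.e.; otherwise your construction produces the covariance of a complex-valued field.
\end{itemize}
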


\begin{theorem}[Spectral Density of Random Field]\label{spectral_density_random_field}
	Assume $u(x)$ has zero mean, then 
\begin{equation}\label{spectraldensity}
	S_u(k)=\frac{1}{(2\pi)^{d}}\mathbb{E}[|\hat{u}(k)|^2]
\end{equation}
\end{theorem}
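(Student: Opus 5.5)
We prove the identity for the Fourier transform $\hat u(k)=\int_{\mathbb{R}^d}u(x)e^{-ik\cdot x}\,dx$ of a zero-mean stationary field by expanding $|\hat u(k)|^2$ as a double integral and using the Wiener--Khinchin theorem to identify the result with $S_u(k)=\hat c(k)$. Since $\hat u(k)$ does not converge for a genuinely stationary field on all of $\mathbb{R}^d$, we first work on a truncation. For a box $D_R=[-R,R]^d$ we set $\hat u_R(k)=\int_{D_R}u(x)e^{-ik\cdot x}\,dx$. We read the statement as the normalized limit
\[
S_u(k)=\lim_{R\to\infty}\frac{\mathbb{E}|\hat u_R(k)|^2}{|D_R|}.
\]
Here $|D_R|$ plays the role of $(2\pi)^d$ times the formal $\delta(0)$ factor in the convention of Definition~\ref{FourierTransform}. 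Under periodic boundary conditions the analogous statement holds with $|D|$ in place of $(2\pi)^d$, and our prefactor convention has to be matched to this.

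\textbf{Step 1.} We write
\[
|\hat u_R(k)|^2=\int_{D_R}\int_{D_R}u(x)\overline{u(y)}\,e^{-ik\cdot(x-y)}\,dx\,dy .
\]
Mean-square continuity gives $\mathbb{E}\int\int|u(x)u(y)|\,dx\,dy\le |D_R|^2c(0)<\infty$, so Fubini lets us exchange expectation and integration. Stationarity with zero mean gives $\mathbb{E}[u(x)u(y)]=c(x-y)$.

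\textbf{Step 2.} We substitute $z=x-y$. This yields
\[
\mathbb{E}|\hat u_R(k)|^2=\int_{\mathbb{R}^d}c(z)\,e^{-ik\cdot z}\,\lambda_R(z)\,dz,
\]
where $\lambda_R(z)=|D_R\cap(D_R+z)|=\prod_{j=1}^d(2R-|z_j|)_+$ is the overlap volume. Dividing by $|D_R|$ produces the Bartlett-type weight $\lambda_R(z)/|D_R|$, which lies in $[0,1]$ and tends to $1$ pointwise as $R\to\infty$.

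\textbf{Step 3.} We pass to the limit by dominated convergence, with dominating function $|c(z)|$. This step needs $c\in L^1(\mathbb{R}^d)$, which holds for the Matérn covariance because $K_\nu$ decays exponentially. The limit is $\int_{\mathbb{R}^d}c(z)e^{-ik\cdot z}\,dz=\hat c(k)$, and by the Wiener--Khinchin theorem stated above this equals $S_u(k)$.

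The main obstacle is making the statement rigorous rather than the computation itself. As written, the identity has $\hat u(k)$ undefined and the constant $(2\pi)^{-d}$ convention-dependent. We expect to spend most of the effort fixing the truncation or periodic interpretation and checking that the normalization constant agrees with Definition~\ref{FourierTransform}. If $c\notin L^1$, we would instead state the result for the spectral measure $F$ tested against smooth functions.
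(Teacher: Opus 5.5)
The paper gives no proof of this statement. It appears in the appendix without argument and is later invoked as ``the definition of spectral density'' in the Mat\'ern derivation. There it is applied formally with $\mathbb{E}|\hat W(k)|^2=(2\pi)^d$ to obtain $S_W\equiv 1$.

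Your truncation argument is the standard rigorous route: a periodogram limit with the Bartlett weight $\lambda_R/|D_R|$. Steps 1--3 are correct under stationarity and $c\in L^1(\mathbb{R}^d)$. What it gains is an honest, well-defined statement. What it gives up is the displayed constant. On $\mathbb{R}^d$ one formally has $\mathbb{E}|\hat u(k)|^2=(2\pi)^d\delta(0)\,S_u(k)$, so the paper's identity silently sets $\delta(0)=1$. No limit of $|D_R|^{-1}\mathbb{E}|\hat u_R(k)|^2$ will reproduce $(2\pi)^{-d}$ literally, so on $\mathbb{R}^d$ your result is the correct replacement for the statement rather than a proof of it.

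The obstacle you single out is closed by your own periodic remark. Take $D=\mathbb{T}^d=[0,2\pi]^d$ and $\hat u(k)=\int_{\mathbb{T}^d}u(x)e^{-ik\cdot x}\,dx$ with $k\in\mathbb{Z}^d$. Since $c$ and $e^{-ik\cdot z}$ are $2\pi$-periodic, the overlap weight in your Step 2 is identically $(2\pi)^d$. Hence $\mathbb{E}|\hat u(k)|^2=(2\pi)^d\int_{\mathbb{T}^d}c(z)e^{-ik\cdot z}\,dz$ exactly, with no limit and no decay assumption on $c$. Because $|D|=(2\pi)^d$, this is precisely the paper's normalization. It also yields $\mathbb{E}|\hat W(k)|^2=\|e^{ik\cdot x}\|_{L^2(\mathbb{T}^d)}^2=(2\pi)^d$, which is what the paper uses for white noise.

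The price is that $S_u(k)$ is then the Fourier coefficient of the periodic covariance. The paper, however, subsequently inverts over $\mathbb{R}^d$ via $c(0)=(2\pi)^{-d}\int_{\mathbb{R}^d}S_u(k)\,dk$. That inconsistency belongs to the paper's formal usage, not to your argument.
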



By defining the pseudo-differential operators, the class of SPDEs is defined by:
\begin{equation}
	\mathcal{L}_gu = W, \mathcal{L}_g = \mathcal{F}^{-1}g\mathcal{F}
\end{equation}
where $g:\mathbb{R}^d\rightarrow \mathbb{C}$ must be a sufficiently regular and Hermitian-symmetric function, that is it must satisfy: $g(k) = \overline{g(-k)}$, $\overline{\cdot}$ denotes the complex conjugate.
So if we have $\mathcal{L}_gu = W$, then:
\begin{equation}
	u=\mathcal{L}_{\frac{1}{g}}W
\end{equation}

\begin{theorem}
	The spectral density of $\mathcal{L}_gu$ and of $u$ are related by:
	\begin{equation}
		S_{\mathcal{L}_gu}(k) = \left|g(k)\right|^2S_u(k)
	\end{equation}
	Generally, if 
	\begin{equation}\label{SPDEGeRF}
		\mathcal{L}_gu = w
	\end{equation}
	where $w$ is a GeRF source term, then $S_w(k) = \left|g(k)\right|^2S_u(k)$.
	Therefore, when $w = W$, $S_u =\frac{1}{(2\pi)^{d}\left|g(k)\right|^2}\mathbb{E}[|\hat{W}(k)|^2]=\frac{1}{\left|g(k)\right|^2}$. 
  Then, 
\begin{equation}
	u(x) =\mathcal{L}_{\frac{1}{g}}w(x) = \mathcal{L}_{\sqrt{\frac{S_u}{S_w}}}w(x)
\end{equation}
\end{theorem}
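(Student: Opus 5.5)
The plan is to work entirely on the Fourier side, where $\mathcal{L}_g=\mathcal{F}^{-1}g\mathcal{F}$ acts as multiplication, and to use Theorem~\ref{spectral_density_random_field}, $S_u(k)=(2\pi)^{-d}\mathbb{E}[|\hat u(k)|^2]$, as the definition of spectral density. Since $\mathcal{F}(\mathcal{L}_g u)(k)=g(k)\hat u(k)$ and $g$ is deterministic, we get
\[
S_{\mathcal{L}_g u}(k)=(2\pi)^{-d}\mathbb{E}\bigl[|g(k)\hat u(k)|^2\bigr]=|g(k)|^2\,(2\pi)^{-d}\mathbb{E}\bigl[|\hat u(k)|^2\bigr]=|g(k)|^2S_u(k).
\]
If $\mathcal{L}_g u=w$ for a general GeRF source $w$, substituting $S_w=S_{\mathcal{L}_g u}$ gives $S_w(k)=|g(k)|^2S_u(k)$. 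Hermitian symmetry $g(k)=\overline{g(-k)}$ guarantees that $\mathcal{L}_g$ maps real fields to real fields, so all objects stay real-valued.

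For white noise $w=W$, I will use the standard normalization $\mathbb{E}[\hat W(k)\overline{\hat W(k')}]=(2\pi)^d\delta(k-k')$, which means $S_W\equiv 1$. The previous identity then yields $S_u=1/|g|^2$. Inverting the multiplier gives $u=\mathcal{L}_{1/g}W$, at least wherever $g\neq 0$; this is where regularity of $g$ is used.

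For the last formula, $u=\mathcal{L}_{\sqrt{S_u/S_w}}w$, note that $\sqrt{S_u/S_w}=1/|g|$ differs from $1/g$ only by a unimodular phase $e^{i\theta(k)}$ with $\theta(-k)=-\theta(k)$. I would then argue as follows. The field $\mathcal{L}_{1/|g|}w$ is a centered Gaussian stationary field with spectral density $S_w/|g|^2=S_u$. By the Wiener--Khinchin theorem it therefore has the same covariance as $u$, and hence the same law, since both fields are Gaussian. So this last equality should be read as equality in distribution, or pathwise when $g>0$ is real, as in the Mat\'ern case $g=(\kappa^2+\|k\|^2)^{\alpha/2}$.

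The main obstacle is rigor. White noise, and rough $u$, have no pointwise Fourier transform, so $\mathbb{E}|\hat u(k)|^2$ is formal. I would handle this in one of two ways:
\begin{itemize}
\item interpret all fields as generalized random fields tested against Schwartz functions $\phi$, and compute $\mathbb{E}|\langle \mathcal{L}_g u,\phi\rangle|^2=(2\pi)^{-d}\int |g|^2S_u|\hat\phi|^2\,dk$ via Plancherel, which identifies the spectral measure; or
\item work on the periodic torus, where $\hat u(k)$ are genuine random coefficients and the computation above is literal, then pass to the limit.
\end{itemize}
Either route also makes the delta normalization of $\hat W$ precise.
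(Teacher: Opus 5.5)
Your proposal is correct and follows the paper's route. The paper gives no separate proof, but both the statement itself and its derivation of the Mat\'ern covariance rest on exactly your formal computation: $\widehat{\mathcal{L}_g u}=g\,\hat u$, combined with $S_u=(2\pi)^{-d}\mathbb{E}|\hat u|^2$ and $(2\pi)^{-d}\mathbb{E}|\hat W|^2=1$.

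Your test-function/torus rigorization goes beyond the paper, as does your remark that $\mathcal{L}_{1/g}w=\mathcal{L}_{\sqrt{S_u/S_w}}w$ holds pathwise only for real positive $g$, the Mat\'ern case the paper tacitly assumes. One refinement: if the GeRF source $w$ is non-Gaussian, that last equality can only be read as equality of covariances, since equal spectra no longer imply equal laws.
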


Then consider the exitence of the function.
\begin{theorem}
	Let $w(x)$ be a real stationary GeRF over $\mathbb{R}^d$, and let $g:\mathbb{R}^d\rightarrow \mathbb{C}$ be a symbol function. 
	Then for Equ.~\eqref{SPDEGeRF}, there exists a unique stationary solution $u(x)$ if and only if:
	there exists $N\in \mathbb{N}$ s.t. 
	\begin{equation}
		\int_{\mathbb{R}^d}\frac{dS_w(k)}{\left|g(k)\right|^2(1+\|k\|^2)^N} < \infty
	\end{equation}
	and 
	\begin{equation}
		S_u(k) = \left|g(k)\right|^{-2}S_w(k)
	\end{equation}
	Moreover, $S_u(k)$ is unique if and only if $\left|g\right|>0$.
\end{theorem}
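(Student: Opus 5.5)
The statement is the existence/uniqueness criterion for stationary solutions of $\mathcal{L}_g u = w$. Throughout, $w$ is a stationary generalized random field with spectral measure $S_w$. The plan is to work entirely on the spectral side, viewing a stationary generalized random field as determined by its spectral measure. A stationary solution $u$ exists exactly when there is a tempered nonnegative measure $S_u$ with $|g|^2 S_u = S_w$ as measures. The previous theorem already provides the forward relation $S_{\mathcal{L}_g u} = |g|^2 S_u$, so I will use it throughout.

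\textbf{Necessity.} Suppose a stationary solution $u$ exists. By the Bochner--Schwartz theorem for stationary generalized random fields, its spectral measure $S_u$ is a tempered (slowly increasing) measure. That is, there is $N$ with $\int (1+\|k\|^2)^{-N}\,dS_u(k)<\infty$. Applying the preceding theorem to $\mathcal{L}_g u = w$ gives $dS_w = |g|^2\,dS_u$. Where $g\neq 0$ this means $dS_u = |g|^{-2}\,dS_w$, and the tempered integrability of $S_u$ becomes exactly the stated condition.

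\textbf{Sufficiency.} Given the integrability condition, I will define $S_u := |g|^{-2}S_w$. Equivalently, I set $u = \mathcal{L}_{1/g} w$ through the spectral (stochastic Fourier) representation $u = \int e^{ik\cdot x} g(k)^{-1}\, dZ_w(k)$, where $Z_w$ is the orthogonal random measure of $w$. The integrability condition guarantees that this stochastic integral defines a tempered generalized random field. To see this, test against a Schwartz function $\varphi$: the variance $\int |\hat\varphi|^2 |g|^{-2}\,dS_w$ is finite because $\hat\varphi$ decays faster than any power of $(1+\|k\|^2)$. Stationarity follows from the form of the integrand. Applying $\mathcal{L}_g$ multiplies the integrand by $g$ and returns $w$. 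Hermitian symmetry $g(k)=\overline{g(-k)}$ makes $u$ real-valued.

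\textbf{Uniqueness of $S_u$ and the main obstacle.} If $|g|>0$ everywhere, then $S_u = |g|^{-2}S_w$ is forced, so $S_u$ is unique. Conversely, if $g$ vanishes on a nonempty set $Z$, I can add any tempered measure $\nu$ supported on $Z$ to $S_u$ without changing $|g|^2 S_u$. For example, take $\nu = \delta_{k_0}+\delta_{-k_0}$ with $k_0\in Z$, which is symmetric because of Hermitian symmetry. This gives non-unique spectral densities, realized by adding an independent random harmonic at frequency $k_0$.

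The delicate step is necessity on the zero set of $g$. There, $S_w$ must vanish for $|g|^{-2}S_w$ to be meaningful, and the condition has to be read with the convention $0/0$ interpreted via $S_u$ restricted off $Z$. I would handle this by requiring $S_w(Z)=0$ implicitly, which is forced by $dS_w=|g|^2dS_u$. I would also invoke the Bochner--Schwartz temperedness result as a black box rather than proving it.
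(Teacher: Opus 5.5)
Your proposal is correct under the reading you adopt, and that reading is the right one. The first ``unique'' in the statement has to mean existence, because the final sentence itself says $S_u$ is not unique when $g$ has zeros. The paper gives no proof of this result. It is quoted background, and the only nearby argument is the one-line appeal to Wiener--Khinchin for the following theorem. So there is nothing to compare against. Your spectral route is the standard one: temperedness of $S_u$ from Bochner--Schwartz for necessity, and $u=\int e^{ik\cdot x}g(k)^{-1}\,dZ_w(k)$ for sufficiency.

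A few tightenings:

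\textbf{The zero set is not actually delicate.} The zero set $Z=\{g=0\}$ needs no implicit extra hypothesis. With the standard conventions $1/0=+\infty$ and $0\cdot\infty=0$, finiteness of $\int|g|^{-2}(1+\|k\|^2)^{-N}\,dS_w$ is equivalent to two things: $S_w(Z)=0$, and finiteness of the integral on $Z^c$. So the hypothesis already gives $Z_w(Z)=0$ almost surely, and $g^{-1}\,dZ_w$ is well defined.

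\textbf{Necessity gives a bound, not an equality.} You have $S_w(Z)=\int_Z|g|^2\,dS_u=0$. The integral in the condition is then bounded by $\int(1+\|k\|^2)^{-N}\,dS_u$, but it is not equal to it, since $S_u$ may charge $Z$.

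\textbf{The GeRF property needs continuity, not just finiteness.} Continuity follows from
$\mathbb{E}|\langle u,\varphi\rangle|^2\le\sup_k(1+\|k\|^2)^N|\hat\varphi(k)|^2\int|g|^{-2}(1+\|k\|^2)^{-N}\,dS_w$.

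\textbf{The non-uniqueness example.} Two separate facts do the work. First, the added harmonic is uncorrelated with $u$ (on an enlarged probability space if needed). That is what makes the spectral measures add and keeps the sum stationary. Second, $g(\pm k_0)=0$, which is what makes $\mathcal{L}_g$ annihilate the harmonic.
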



Hence the key is the symbol function $g(k)$.
The following theorem shows that solutions of SPDEs with White Noise source term is the starting point of more general solutions, when the source term can be any stationary GeRF.
\begin{theorem}\label{uniquenessandexistence}
	Let $w(x)$ be a real stationary GeRF over $\mathbb{R}^d$ with covariance distribution $C_w(x)$. 
	Let $g$ be a symbol function over $\mathbb{R}^d$ such that $\frac{1}{g}$ is smooth with polynomially bounded derivatives of all orders. 
	Then, there exists a unique stationary solution to (\ref{SPDEGeRF}) and its covariance distribution is given by
\begin{equation}
	C_u(x) = C_u^W * C_w(x)
\end{equation}
where $C_u^W$ is the covariance function of the solution to the SPDE with White Noise source term.
\end{theorem}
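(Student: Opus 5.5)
The plan is to work entirely on the spectral side and use the preceding existence/uniqueness theorem for $\mathcal{L}_g u = w$, which reduces everything to properties of the symbol $g$ and the spectral measure $S_w$. Throughout, a stationary GeRF means a generalized random field whose covariance $C_w$ is a tempered distribution; its spectral measure $dS_w = \mathcal{F}C_w$ is then a positive tempered measure. Concretely, I would use that there is an $N_0$ with $\int_{\mathbb{R}^d}(1+\|k\|^2)^{-N_0}\,dS_w(k)<\infty$; this is the Bochner--Schwartz theorem.

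\emph{Existence.} By hypothesis $1/g$ is smooth with polynomially bounded derivatives, so in particular $|1/g(k)|\le C(1+\|k\|^2)^{M}$ for some $M$. Hence
\[
\int_{\mathbb{R}^d}\frac{dS_w(k)}{|g(k)|^2(1+\|k\|^2)^{N}}\le C^2\int_{\mathbb{R}^d}\frac{dS_w(k)}{(1+\|k\|^2)^{N-2M}}<\infty
\]
as soon as $N\ge N_0+2M$. The integrability criterion of the preceding theorem then gives a stationary solution with $S_u=|g|^{-2}S_w$. Explicitly, I would take $u=\mathcal{L}_{1/g}w$, which is well defined because $1/g$ is a multiplier on $\mathcal{S}$ and so $\mathcal{L}_{1/g}$ acts on tempered test functions.

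\emph{Uniqueness.} Since $1/g$ is finite everywhere, $|g|>0$ on $\mathbb{R}^d$, so the same theorem gives uniqueness of $S_u$. To get uniqueness of the field itself, I would argue as follows. If $v$ is another stationary solution, then $e=u-v$ is stationary with $\mathcal{L}_g e=0$. The spectral relation gives $0=S_{\mathcal{L}_g e}=|g|^2 S_e$, hence $S_e=0$, so $e=0$ in mean square.

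\emph{Covariance formula.} Setting $w=W$ gives $S^W_u=|g|^{-2}$, so $C_u^W=\mathcal{F}^{-1}(|g|^{-2})$. For general $w$, $S_u=|g|^{-2}\,S_w=S_u^W\cdot \mathcal{F}C_w$. Applying $\mathcal{F}^{-1}$ and the convolution theorem would then give $C_u=C_u^W*C_w$.

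\emph{Main obstacle.} The step I expect to be hardest is justifying this convolution theorem for distributions: $C_w$ is only tempered and $C_u^W$ need not be integrable. I would handle it as follows. The function $|g|^{-2}=(1/g)\overline{(1/g)}$ is smooth with polynomially bounded derivatives, i.e.\ it lies in $\mathcal{O}_M$. Its inverse Fourier transform $C_u^W$ therefore lies in $\mathcal{O}_C'$, the rapidly decreasing distributions, which are exactly the convolutors of $\mathcal{S}'$. The exchange formula $\mathcal{F}^{-1}(f\,T)=\mathcal{F}^{-1}f*\mathcal{F}^{-1}T$ for $f\in\mathcal{O}_M$ and $T\in\mathcal{S}'$ is then standard (Schwartz), and this yields the identity in $\mathcal{S}'$. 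If the paper's setting only intends ordinary functions, I would add a remark that the identity holds pointwise whenever $C_w$ is integrable.
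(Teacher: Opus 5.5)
You follow the same route as the paper, whose proof is only the remark that the result follows from the Wiener--Khinchin theorem: $S_u=|g|^{-2}S_w=S_u^W\,\mathcal{F}C_w$, and inverting the Fourier transform turns the product into a convolution. Your additions are correct and supply what the paper leaves implicit. These are Bochner--Schwartz together with the polynomial bound on $1/g$ to verify the integrability criterion, and the $\mathcal{O}_M\to\mathcal{O}_C'$ exchange theorem to justify $\mathcal{F}^{-1}(|g|^{-2}S_w)=C_u^W*C_w$ when $C_w$ is only tempered.

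The one step that does not hold as written is the field-level uniqueness. Two stationary solutions $u,v$ need not be jointly stationary, so $e=u-v$ need not be stationary. Neither $S_e$ nor the relation $S_{\mathcal{L}_g e}=|g|^2S_e$ is therefore available a priori. You can avoid forming the difference with the operator you already introduced in the existence step. Since $g$ has no zeros, $u=\mathcal{L}_{1/g}\mathcal{L}_g u=\mathcal{L}_{1/g}w=\mathcal{L}_{1/g}\mathcal{L}_g v=v$. This uses only the composition rule $\mathcal{L}_{1/g}\mathcal{L}_g=\mathcal{L}_{1}$, applied to each stationary solution separately. Equivalently, $g\,dM_u=dM_w=g\,dM_v$ for the orthogonal random spectral measures, so $M_u=M_v$.
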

\begin{proof}
	The proof is straightforward by using Wiener-Khinchin theorem.
\end{proof}
For any precision operator which is a polynomial in the Laplacian, $Q = p(-\Delta)$, such as the Matern operator with $\nu \in \mathbb{N}$, 
this results in a polynomial $F(Q) = p(\|k\|^2)$.

Then we introduce the Mat\'ern field. 
\begin{proof}[Proof of Equ.~\eqref{eq:cov_matern}]
	Do Fourier transform on Equ.~\eqref{matern_field}:
\begin{equation}
	\left\{\mathcal{F}(\kappa^2 - \Delta)^{\alpha/2} u\right\}(k) = (\kappa^2 + \|k\|^2)^{\alpha/2}(\mathcal{F}u)(k)
\end{equation}
then we have 
\begin{equation}\label{spectralsolution}
	(\mathcal{F}u)(k) = \hat{u}(k) = \frac{\hat{W}(k)}{(\kappa^2 + \|k\|^2)^{\alpha/2}} 
\end{equation}
Therefore, u can be written as:
\begin{equation}
	u(x) = \mathcal{F}^{-1}\left[\frac{\hat{W}(k)}{(\kappa^2 + \|k\|^2)^{\alpha/2}}\right]
\end{equation}
Then the stationary covariance function of u is given by:
\begin{equation}
	c(x) = Cov(u(x), u(0))
\end{equation}
By the definition of spectral density Equ.~\eqref{spectraldensity} and Equ.~\eqref{spectralsolution} we have:
\begin{equation}\label{Suk}
		S_u(k) = \frac{1}{(2\pi)^{d}}\frac{\mathbb{E}\left[\left|\hat{W}(k)\right|^2\right]}{(\kappa^2 + \|k\|^2)^{\alpha}} 
		= \frac{1}{(\kappa^2 + \|k\|^2)^{\alpha}} 
\end{equation}
Then we have the variance of u:
\begin{equation}
	c(0) = \frac{1}{(2\pi)^{d}}\int_{\mathbb{R}^d} S_u(k) dk = \frac{\Gamma(\nu)}{(4\pi)^{d/2}\kappa^{2\nu}\Gamma(\alpha) }:=\sigma^2
\end{equation}
	By Wiener-Khinchin theorem, we have:
\begin{equation}
	\begin{aligned}
		c(x) &= (\mathcal{F}^{-1}S_u)(x)=\mathcal{F}^{-1}\left[\frac{1}{(\kappa^2 + \|k\|^2)^{\alpha}}\right]\\
		&= \frac{\|x\|^\nu K_{\nu}(\kappa\|x\|)}{(4\pi)^{d/2}2^{\nu-1}\kappa^{\nu}\Gamma(\alpha)}\\
		& = \frac{\sigma^2}{2^{\nu -1}\Gamma(\nu)}(\kappa \|x\|)^\nu K_\nu (\kappa \|x\|)
	\end{aligned}
\end{equation}
\end{proof}
Here we show some numerical cases of Mat\'ern Field in Fig.~\ref{apdx_pic: matern_case}. It is known that $\nu$ and $\ell$ controls the regularity and the correlated length. 
\begin{figure}[ht] 
  \centering
    \includegraphics[width=\textwidth]{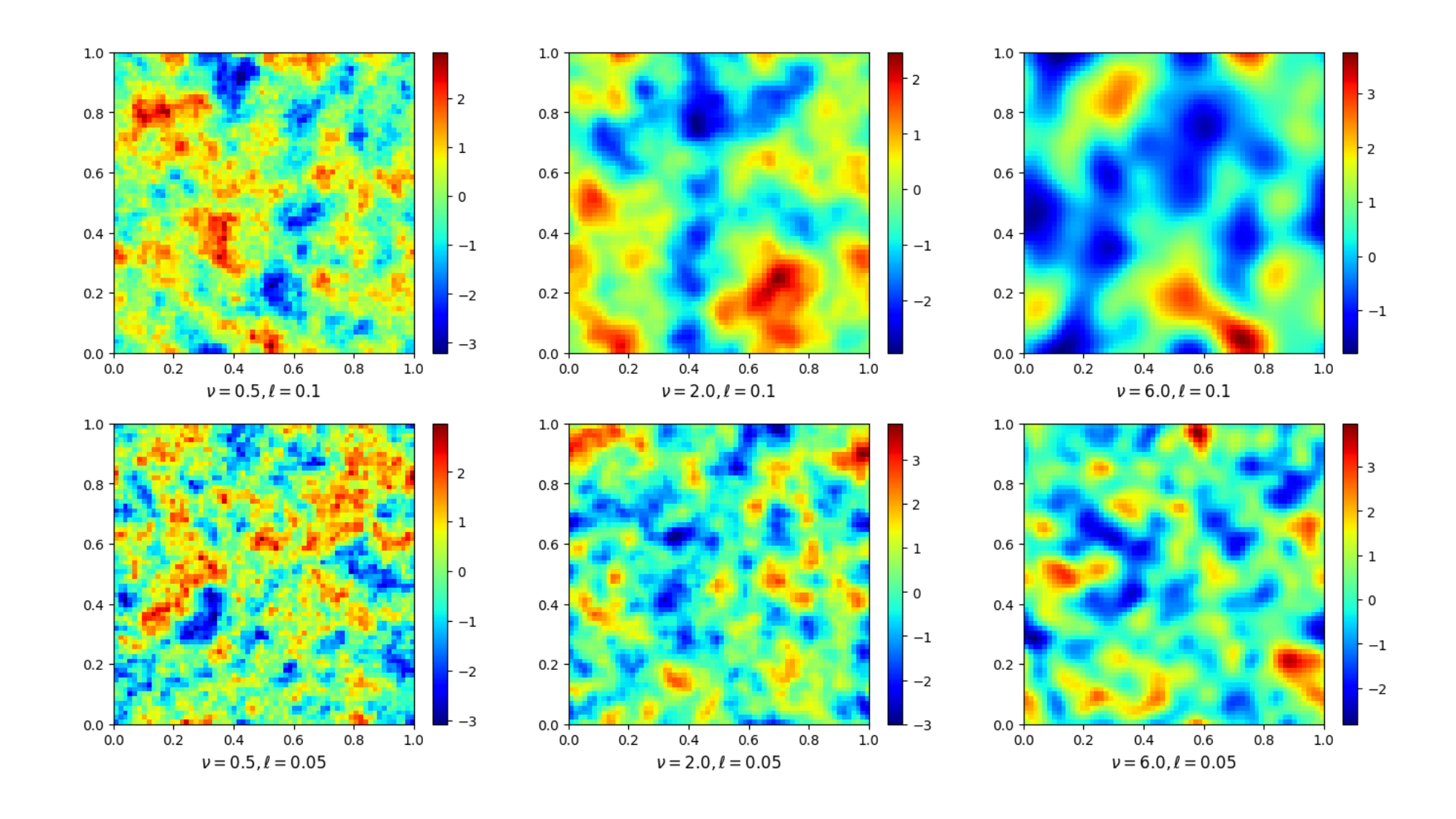}
\caption{Matern Fields with different $\nu,\ell$
}
\label{apdx_pic: matern_case}
\end{figure}

The smoothness parameter $\nu$ controls the regularity of the random field.
Larger values of $\nu$ correspond to smoother sample paths, for which the spectral
energy decays more rapidly with respect to frequency, resulting in lower energy
content in the high-frequency regime.
Conversely, smaller values of $\nu$ lead to rougher fields with increased
high-frequency components and more pronounced local fluctuations.
The correlation length parameter $\ell$ governs the spatial dependence structure
of the field.
Larger values of $\ell$ imply stronger correlations between distant points, giving
rise to coherent structures over larger spatial regions, whereas smaller values of
$\ell$ corresponds to rapidly decaying correlations and nearly independent local
behavior.
From a spectral perspective, $\ell$ primarily determines the frequency scale at
which the spectral energy begins to decay. In Fig.~\ref{apdx_pic: matern_cov}, the spectral energy of different Mat\'er fields is presented. It can be observed that a smaller correlation length $\ell$ leads to a faster decay of the covariance function, whereas stronger regularity $\nu$ results in a flatter behavior of the covariance function near zero.
\begin{figure}[ht] 
  \centering
    \includegraphics[width=\textwidth]{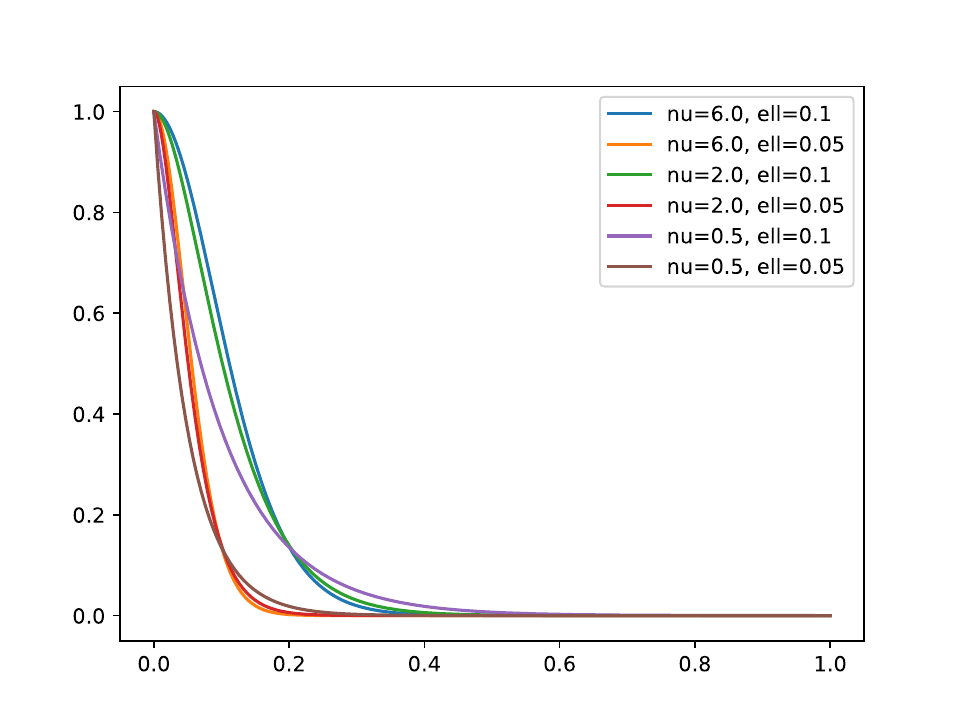}
\caption{Covariance of Matern Fields with different $\nu,\ell$
}
\label{apdx_pic: matern_cov}
\end{figure}
\paragraph{Remark on anisotropic noise measures.}
The proposed framework can also accommodate anisotropic source noise measures.
For example, one may construct Gaussian random fields with direction-dependent spectral decay or correlation lengths to match anisotropic target measures.
However, such choices typically require additional problem-specific prior knowledge about dominant directions or anisotropic correlations.
To keep the method broadly applicable and avoid introducing benchmark-specific design choices, we restrict our experiments to isotropic Mat\'ern-type noise measures.
Extending the noise design principle to anisotropic random fields is an interesting direction for future work.

\subsection{Solutions of SPDEs} \label{apdx_spde}
It should be noted that, in the case of stationary SPDEs, the learned SPDE solver effectively functions as an operator mapping between function spaces under randomness. 
For example, a stationary SPDE can be formulated as follows. Assume that $f \in L^2(\Omega, L^2(D))$ is a random field distributed according to the measure $\mu$. The objective is to find $u:D\times \Omega \to \mathbb{R}$ in the weak sense such that, $\mathbb{P}$-almost surely:
\begin{equation}\left\{
\begin{aligned}
& \mathcal{L}\left(u(x, w)\right) = f(x, w), \quad &x\in D, \\
& u(x, w) = g(x), \quad &x\in \partial D,
\end{aligned}\right.
\label{stationaryspde}
\end{equation}
where $\mathcal{L}$ is a stationary differential operator.
Therefore, stationary SPDEs are conceptually aligned with neural operators in that both aim to learn mappings between function spaces.

A function $u : [0,T] \to \mathcal{H}_U$ is referred to as a \emph{mild solution}
of the semilinear evolution equation if it satisfies the following integral equation:
\begin{equation}
u(t)
=
S(t)u_0
+
\int_0^t S(t-s)
\bigl[
F(u_s)\,\mathrm{d}s
+
G(u_s)\,\mathrm{d}W_s
\bigr],
\end{equation}
where $S(t) = e^{t\mathcal{L}}$ is the semigroup generated by $\mathcal{L}$.
The following theorem provides the theoretical guarantee of existence and uniqueness
for solutions to SPDEs.

\begin{theorem}\label{condition_invariant}
Assuming the global Lipschitz regularity on $F$ and $G$ are satisfied,
the mild solution to Eq.~(\ref{SPDE}) exists and is unique.
\end{theorem}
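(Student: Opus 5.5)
The plan is a Banach fixed-point argument on a space of square-integrable predictable processes, in the style of \citet{Da_Prato_Zabczyk_2014}. First I would make the implicit standing hypotheses explicit. $\mathcal{L}$ generates a $C_0$-semigroup $S(t)=e^{t\mathcal{L}}$ with $\|S(t)\|\le Me^{\omega t}$. The initial datum $u_0$ is $\mathcal{F}_0$-measurable with $\mathbb{E}\|u_0\|^2<\infty$. "Global Lipschitz" means
\[
\|F(u)-F(v)\|_{\mathcal{H}_U}\le L_F\|u-v\|, \qquad \|G(u)-G(v)\|_{L_2(\mathcal{H}_{0,\xi},\mathcal{H}_U)}\le L_G\|u-v\|,
\]
with the Hilbert--Schmidt norm appearing on the right-hand side for $G$. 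These bounds immediately give linear growth: $\|F(u)\|+\|G(u)\|_{L_2}\le C(1+\|u\|)$.

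Fix $T>0$ and let $\mathcal{Z}_T$ be the space of $\mathcal{H}_U$-valued predictable processes with
\[
\|u\|_{\lambda}^2:=\sup_{t\le T}e^{-\lambda t}\,\mathbb{E}\|u(t)\|^2<\infty ,
\]
where $\lambda\ge 0$ is a parameter chosen later. This space is complete, and for every $\lambda$ the norm is equivalent to the one with $\lambda=0$. On it define
\[
\Phi(u)(t)=S(t)u_0+\int_0^t S(t-s)F(u_s)\,ds+\int_0^t S(t-s)G(u_s)\,dW_s .
\]

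The first step is to show that $\Phi$ maps $\mathcal{Z}_T$ into itself. For the deterministic convolution I would use Cauchy--Schwarz together with the semigroup bound. For the stochastic convolution I would use the Itô isometry for $Q$-Wiener integrals,
\[
\mathbb{E}\Bigl\|\int_0^t S(t-s)G(u_s)\,dW_s\Bigr\|^2=\int_0^t\mathbb{E}\|S(t-s)G(u_s)\|_{L_2}^2\,ds ,
\]
combined with linear growth and $\|S(t-s)B\|_{L_2}\le Me^{\omega(t-s)}\|B\|_{L_2}$. Together these give $\|\Phi(u)\|_0<\infty$.

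The second step is the contraction estimate. The same two tools applied to differences give
\[
e^{-\lambda t}\mathbb{E}\|\Phi u(t)-\Phi v(t)\|^2\le 2M^2e^{2|\omega|T}\,\bigl(TL_F^2+L_G^2\bigr)\int_0^t e^{-\lambda(t-s)}\,ds\;\|u-v\|_\lambda^2 .
\]
The integral is at most $1/\lambda$, so taking $\lambda$ large makes $\Phi$ a strict contraction on all of $[0,T]$; this avoids patching together short time intervals. Banach's theorem then gives a unique fixed point in $\mathcal{Z}_T$, which is a mild solution. Since $T$ is arbitrary, and uniqueness makes the solutions on different horizons consistent, the solution extends to all $t\ge0$.

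To upgrade uniqueness from "unique in $\mathcal{Z}_T$" to "unique among all mild solutions", I would take two mild solutions and localize with stopping times $\tau_R$ at which $\|u\|$ reaches $R$. Applying the same estimate to $\mathbb{E}\|u(t\wedge\tau_R)-v(t\wedge\tau_R)\|^2$, Gronwall's inequality forces this to be zero; letting $R\to\infty$ gives $u=v$ almost surely.

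The main obstacle I expect is the measurability of the stochastic convolution. The integrand $S(t-s)G(u_s)$ depends on $t$, so $t\mapsto\int_0^tS(t-s)G(u_s)\,dW_s$ is not a martingale, and its predictability, or at least the existence of a predictable modification, must be justified. I would first try the standard route: approximate $u$ by simple predictable processes, use strong continuity of $S$ to get mean-square continuity of $t\mapsto\Phi(u)(t)$, and conclude that a predictable version exists. Note that no pathwise continuity (e.g.\ via the factorization method) is needed, since the statement only asks for a mild solution.
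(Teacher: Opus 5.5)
Your existence argument and the uniqueness within $\mathcal{Z}_T$ are correct. They are the standard Da Prato--Zabczyk fixed-point proof, which the paper invokes without giving a proof of its own. Your Bielecki weight $e^{-\lambda t}$ is a harmless substitute for contracting on short intervals and concatenating. The predictability point you worry about is routine: $\Phi(u)$ is adapted and mean-square continuous, so it has a predictable version.

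The genuine gap is in the step that upgrades uniqueness to all mild solutions. There you apply ``the same estimate'' to $\mathbb{E}\|u(t\wedge\tau_R)-v(t\wedge\tau_R)\|^2$. As you yourself observe, $r\mapsto Z(r)=\int_0^r S(r-s)\Psi_s\,dW_s$ is not a martingale. Hence $Z(t\wedge\tau_R)$ is not the It\^o integral of an adapted integrand: the candidate $S(t\wedge\tau_R-s)\Psi_s$ depends on $\tau_R$, which is not $\mathcal{F}_s$-measurable, so the It\^o isometry gives nothing. Moreover, the mild identity holds almost surely only for each fixed $t$. Evaluating it at a random time therefore requires versions for which it holds for all $t$ simultaneously, which is the kind of path regularity you deliberately avoid. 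For the same reason, the hitting time of $\{\|u\|\ge R\}$ is not well defined on equivalence classes of modifications.

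The repair is to freeze the outer time before localizing. Write $\Delta F_s=F(u_s)-F(v_s)$ and $\Delta G_s=G(u_s)-G(v_s)$. Let $\tau_R=\inf\{t:\int_0^t(\|u_s\|^2+\|v_s\|^2)\,ds\ge R\}$. This is a stopping time that does not depend on the choice of predictable versions, and it exceeds $T$ for large $R$ almost surely, since mild solutions are required to have $\int_0^T\|u_s\|^2ds<\infty$ a.s.

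For fixed $t$, the process $r\mapsto\int_0^r S(t-s)\,\Delta G_s\,dW_s$ with $r\le t$ \emph{is} a martingale. So by the local property of the stochastic integral you may, on $\{t\le\tau_R\}$, replace $\Delta F_s,\Delta G_s$ by $\mathbf{1}_{\{s\le\tau_R\}}\Delta F_s$ and $\mathbf{1}_{\{s\le\tau_R\}}\Delta G_s$. Multiply the difference of the two mild equations by $\mathbf{1}_{\{t\le\tau_R\}}$ instead of stopping it. Set $\phi(t)=\mathbb{E}[\mathbf{1}_{\{t\le\tau_R\}}\|u_t-v_t\|^2]$. Cauchy--Schwarz and the It\^o isometry for these adapted, square-integrable integrands then give $\phi(t)\le C\int_0^t\phi(s)\,ds$, with $\int_0^T\phi\le 2R$. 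Gronwall gives $\phi\equiv 0$, and letting $R\to\infty$ yields $u(t)=v(t)$ a.s.\ for every $t$.

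If the theorem is instead read as uniqueness in the class $\sup_{t\le T}\mathbb{E}\|u(t)\|^2<\infty$, your fixed-point argument already suffices and the localization step can be dropped.
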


Usually, we care about the asymptotic properties of solutions of stochastic
equations as time goes to infinity \citep{Da_Prato_Zabczyk_2014}.

\begin{theorem}
Under the assumption of Theorem~{\ref{condition_invariant}}, and the following dissipativity condition:
there exists $C>0$ such that $\forall x,y \in \mathcal{H}$, $n \in \mathbb{N}$,
\begin{equation}
2\langle \mathcal{L}_n(x-y), x-y \rangle
+
2\langle F(x)-F(y), x-y \rangle
+
\|G(x)-G(y)\|^2
\le
- C \|x-y\|^2 ,
\tag{7}
\end{equation}
where $\mathcal{L}_n = n\mathcal{L}(n - \mathcal{L})^{-1}$
denotes the Yosida approximation of $\mathcal{L}$.
Then there exists a unique invariant measure $\mu$ associated with Eq.~(\ref{SPDE}).

\end{theorem}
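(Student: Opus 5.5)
The plan is to follow the classical dissipativity route of \citet{Da_Prato_Zabczyk_2014}. I would first obtain an exponential contraction estimate between any two solutions. I would then build the invariant measure as the limit of a \emph{pullback} (remote-start) construction, and finally deduce uniqueness from the same contraction. Throughout I work with the mild solution guaranteed by Thm.~\ref{condition_invariant}.

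\textbf{Step 1 (Yosida regularization and contraction).} Replace $\mathcal{L}$ by its Yosida approximation $\mathcal{L}_n=n\mathcal{L}(n-\mathcal{L})^{-1}$, which is bounded. The equation with $\mathcal{L}_n$ then has strong solutions $u_n^x(t)$, so It\^o's formula applies to $\|u_n^x(t)-u_n^y(t)\|^2$. Driving both solutions with the same Wiener process and taking expectations, the dissipativity condition (7) gives
\[
\frac{d}{dt}\mathbb{E}\|u_n^x(t)-u_n^y(t)\|^2\le -C\,\mathbb{E}\|u_n^x(t)-u_n^y(t)\|^2 ,
\]
and hence $\mathbb{E}\|u_n^x(t)-u_n^y(t)\|^2\le e^{-Ct}\|x-y\|^2$. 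The standard convergence $u_n^x(t)\to u^x(t)$ in $L^2(\Omega;H)$, uniformly on compact time intervals (using $e^{t\mathcal{L}_n}\to e^{t\mathcal{L}}$ strongly and the Lipschitz bounds on $F,G$), transfers this estimate to the mild solutions. In the same way, applying (7) with $y=0$ and Young's inequality to absorb the terms involving $F(0)$ and $G(0)$ yields
\[
\frac{d}{dt}\mathbb{E}\|u^x(t)\|^2\le -\tfrac{C}{2}\,\mathbb{E}\|u^x(t)\|^2+K ,
\]
which gives the uniform moment bound $\sup_{t\ge0}\mathbb{E}\|u^x(t)\|^2\le \|x\|^2+2K/C$.

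\textbf{Step 2 (existence via pullback).} Extend $W$ to a two-sided Wiener process on $\mathbb{R}$, and let $X(0;-s,x)$ denote the solution started at time $-s$ from $x$ and observed at time $0$. For $s'>s$, write $X(0;-s',x)=X(0;-s,X(-s;-s',x))$. By the Markov property, the contraction of Step 1 (with a random initial datum independent of the later noise), and the moment bound, we get
\[
\mathbb{E}\|X(0;-s',x)-X(0;-s,x)\|^2\le e^{-Cs}\,\bigl(2\|x\|^2+4K/C\bigr).
\]
So $s\mapsto X(0;-s,x)$ is Cauchy in $L^2(\Omega;H)$ and converges to some $\eta$. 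Since the law of $X(0;-s,x)$ equals that of $u^x(s)$ by stationarity of the increments of $W$, we get $\mathrm{Law}(u^x(s))\to\mu:=\mathrm{Law}(\eta)$ in the Wasserstein-2 distance, hence weakly. Invariance follows from the Feller property, which is a consequence of the Lipschitz continuity of $x\mapsto u^x(t)$ in $L^2$. Indeed, for $\varphi$ bounded and Lipschitz, $\int P_t\varphi\,d\mu=\lim_s \mathbb{E}\,\varphi(u^x(s+t))=\int\varphi\,d\mu$, and this extends to all $\varphi\in B_b(H)$ by a monotone class argument.

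\textbf{Step 3 (uniqueness).} Let $\nu$ be any invariant measure. The Step 1 moment bound shows that $\nu$ has a finite second moment. For bounded Lipschitz $\varphi$,
\[
\Bigl|\int\varphi\,d\nu-\int\varphi\,d\mu\Bigr|=\Bigl|\int\!\!\int \bigl(P_t\varphi(x)-P_t\varphi(y)\bigr)\,\nu(dx)\,\mu(dy)\Bigr|\le \mathrm{Lip}(\varphi)\,e^{-Ct/2}\int\!\!\int\|x-y\|\,\nu(dx)\,\mu(dy),
\]
which tends to $0$ as $t\to\infty$. Hence $\nu=\mu$, and the same estimate gives the mixing property. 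The main obstacle is Step 1: justifying It\^o's formula for an unbounded $\mathcal{L}$ and passing to the limit $n\to\infty$ uniformly in $t$. This is exactly why the hypothesis is stated for the Yosida approximations uniformly in $n$, so that the constant $C$ in the contraction does not depend on $n$.
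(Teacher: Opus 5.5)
Your proof is correct and is essentially the standard Da Prato--Zabczyk argument that the paper relies on, since the paper states this theorem without proof and only cites that reference: It\^o's formula on the Yosida-regularized equations gives $n$-uniform contraction and moment bounds, the pullback limit of $X(0;-s,x)$ in $L^2(\Omega;H)$ defines $\mu$, invariance follows from the Feller property, and uniqueness follows from the contraction. The remaining issues are cosmetic: the Step 2 constant should be $4\|x\|^2+4K/C$ (from $\mathbb{E}\|\xi-x\|^2\le 2\mathbb{E}\|\xi\|^2+2\|x\|^2$), and Step 3 needs no moments of $\nu$, because the bound $|P_t\varphi(x)-P_t\varphi(y)|\le\min\{2\sup|\varphi|,\,\mathrm{Lip}(\varphi)e^{-Ct/2}\|x-y\|\}$ lets dominated convergence finish the argument for any invariant $\nu$.
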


%% file: src/2_Related.tex
\section{Related Work}\label{sec_related_work}
In recent years, both operator-learning models and generative modeling techniques have advanced rapidly. We begin by briefly reviewing existing research on neural operators and their applications to modeling SPDEs. We then discuss recent progress in generative modeling within scientific computing and highlight how our approach differs from prior work in both areas.
\subsection{Neural Solvers}
Numerous neural solvers have been proposed to approximate the solutions or operators of PDEs and SPDEs using neural networks. \citet{raissi2019physics} introduced Physics-Informed Neural Networks (PINNs), in which a neural network is optimized using PDE constraints as self-supervised losses to obtain PDE solutions. These methods have also been extended to solve SPDEs \citep{zhang2020learning, neufeld2024solving}, although they are difficult to generalize across different types of SPDEs. Among neural operators, the Fourier Neural Operator (FNO) \citep{kovachki2023neural, li2021fourierneuraloperatorparametric} and DeepONet \citep{Lu_2021} are among the most widely used due to their representative architectures, which were designed with universal approximation properties. The primary objective of neural operators is to learn mappings between distributions in function spaces. 
Many other related operator-based approaches have been proposed to address specific challenges, such as irregular domains and data \citep{liu2025aerogto, liu2025efficient}, as well as long-term forecasting \citep{liu2024papm, liu2024deep}. These methods have demonstrated strong performance across a variety of tasks; however, their applicability remains largely restricted to PDE-related problems.

By incorporating time-dependent randomness, Neural SPDE \citep{salvi2022neural} and DLRNet \citep{gong2023deep, hu2022neuraloperatorregularitystructure, hu2025modelbasedclosedloopcontrolalgorithm} have been proposed for modeling SPDE operators. The distinction between modeling SPDE operators and PDE operators lies in the fact that the former accounts for infinite-dimensional, time-dependent randomness, whereas the latter does not. In the work of \citet{salvi2022neural}, kernel-based methods were employed to handle time evolution by learning the semigroup operator generated by the differential operator. In the work of \citet{gong2023deep}, the regularity features of SPDEs were combined with neural networks to improve generalization; however, the computational cost becomes prohibitively high for high-order signatures. 

However, in real-world SPDE dynamics, the driving noise is rarely observable; only system states at selected spatial and temporal points are typically available. Moreover, numerical solvers require extremely fine temporal discretization, making full-trajectory prediction computationally costly. In this work, we therefore focus not on forecasting the SPDE evolution, but on sampling from its stationary distribution once equilibrium is reached. For example, in thin-film growth described by the EW/KPZ equations under fixed operating conditions, our interest lies in the stabilized surface statistics rather than the transient dynamics. Importantly, all objects of interest reside in function spaces, making operator-learning models essential for accurately capturing the underlying infinite-dimensional structure.

\subsection{Generative Models}

In recent years, rapid progress has been made in generative modeling, including DDPM~\citep{ho2020denoising}, DDIM~\citep{songdenoising}, score-based models~\citep{songscore}, and flow matching~\citep{lipmanflow}. Beyond finite-dimensional settings, these ideas have been extended to infinite-dimensional generative models that aim to directly represent distributions over function spaces. For example, \citet{pidstrigach2024infinite, lim2023score, kerrigan2022diffusion, franzese2023continuous} propose score- or diffusion-based approaches that are invariant to discretization and operate directly in Hilbert or Banach spaces, thereby enabling a variety of applications in scientific machine learning.
Although FFM~\citep{kerrigan2023functional} establishes functional flow matching, it uses Gaussian sources and does not empirically isolate source design. Other functional models adapt GAN or VAE architectures \citep{na2025probability, rahman2022generative, seidman2023variational}, while recent work designs interpolants or conditional velocity fields \citep{albergo2025stochastic, chen2025scale, chen2025lipschitzguideddesigninterpolationschedules, boffi2025flowmapmatchingstochastic}. Functional Mean Flow targets one-step generation in Hilbert space \citep{li2025functionalmeanflow}, and FunDPS develops guided diffusion sampling and an infinite-dimensional Tweedie formula for PDE inverse problems \citep{yao2025guided}. NAGO is complementary: it isolates the source measure and links its continuum admissibility and finite-band covariance ratio to spectral stiffness.

Source design has also emerged in finite-dimensional flow matching. LeDiFlow learns a data-dependent image prior to shorten probability paths \citep{zwick2025lediflow}; Better Source, Better Flow learns condition-dependent text-to-image sources with variance and directional regularization \citep{kim2026bettersource}; and Optimal Transport Flow Matching by Design constructs a low-frequency image prior that admits an approximately optimal identity coupling \citep{malnick2026optimal}. These works establish that source distributions matter beyond our setting. Our contribution is narrower and different: an analytic, admissibility-constrained source rule for probability measures on function spaces, with a spectral-stiffness analysis and cross-resolution SPDE evaluation.

A crucial design choice in diffusion-based generative models is the noise schedule, i.e., how noise levels are varied during the forward and reverse processes. Prior work has shown that the noise schedule significantly affects model behavior: for example, different noise schedules can shift optimal performance and robustness across tasks and resolutions, and the choice of schedule often needs to be tailored rather than fixed \citep{chen2023importancenoiseschedulingdiffusion}. Subsequent research has undertaken both theoretical and empirical analysis of noise schedules, proposing new metrics and schedule designs to improve generation quality and training dynamics \citep{hang2025improved,guo2023rethinking}. Moreover, studies focusing on specific application domains (e.g., design plausibility in conditional diffusion) have explored parametric noise scheduling to balance generative fidelity and structural constraints \citep{fan2023noise}. Together, these works highlight that noise choice and scheduling are fundamental for stable, high-quality generation in diffusion models. 

\subsection{Difference between finite and infinite dimensional generative models.} \label{apdx_finite_infinite}
Generative models differ fundamentally depending on whether they are formulated in finite-dimensional Euclidean spaces or in infinite-dimensional function spaces.
In finite-dimensional settings, the target distribution is defined on a fixed space $\mathbb{R}^d$, where the dimension $d$ is prescribed by the chosen representation.
For example, an image model may operate on a fixed pixel grid, while a latent-variable model may operate on a fixed-dimensional feature vector.
In this regime, probability distributions can often be described through densities with respect to the Lebesgue measure, and generative dynamics such as diffusion processes, score-based models, normalizing flows, or flow matching can be formulated as stochastic or deterministic differential equations on $\mathbb{R}^d$.
The associated analysis typically relies on finite-dimensional calculus, including gradients, divergences, Jacobians, change-of-variables formulas, and numerical discretizations of ordinary or stochastic differential equations.

Function-space generative modeling has a different mathematical structure.
Here the target object is a probability measure on a space of functions or fields, such as a Hilbert space, a Banach space, or a Sobolev-type space.
Samples are not merely finite vectors, but discretized observations of continuum objects.
Typical examples include Gaussian random fields, solutions of stochastic partial differential equations, invariant measures of stochastic dynamics, and continuous physical fields.
This change of state space affects the basic theoretical language of generative modeling.
In infinite-dimensional spaces, there is no translation-invariant Lebesgue reference measure.
Consequently, densities, likelihoods, and scores cannot in general be interpreted in the same way as in $\mathbb{R}^d$.
Instead, the construction of generative dynamics must be formulated at the level of probability measures, operators, covariance structures, and infinite-dimensional evolution equations.
Questions such as existence, uniqueness, regularity, and stability become central, since the flow must be well defined as a continuum object rather than only after a finite discretization.

The second major difference concerns the role of spatial resolution.
In a finite-dimensional model, the resolution is usually part of the definition of the data representation.
Changing the resolution changes the ambient dimension and therefore typically defines a different finite-dimensional modeling problem.
For example, an image model trained on $64\times 64$ images and another model trained on $256\times 256$ images are usually treated as models on different Euclidean spaces.
By contrast, in function-space generative modeling, different resolutions should be understood as different discretizations of the same underlying continuum measure.
A model that is genuinely defined in function space should therefore be discretization-consistent: as the grid is refined, the learned distribution and the induced generative dynamics should converge to a well-defined limiting object.
Equivalently, the model should not rely on artifacts of a particular grid size.

This distinction is especially important for multiscale SPDE data.
The target invariant measures often have nontrivial spatial regularity and strongly nonuniform spectral structure.
If the source noise is chosen without regard to this regularity, its discretized high-frequency content may dominate the induced transport dynamics.
For instance, standard discretized Gaussian noise can inject substantial energy into modes that are not representative of the target measure.
The learned velocity field must then spend significant capacity suppressing these unresolved high-frequency components.
This can lead to spectrally stiff marginal velocity fields, unstable training, and generation behavior that changes with the spatial resolution.
Thus, in function-space generative modeling, the source noise is not merely an implementation detail.
It directly affects whether the learned flow is well conditioned, resolution robust, and compatible with the target continuum measure.

This perspective motivates the regularity-aware noise design considered in this work.
Rather than fixing the source distribution to be standard Gaussian noise at every discretization level, we choose a noise measure whose regularity is matched to that of the target invariant measure.
The goal is to reduce high-frequency mismatch at the level of the continuum formulation, thereby producing better-conditioned marginal velocity fields and more stable discretization-consistent generation.

\section{Invariant measures}\label{apdx:intro}
In many engineering and scientific applications, the quantities of interest are not deterministic functions but uncertain spatial fields.
For example, the permeability of a subsurface medium may vary sharply across space and cannot be fully determined from limited observations.
Representing such a parameter by a single fixed function therefore ignores the uncertainty that is essential for prediction and decision-making.
A more appropriate formulation is to model the parameter as a random field and characterize its distribution through a probability measure on a function space.
In SPDE-based modeling, the invariant measure of a stochastic dynamics provides such a statistical description: it encodes the long-term spatial structure, variability, and correlations of the field.
Sampling from this invariant measure then produces statistically plausible realizations of the unknown medium, which can be used for uncertainty propagation, inverse problems, risk assessment, and design optimization.

\paragraph{\textbf{Why are multiscale invariant measures challenging for functional generative models?}}
Multiscale SPDE invariant measures often contain spatial structures across a wide range of frequencies.
A functional generative model must therefore learn not only the distribution of samples, but also a resolution-consistent transport mechanism between probability measures on function spaces.
In practice, many implementations use standard Gaussian noise as the source distribution.
This choice is convenient, but it can be poorly matched to the target invariant measure.
When the source noise is much rougher than the data, it injects excessive high-frequency energy into the generative path.
The induced marginal velocity field must then transport rough noise into smoother target fields, which creates strong spectral imbalance and leads to stiff dynamics.
This stiffness can make training unstable and sampling expensive, especially when the spatial resolution changes.

\paragraph{\textbf{Why are interpolation schedules or adaptive solvers not enough?}}
Existing methods often try to improve stability through interpolation schedules \citep{tong2024improving, chen2024adaptive}, guidance strategies, or specialized ODE solvers.
These techniques can be useful, but they mainly act on the generative path or on the numerical integration after the source and target measures have already determined the geometry of the flow.
If the initial noise measure is strongly mismatched with the target regularity, the resulting marginal velocity field is already poorly conditioned.
Downstream heuristics may reduce some symptoms of stiffness, but they do not remove its source.
As a result, they may require additional tuning, increase computational cost, or fail to remain robust across discretization resolutions.

\paragraph{\textbf{Our perspective: design the noise measure in function space.}}
We instead treat the source noise measure as a design variable.
The central idea is that the geometry of the functional flow should be controlled at the level of probability measures, rather than corrected only through downstream mechanisms.
By choosing a noise distribution whose regularity is compatible with that of the target invariant measure, we can reduce high-frequency mismatch before it is inherited by the marginal velocity field.
This leads to better-conditioned flows, more stable training, and improved cross-resolution behavior.

%% file: src/7_Appendix_2.tex
\section{Proofs of Main Theoretical Results}
\label{apdx:proofs}

\subsection{Proof of Theorem~\ref{thm:3_1}}
\label{pf:3.1}

\begin{proof}
Let $(\mathcal{G},\mathcal{B}(\mathcal{G}))$ and
$(\mathcal{U},\mathcal{B}(\mathcal{U}))$ be standard Borel spaces.
For each $t\in[0,1]$, let $\Gamma_t$ be a joint probability measure on
$\mathcal{G}\times\mathcal{U}$ with first marginal $\eta_t$ on $\mathcal{G}$ and
second marginal $\mu_1$ on $\mathcal{U}$.
By the disintegration theorem, there exists a regular conditional distribution
$\Gamma_t(\mathrm{d}g\mid f)$ such that
\begin{equation}
    \Gamma_t(\mathrm{d}g,\mathrm{d}f)
    =
    \Gamma_t(\mathrm{d}g\mid f)\,\mu_1(\mathrm{d}f).
\end{equation}
Assume that for $\mu_1$-a.e. $f\in\mathcal{U}$,
\begin{equation}
    \Gamma_t(\cdot\mid f)\ll \eta_t .
\end{equation}
Then the Radon--Nikodym theorem gives a measurable density
\begin{equation}
    \rho_t(g,f)
    :=
    \frac{\mathrm{d}\Gamma_t(\cdot\mid f)}{\mathrm{d}\eta_t}(g),
\end{equation}
so that
\begin{equation}
    \Gamma_t(\mathrm{d}g\mid f)
    =
    \rho_t(g,f)\,\eta_t(\mathrm{d}g)
    \qquad
    \text{for } \mu_1\text{-a.e. } f .
\end{equation}

Let $v_t(g\mid f)$ be a conditional velocity field satisfying the integrability condition
\begin{equation}
    \int_0^1
    \int_{\mathcal{U}}
    \int_{\mathcal{G}}
    \|v_t(g\mid f)\|^2
    \,
    \Gamma_t(\mathrm{d}g\mid f)
    \mu_1(\mathrm{d}f)
    \mathrm{d}t
    <
    \infty .
\end{equation}
Define the marginal velocity field by
\begin{equation}
    v_t(g)
    :=
    \int_{\mathcal{U}}
    v_t(g\mid f)\rho_t(g,f)\,\mu_1(\mathrm{d}f),
    \qquad
    \eta_t\text{-a.e. }g .
    \label{eq:marginal_velocity_rn}
\end{equation}
The above integral is understood in the Bochner sense.

For any bounded measurable test function $\varphi:\mathcal{G}\to\mathbb{R}$,
Fubini's theorem gives
\begin{align}
    \int_{\mathcal{G}} \varphi(g)v_t(g)\,\eta_t(\mathrm{d}g)
    &=
    \int_{\mathcal{U}}
    \int_{\mathcal{G}}
    \varphi(g)v_t(g\mid f)\rho_t(g,f)\,
    \eta_t(\mathrm{d}g)\mu_1(\mathrm{d}f) \notag\\
    &=
    \int_{\mathcal{U}}
    \int_{\mathcal{G}}
    \varphi(g)v_t(g\mid f)\,
    \Gamma_t(\mathrm{d}g\mid f)\mu_1(\mathrm{d}f).
\end{align}
Therefore $v_t(g)$ is the conditional average of $v_t(g\mid f)$ under the disintegrated measure, and admits the representation
\begin{equation}
    v_t(g)
    =
    \int_{\mathcal{U}}
    v_t(g\mid f)
    \frac{\mathrm{d}\Gamma_t(\cdot\mid f)}{\mathrm{d}\eta_t}(g)
    \,\mu_1(\mathrm{d}f),
    \qquad
    \eta_t\text{-a.e. }g .
\end{equation}
This proves the claim.
\end{proof}

\subsection{Proof of Theorem~\ref{thm:3_2}}
\label{pf:3.2}

\begin{proof}
Let $\eta_t$ denote the marginal measure induced by the mixture
\begin{equation}
    \eta_t(A)
    =
    \int_{\mathcal{U}} \Gamma_t(A\mid f)\,\mu_1(\mathrm{d}f),
    \qquad
    A\in\mathcal{B}(\mathcal{G}).
    \label{eq:marginal_measure_mixture}
\end{equation}
By assumption, there exists a measurable set $G\subset\mathcal{U}$ with
$\mu_1(G)=1$ such that for every $f,h\in G$,
\begin{equation}
    \Gamma_t(\cdot\mid f)
    \sim
    \Gamma_t(\cdot\mid h).
\end{equation}

We prove that $\Gamma_t(\cdot\mid f)\ll \eta_t$ for every $f\in G$.
Let $A\in\mathcal{B}(\mathcal{G})$ satisfy $\eta_t(A)=0$.
Since $\mu_1(G)=1$, the mixture identity gives
\begin{equation}
    0
    =
    \eta_t(A)
    =
    \int_G \Gamma_t(A\mid h)\,\mu_1(\mathrm{d}h).
\end{equation}
Because the integrand is nonnegative, it follows that
\begin{equation}
    \Gamma_t(A\mid h)=0
\end{equation}
for $\mu_1$-a.e. $h\in G$.
Choose one such $h_0\in G$.
For any fixed $f\in G$, the equivalence
$\Gamma_t(\cdot\mid f)\sim \Gamma_t(\cdot\mid h_0)$ implies
\begin{equation}
    \Gamma_t(A\mid f)=0 .
\end{equation}
Hence $\Gamma_t(\cdot\mid f)\ll \eta_t$ for every $f\in G$.
This proves the theorem.
\end{proof}

\subsection{Proof of Theorem~\ref{thm_cm_space}}
\label{pf:3.3}

\begin{proof}
We present the proof on the periodic domain $\mathbb{T}^d$ to avoid irrelevant boundary issues.
The same argument extends to bounded domains after replacing Fourier modes by the eigenfunctions of an appropriate elliptic operator.

Let $u$ be a centered stationary Mat\'ern Gaussian field with covariance spectrum
\begin{equation}
    \lambda_k
    \asymp
    \left(\kappa^2+4\pi^2|k|^2\right)^{-(\nu+d/2)},
    \qquad
    k\in\mathbb{Z}^d ,
    \label{eq:matern_spectrum_torus}
\end{equation}
where $\kappa>0$ and $\nu>0$.
Then $u$ admits the Fourier representation
\begin{equation}
    u
    =
    \sum_{k\in\mathbb{Z}^d}
    \sqrt{\lambda_k}\,\xi_k e_k,
    \qquad
    \xi_k\sim\mathcal{N}(0,1),
\end{equation}
where $\{e_k\}_{k\in\mathbb{Z}^d}$ is the Fourier basis.

For $s\in\mathbb{R}$, the Sobolev norm on $\mathbb{T}^d$ is
\begin{equation}
    \|u\|_{H^s}^2
    =
    \sum_{k\in\mathbb{Z}^d}
    (1+|k|^2)^s |\widehat u_k|^2 .
\end{equation}
Taking expectation and using the covariance spectrum gives
\begin{align}
    \mathbb{E}\|u\|_{H^s}^2
    &=
    \sum_{k\in\mathbb{Z}^d}
    (1+|k|^2)^s \lambda_k \notag\\
    &\asymp
    \sum_{k\in\mathbb{Z}^d}
    (1+|k|^2)^s
    \left(\kappa^2+4\pi^2|k|^2\right)^{-(\nu+d/2)} .
\end{align}
For large $|k|$, the summand behaves like
\begin{equation}
    |k|^{2s-2\nu-d}.
\end{equation}
The series over $\mathbb{Z}^d$ converges if and only if
\begin{equation}
    2s-2\nu-d<-d,
\end{equation}
which is equivalent to
\begin{equation}
    s<\nu .
\end{equation}
Thus,
\begin{equation}
    \mathbb{E}\|u\|_{H^s}^2<\infty
    \qquad
    \text{for all } s<\nu .
\end{equation}
Since $\|u\|_{H^s}^2$ is a nonnegative random variable, this implies
\begin{equation}
    u\in H^s
    \qquad
    \text{a.s. for all } s<\nu .
\end{equation}

It remains to identify the Cameron--Martin space.
Let $\mu=\mathcal{N}(0,C)$ be the Gaussian measure induced by $u$.
The Cameron--Martin space is
\begin{equation}
    \mathcal{H}_{\mathrm{CM}}
    =
    \operatorname{Ran}(C^{1/2}),
    \qquad
    \|h\|_{\mathcal{H}_{\mathrm{CM}}}
    =
    \|C^{-1/2}h\|_{L^2}.
\end{equation}
In Fourier variables, $C$ acts by multiplication with $\lambda_k$.
Therefore
\begin{equation}
    \|h\|_{\mathcal{H}_{\mathrm{CM}}}^2
    =
    \sum_{k\in\mathbb{Z}^d}
    \lambda_k^{-1}|\widehat h_k|^2 .
\end{equation}
Using \eqref{eq:matern_spectrum_torus},
\begin{align}
    \|h\|_{\mathcal{H}_{\mathrm{CM}}}^2
    &\asymp
    \sum_{k\in\mathbb{Z}^d}
    \left(\kappa^2+4\pi^2|k|^2\right)^{\nu+d/2}
    |\widehat h_k|^2 \notag\\
    &\asymp
    \|h\|_{H^{\nu+d/2}}^2 .
\end{align}
Hence
\begin{equation}
    \mathcal{H}_{\mathrm{CM}}
    =
    H^{\nu+d/2}
\end{equation}
as sets, with equivalent norms.
This completes the proof.
\end{proof}

\subsection{Proof of Theorem~\ref{thm:3_4}}
\label{pf:3.4}

\begin{proof}
Let $g,f$ be independent centered Gaussian random elements in a separable Hilbert space
$\mathcal{H}$ with covariance operators $Q_0$ and $Q_1$, respectively.
Let
\begin{equation}
    y:=f_t=\alpha_t g+\beta_t f,
    \qquad
    \dot f_t=\dot\alpha_t g+\dot\beta_t f .
\end{equation}
Since $(g,f)$ is jointly Gaussian and $(y,\dot f_t)$ is obtained from $(g,f)$ by a linear transformation,
$(y,\dot f_t)$ is jointly Gaussian in $\mathcal{H}\times\mathcal{H}$.
Therefore the conditional expectation $\mathbb{E}[\dot f_t\mid y]$ is linear in $y$:
\begin{equation}
    \mathbb{E}[\dot f_t\mid y]
    =
    \operatorname{Cov}(\dot f_t,y)
    \operatorname{Cov}(y)^{-1}y ,
\end{equation}
where the inverse is understood on the range of $\operatorname{Cov}(y)$.

Using independence of $g$ and $f$, we compute
\begin{align}
    \operatorname{Cov}(y)
    &=
    \alpha_t^2 Q_0+\beta_t^2 Q_1,\\
    \operatorname{Cov}(\dot f_t,y)
    &=
    \dot\alpha_t\alpha_t Q_0+\dot\beta_t\beta_t Q_1 .
\end{align}
Thus,
\begin{equation}
    v_t(x)
    :=
    \mathbb{E}[\dot f_t\mid f_t=x]
    =
    \left(
        \dot\alpha_t\alpha_t Q_0+\dot\beta_t\beta_t Q_1
    \right)
    \left(
        \alpha_t^2 Q_0+\beta_t^2 Q_1
    \right)^{-1}x .
    \label{eq:velocity_operator_general}
\end{equation}

In the simultaneously diagonalizable setting of the theorem, the relative
covariance operator $Q_1Q_0^{-1}$ acts modewise on the relevant spectral
subspace. Thus \eqref{eq:velocity_operator_general} can be written as
\begin{equation}
    v_t(x)=V(t)x,
\end{equation}
with
\begin{equation}
    V(t)
    =
    \left(
        \dot\alpha_t\alpha_t I
        +
        \dot\beta_t\beta_t Q_1Q_0^{-1}
    \right)
    \left(
        \alpha_t^2 I
        +
        \beta_t^2 Q_1Q_0^{-1}
    \right)^{-1}.
    \label{eq:velocity_operator_relative}
\end{equation}
This is the claimed formula.
\end{proof}

\subsection{Proof of Theorem~\ref{thm:noise_condition}}
\label{apdx_proof_noise_condition}

\begin{proof}
Let $f\sim\mu$, and denote by $\widehat f_k$ its Fourier coefficients.
By definition of the spectral density,
\begin{equation}
    \mathbb E |\widehat f_k|^2 = S_d(k).
\end{equation}
For $s\in\mathbb R$, the squared Sobolev norm satisfies
\begin{equation}
    \mathbb E \|f\|_{H^s}^2
    =
    \sum_{k\in\mathbb Z^d}
    (1+|k|^2)^s S_d(k).
\end{equation}
Since $S_d(k)\asymp |k|^{-\beta}$ as $|k|\to\infty$, the high-frequency
part of the above series is comparable to
\begin{equation}
    \sum_{k\in\mathbb Z^d}
    |k|^{2s-\beta}.
\end{equation}
Using the standard lattice-shell estimate
\[
    \#\{k\in\mathbb Z^d: n\le |k|<n+1\}\asymp n^{d-1},
\]
this series has the same convergence behavior as
\begin{equation}
    \sum_{n=1}^{\infty}
    n^{d-1+2s-\beta}.
\end{equation}
Therefore it converges if and only if
\begin{equation}
    d-1+2s-\beta<-1,
\end{equation}
or equivalently,
\begin{equation}
    s<\frac{\beta-d}{2}.
\end{equation}
Hence, for every $s<s_{\mathrm{data}}$, where
\[
    s_{\mathrm{data}}:=\frac{\beta-d}{2},
\]
we have
\begin{equation}
    \mathbb E\|f\|_{H^s}^2<\infty.
\end{equation}
In particular,
\begin{equation}
    f\in H^s
    \qquad
    \mu\text{-a.s. for every }s<s_{\mathrm{data}}.
    \label{eq:data_regular_threshold}
\end{equation}
In the Gaussian case with nondegenerate Fourier modes, the converse also
holds: $f\notin H^s$ almost surely for $s\ge s_{\mathrm{data}}$. Thus
$s_{\mathrm{data}}$ is the sharp Sobolev regularity threshold of the target
field.

Now consider the source Mat\'ern Gaussian noise with smoothness parameter
$\nu_0$. Its Cameron--Martin space is
\begin{equation}
    \mathcal H_{\mathrm{CM}}^{(0)}
    =
    H^{\nu_0+d/2}.
\end{equation}
To guarantee that typical target samples belong to the source
Cameron--Martin space, it is enough to require
\begin{equation}
    \nu_0+\frac d2 < s_{\mathrm{data}}.
\end{equation}
Indeed, under this condition, \eqref{eq:data_regular_threshold} gives
\[
    f\in H^{\nu_0+d/2}
    =
    \mathcal H_{\mathrm{CM}}^{(0)}
    \qquad
    \mu\text{-a.s.}
\]
Substituting $s_{\mathrm{data}}=(\beta-d)/2$ yields
\begin{equation}
    \nu_0+\frac d2
    <
    \frac{\beta-d}{2}.
\end{equation}
Equivalently,
\begin{equation}
    \nu_0
    <
    \frac{\beta}{2}-d.
\end{equation}
This proves the claimed condition.
\end{proof}

\subsection{Proof of Theorem~\ref{thm:3_6}}
\label{apdx:pf_3_6}
\begin{proof}
Let $\mu_k$ denote the positive spectral covariance ratio between the target
and source measures at Fourier mode $k$. For the linear interpolation,
write
\[
    q_k(t)=(1-t)^2+\mu_k t^2,
    \qquad
    \widetilde V_k(t)=\frac{q_k'(t)}{2q_k(t)}.
\]
The extrema of $\widetilde V_k$ on $[0,1]$ give
\begin{equation}
    \sup_{t\in[0,1]}|\widetilde V_k(t)|
    =
    \frac12
    \left(
        \sqrt{\mu_k}
        +
        \frac{1}{\sqrt{\mu_k}}
    \right).
    \label{eq:vmin_mu}
\end{equation}
Indeed, when $0<\mu_k<1$, the minimum occurs at
$t=(1-\sqrt{\mu_k})/(1+\mu_k)$ and equals the negative of the
right-hand side of \eqref{eq:vmin_mu}; the maximum is $1$ at $t=1$.
When $\mu_k>1$, the minimum is $-1$ at $t=0$, while the maximum occurs at
$t=(1+\sqrt{\mu_k})/(1+\mu_k)$ and equals the right-hand side.
For $\mu_k=1$, the maximum absolute value is $1$ at either endpoint.
Fix a stiffness threshold $L>1$. The uniform magnitude bound
\[
    |\widetilde V_k(t)|<L,\qquad t\in[0,1],
\]
is equivalent to
\[
    \sqrt{\mu_k}+\frac{1}{\sqrt{\mu_k}}<2L.
\]
Setting $z_k=\sqrt{\mu_k}>0$, this becomes
\[
    z_k+z_k^{-1}<2L.
\]
Solving the quadratic inequality gives
\[
    L-\sqrt{L^2-1}
    <
    z_k
    <
    L+\sqrt{L^2-1}.
\]
Therefore
\begin{equation}
    C_L
    <
    \mu_k
    <
    C_L^{-1},
    \qquad
    C_L:=
    \left(L-\sqrt{L^2-1}\right)^2 .
    \label{eq:mu_stiffness_interval}
\end{equation}
Here we used
\[
    \left(L+\sqrt{L^2-1}\right)^2=C_L^{-1}.
\]

Now assume that both the target and source are Mat\'ern-type Gaussian
measures with the same length-scale parameter $\kappa$. Denote their
smoothness parameters by $\nu_1$ and $\nu_0$, respectively, and let
$r_0=\mathcal F_0(Q_1)/\mathcal F_0(Q_0)>0$. Their spectral covariance ratio
takes the form
\begin{equation}
    \mu_k
    =
    r_0
    \left(
        \frac{\kappa^2}
        {\kappa^2+4\pi^2\|k\|^2}
    \right)^{\nu_1-\nu_0}.
    \label{eq:mu_k_matern_ratio}
\end{equation}
By the Cameron--Martin admissibility condition, we choose
\[
    \nu_0<\nu_{\rm adm}
    =
    \nu_1-\frac d2 .
\]
In particular, $\nu_0<\nu_1$. Hence
$\nu_1-\nu_0>0$, and $\mu_k$ is decreasing with respect to
$\|k\|$.

For all resolved modes $\|k\|\le N$, define
\[
    M_N
    :=
    \frac{\kappa^2}{\kappa^2+4\pi^2N^2}
    \in(0,1).
\]
Since $\mu_k$ is minimized at the largest resolved frequency, we have
\begin{equation}
    \mu_k
    \ge
    r_0
    M_N^{\nu_1-\nu_0},
    \qquad
    \|k\|\le N .
    \label{eq:mu_lower_bound}
\end{equation}
Therefore, the lower bound $\mu_k>C_L$ in
\eqref{eq:mu_stiffness_interval} is guaranteed by
\begin{equation}
    r_0
    M_N^{\nu_1-\nu_0}
    >
    C_L .
    \label{eq:lower_condition_raw}
\end{equation}

It remains to control the upper bound $\mu_k<C_L^{-1}$. Since
$\mu_k$ is largest at the zero mode, it is sufficient to impose
\begin{equation}
    \mu_0
    =
    r_0
    <
    C_L^{-1}.
    \label{eq:upper_ratio_condition}
\end{equation}
Combining \eqref{eq:lower_condition_raw} and
\eqref{eq:upper_ratio_condition} gives
\[
    C_L
    <
    \mu_k
    <
    C_L^{-1},
    \qquad
    \|k\|\le N .
\]
By \eqref{eq:mu_stiffness_interval}, this implies
\[
    |\widetilde V_k(t)|<L,
    \qquad t\in[0,1],\quad \|k\|\le N .
\]

Finally, the Cameron--Martin admissibility condition from
Theorem~\ref{thm:noise_condition} requires
\[
    \nu_0<\nu_{\rm adm}
    =
    \frac{\beta}{2}-d .
\]
Choosing
\[
    \nu_0<\nu_{\rm adm}-\varepsilon
\]
with $\varepsilon>0$ enforces this condition strictly. Therefore any
$\nu_0$ satisfying
\[
    0<\nu_0<\nu_{\rm adm}-\varepsilon,
    \qquad
    r_0 M_N^{\nu_1-\nu_0}>C_L,
    \qquad
    r_0<C_L^{-1}
\]
is both Cameron--Martin admissible and satisfies the prescribed spectral
stiffness bound. This proves the claim.
\end{proof}

\subsection{Proof of Eq.~\ref{eq:tweedie}}
\label{pf:tweedie}

\begin{proof}
Let $H$ be a separable Hilbert space.
Assume $f,g\in L^2(\Omega;H)$ and, for each $t\in[0,1]$,
\begin{equation}
    f_t
    =
    \alpha_t g+\beta_t f,
    \label{eq:linear_path_tweedie}
\end{equation}
where $\alpha_t,\beta_t$ are deterministic $C^1$ scalar functions.
Assume further that
\begin{equation}
    \dot\alpha_t\neq 0
\end{equation}
and
\begin{equation}
    c_t
    :=
    \beta_t-\dot\beta_t\frac{\alpha_t}{\dot\alpha_t}
    \neq 0 .
    \label{eq:ct_nonzero}
\end{equation}
The conditional velocity is
\begin{equation}
    v_t(f_t\mid f)
    :=
    \frac{\mathrm{d}}{\mathrm{d}t}f_t
    =
    \dot\alpha_t g+\dot\beta_t f ,
    \label{eq:conditional_velocity_tweedie}
\end{equation}
where the derivative is understood in $L^2(\Omega;H)$.

From \eqref{eq:conditional_velocity_tweedie}, we solve for $g$:
\begin{equation}
    g
    =
    \frac{1}{\dot\alpha_t}
    \left(
        v_t(f_t\mid f)-\dot\beta_t f
    \right).
\end{equation}
Substituting this into \eqref{eq:linear_path_tweedie} yields
\begin{align}
    f_t
    &=
    \alpha_t
    \frac{1}{\dot\alpha_t}
    \left(
        v_t(f_t\mid f)-\dot\beta_t f
    \right)
    +
    \beta_t f \notag\\
    &=
    \frac{\alpha_t}{\dot\alpha_t}v_t(f_t\mid f)
    +
    \left(
        \beta_t-\dot\beta_t\frac{\alpha_t}{\dot\alpha_t}
    \right)f \notag\\
    &=
    \frac{\alpha_t}{\dot\alpha_t}v_t(f_t\mid f)
    +
    c_t f .
\end{align}
Therefore,
\begin{equation}
    f
    =
    c_t^{-1}
    \left(
        f_t
        -
        \frac{\alpha_t}{\dot\alpha_t}v_t(f_t\mid f)
    \right).
    \label{eq:f_from_ft_velocity}
\end{equation}

Taking conditional expectation with respect to $\sigma(f_t)$ gives
\begin{align}
    \mathbb{E}[f\mid f_t]
    &=
    c_t^{-1}
    \left(
        f_t
        -
        \frac{\alpha_t}{\dot\alpha_t}
        \mathbb{E}\left[
            v_t(f_t\mid f)
            \mid f_t
        \right]
    \right).
\end{align}
By the definition of the marginal velocity field,
\begin{equation}
    v_t(f_t)
    =
    \mathbb{E}
    \left[
        v_t(f_t\mid f)
        \mid f_t
    \right]
    \qquad
    \text{a.s.}
\end{equation}
Hence
\begin{equation}
    \mathbb{E}[f\mid f_t]
    =
    c_t^{-1}
    \left(
        f_t
        -
        \frac{\alpha_t}{\dot\alpha_t}v_t(f_t)
    \right),
\end{equation}
which proves Eq.~\ref{eq:tweedie}.
\end{proof}

%% file: src/7_Appendix_3.tex
\section{Algorithm of \themodel}\label{sec_alg}
\paragraph{Why spectral analysis matters for scientific data.}
For data generated by PDEs and SPDEs, the frequency domain provides a natural description of both physical structure and numerical difficulty.
The decay of spectral energy reflects the regularity of the underlying random fields, while the distribution of energy across modes determines how much large-scale and small-scale variation must be transported by the generative flow.
This is particularly important in function-space generation: a source noise distribution with excessive high-frequency energy may look harmless after discretization, but it can induce a spectrally stiff marginal velocity field and cause resolution-dependent artifacts.
Conversely, a generated sample can have a small pointwise error while still failing to reproduce the correct multiscale energy distribution.
We therefore use spectral diagnostics as a central evaluation tool.
They allow us to test whether the learned model preserves the target measure across spatial scales and whether the proposed regularity-aware noise design improves stability beyond visual sample quality or grid-based errors alone.
\paragraph{Algorithm~1 (Flow Matching Training).}
Algorithm~1 summarizes the flow-matching training procedure under a prescribed interpolation schedule.
Given a dataset $\mathcal{D}$ of samples from the target distribution, scalar schedules $\alpha_t$ and $\beta_t$, and a parametric velocity field $v_\theta(\cdot,t)$, the objective is to learn a time-dependent vector field that transports samples from a source noise distribution to the target distribution.
Prior to training, we estimate the spectral decay rate $\beta_{\rm spec}$ of the data and select a source noise distribution $\pi$ accordingly.
This step implements the proposed noise-aware principle: the source noise should be no smoother than the target distribution, while avoiding excessive high-frequency energy that would induce spectrally stiff transport dynamics.

During each iteration, a mini-batch $\{x_i\}_{i=1}^m$ is drawn from the dataset, and a matching batch of noise samples $\{\varepsilon_i\}_{i=1}^m$ is drawn independently from the chosen noise prior $\pi$.
Time stamps $\{t_i\}_{i=1}^m$ are independently sampled from $\mathrm{Unif}(0,1)$, so that the model is trained across the full interpolation horizon.
For each triplet $(x_i,\varepsilon_i,t_i)$, the interpolated state and the corresponding target velocity are defined by
\begin{equation}
    x_{t_i}
    =
    \alpha_{t_i}\varepsilon_i
    +
    \beta_{t_i}x_i,
    \qquad
    u_{t_i}
    =
    \dot{\alpha}_{t_i}\varepsilon_i
    +
    \dot{\beta}_{t_i}x_i .
\end{equation}
The network prediction $\hat u_{t_i}=v_\theta(x_{t_i},t_i)$ is trained by minimizing
\begin{equation}
    \mathcal{L}(\theta)
    =
    \frac{1}{m}
    \sum_{i=1}^m
    \left\|
        \hat u_{t_i}-u_{t_i}
    \right\|_{\mathcal H}^2 .
\end{equation}
In the discrete implementation, the Hilbert-space norm is evaluated by its grid-based approximation.
Repeating this procedure until convergence yields a trained velocity field for unconditional generation and guided sampling.

\begin{algorithm}[ht]
  \caption{Flow Matching Training}
  \label{alg:flow_matching}
  \begin{algorithmic}
    \STATE {\bfseries Input:} dataset $\mathcal D$,
    schedules $\alpha_t,\beta_t$, model $v_\theta$, learning rate $\eta$.
    \STATE Spectral decay rate $\beta$ analysis of data.
    \STATE Decide noise distribution $\pi$ by $\beta$.
    \REPEAT
    \STATE Sample mini-batch $\{x_i\}_{i=1}^m \sim \mathcal D$
    \STATE Sample noise $\{\epsilon_i\}_{i=1}^m \sim \pi$
    \STATE Sample times $\{t_i\}_{i=1}^m \sim \mathrm{Unif}(0,1)$
    \FOR{$i=1$ {\bfseries to} $m$}
      \STATE $x_{t_i} \leftarrow \alpha_{t_i}\epsilon_i + \beta_{t_i}x_i$
      \STATE $u_{t_i} \leftarrow \dot\alpha_{t_i}\epsilon_i + \dot\beta_{t_i}x_i$
      \STATE $\hat u_{t_i} \leftarrow v_\theta(x_{t_i}, t_i)$
    \ENDFOR
    \STATE $\mathcal L(\theta) \leftarrow \frac{1}{m}\sum_{i=1}^m \|\hat u_{t_i}-u_{t_i}\|_2^2$
    \STATE $\theta \leftarrow \theta - \eta \nabla_\theta \mathcal L(\theta)$
    \STATE Update model parameters
    \UNTIL{convergence}
  \end{algorithmic}
\end{algorithm}

\paragraph{Algorithm~2 (Guided Flow Matching Sampling with DPS).}
Algorithm~2 describes the guided sampling procedure based on the trained velocity field \(v_\theta(\cdot,t)\), where external observations are incorporated through a data-consistency term. Let \(A\) denote a measurement operator and let \(y\) be the observed measurement corrupted by noise of level \(\sigma_y\). The goal is to generate a sample that remains plausible under the learned generative prior while being consistent with the observation model.

The procedure starts from an initial state \(x\sim \pi\) drawn from the same noise prior used at training time. The state is evolved from \(t=0\) to \(t=1\) using a numerical ODE solver with step size \(\Delta t\) over \(N\) steps. At each step, the base velocity \(v_\theta(x,t)\) is computed and a guidance correction is constructed following the DPS principle. An estimate \(\hat x_1\) of the terminal (denoised) sample is formed from the current state (depending on the chosen schedule/parameterization), and a gradient signal is computed to reduce the mismatch between \(A\hat x_1\) and \(y\). The guidance gradient takes the form
\[
g \;\propto\; \sigma \,\nabla_x \,\Phi\!\bigl(A\hat x_1,\,y\bigr),
\]
where \(\Phi\) denotes a data-fidelity loss (e.g., \(\|A\hat x_1-y\|_2^2\)) and \(\sigma\) is the guidance strength. The update direction is obtained by combining the base velocity and the guidance correction, after which the state is advanced by one solver step. After \(N\) iterations, the resulting state is returned as the guided sample.
\begin{algorithm}[ht]
  \caption{Guided Flow Matching Sampling with DPS Guidance}
  \label{alg:guided_fm_dps}
  \begin{algorithmic}
    \STATE {\bfseries Input:} trained velocity field $v_\theta(\cdot,t)$, noise prior $\pi$,
    measurement operator $A$, observation $y$, noise level $\sigma_y$, step size $\Delta t$, number of steps $N$
    \STATE Sample initial state $x \sim \pi$
    \STATE Set $t \leftarrow 0$
    \FOR{$n=1$ {\bfseries to} $N$}
      \STATE Estimate $\hat{x}_1$ at $t=1$.
      \STATE Compute DPS gradient: $g \leftarrow \sigma\nabla_{x}\Phi\bigl(A\hat{x}_1, y\bigr)$.
      \STATE Compute base velocity: $v \leftarrow v_\theta(x,t)$.
      \STATE Update state $x$ by neural ode.
      \STATE Update time: $t \leftarrow t + \Delta t$
    \ENDFOR
    \STATE {\bfseries Output:} guided sample $x$
  \end{algorithmic}
\end{algorithm}

%% file: src/7_Appendix_4.tex
\section{Experiments Details} \label{apdx_exp_details}
This section provides the experimental details needed to reproduce and interpret the reported results.
All experiments were conducted on a single NVIDIA A40 GPU with 48\,GB of memory.
We document the full data preprocessing pipeline, model architecture, optimizer settings, learning-rate schedule, batch size, and training duration.
Unless otherwise stated, all comparisons are performed under the same experimental configuration across different noise choices.
In particular, we keep the backbone architecture, training budget, optimizer, and evaluation protocol fixed, and vary only the noise formulation.
This controlled setup allows us to isolate the effect of noise design on training stability, sampling efficiency, and generation quality.

We also include ablation studies to examine the sensitivity of the proposed method to key hyperparameters, including the noise smoothness, correlation length, regularization strength, and sampling-step configuration.
These experiments are designed to identify parameter regimes in which the method remains stable and to assess whether the observed improvements are robust to reasonable hyperparameter changes.
The complete experimental results are reported in the appendix.
In addition to the summary results presented in the main text, we provide per-metric comparisons, convergence curves, run-to-run variability, and qualitative visualizations.
Where appropriate, we also include diagnostic plots, such as training loss curves and stability indicators across samples or time steps, to give a more complete picture of model behavior.

Overall, the results show that Mat\'ern noise provides consistent advantages over Gaussian noise under matched experimental settings.
During training, it leads to faster convergence and reduced sensitivity to hyperparameter choices.
During inference, it improves numerical stability and produces more reliable samples, with fewer resolution-dependent artifacts and unstable trajectories.
These findings support the central claim of this work: regularity-aware noise design improves both the optimization behavior and the inference-time robustness of functional generative models for multiscale SPDE data.
\subsection{Baselines}
Our baselines are constructed from standard choices commonly used in flow-matching generative models, but we report them according to their two key components: the source noise distribution and the neural backbone.
This notation makes the comparison more transparent for our purpose, since the central question of this work is how the choice of noise affects training and sampling under otherwise matched settings.
For example, ``Gaussian + FNO'', also known as DiffusionPDE~\citep{huang2024diffusionpde} denotes a standard flow-matching model using Gaussian source noise and an FNO backbone, which is also known as FFM \citep{kerrigan2023functional}, while ``Mat\'ern + FNO'' replaces only the source noise with the proposed regularity-aware Mat\'ern noise.

We use the Fourier Neural Operator (FNO) as the primary backbone because it is a widely used operator-learning architecture for PDE data and can efficiently capture global interactions through spectral representations.
To test whether the observed improvements are tied to this specific architecture, we also evaluate U-Net as an alternative grid-based backbone.
When U-Net is used, the model is instantiated as a conventional flow-matching model on Euclidean grids and is therefore treated as a standard baseline rather than as part of our operator-based formulation.
Across all backbone choices, we keep the optimizer, training budget, data preprocessing, interpolation schedule, sampling protocol, and evaluation metrics fixed unless otherwise stated.
Thus, performance differences can be attributed primarily to the source noise design and, secondarily, to the chosen backbone.

Several related functional generative modeling frameworks are discussed for context.
In particular, Functional Flow Matching (FFM) is closely related to our setting at a conceptual level, but its published experiments do not provide quantitative comparisons or systematic analyses of source-noise selection.
We therefore include FFM as a conceptual reference rather than a direct empirical baseline.

\begin{table}[H]
\centering
\caption{Summary of baseline methods.}
\label{tab:baseline}
\begin{tabular}{lcc}
\toprule
Method & Model & Noise \\
\midrule
Vanilla FM~\citep{lipmanflow} & U-Net & Gaussian \\
FFM~\citep{kerrigan2023functional} & FNO & Gaussian \\
DiffusionPDE~\citep{huang2024diffusionpde} & FNO & Gaussian\\
GAN~\citep{goodfellow2014generative} & U-Net & -- \\
VAE~\citep{kingma2013auto} & U-Net & -- \\
GANO~\citep{rahman2022generative} & FNO & -- \\
VANO~\citep{seidman2023variational} & FNO & -- \\
\bottomrule
\end{tabular}
\end{table}

\subsection{Introduction of Datasets}\label{apdx:intro_dataset}
We next introduce the numerical benchmarks used to evaluate the proposed method.
The experiments are designed to test whether the choice of source noise affects training stability, sampling efficiency, and resolution consistency when learning invariant measures of multiscale stochastic systems.
We consider three representative stochastic-field settings with different spatial structures and spectral properties.
\subsubsection{Stochastic Allen Cahn Equation (SAC)}
The 1D stochastic Allen--Cahn equation is defined as:
\begin{equation}
    \partial_t u = \partial_{xx} u - V'(u) + \sqrt{2}\eta_t,
    \label{eq:ac_spde}
\end{equation}
posed on the unit interval $[0,1]$ with Neumann boundary conditions and driven by space--time white noise $\eta$. Here, $V(u) = (1 - u^2)^2$ is a double-well potential. Under standard assumptions, the SPDE~\eqref{eq:ac_spde} admits an invariant distribution with formal density
\begin{equation}
    \exp\!\left(
        - \int_0^1 
            \Bigl( \tfrac{1}{2} (\partial_x u(x))^2 + V(u(x)) \Bigr)
        \, dx
    \right).
    \label{eq:ac_invariant}
\end{equation}
For the Allen--Cahn equation, the data distribution is characterized by a path-integral formulation that gives rise to a bimodal invariant measure. Typical realizations of this measure exhibit nearly constant profiles around $u = \pm 1$. Owing to the pronounced nonlinearity inherent in such dynamics, this dataset is considered in our study. The spatial domain is discretized using finite-difference schemes on $N$ equispaced grid points, resulting in an $N$-dimensional approximation of the invariant distribution. Reference samples $x_1$ from this distribution are generated using ensemble Markov chain Monte Carlo (MCMC) methods \citep{chen2025new}.
\subsubsection{Stochastic Navier Stokes Equation (SNS)}
Here we consider the dynamics of the 2D stochastic Navier-Stokes(SNS) equations with random forcing in the domain $[0, 2\pi]^2$. The vorticity formulation of SNS equations is:
\begin{equation}\label{sns}
    d \omega + v\nabla\omega dt = \nu \Delta\omega - \alpha \omega + \sigma dW_t
\end{equation}
where $v = \nabla^\perp \psi = (-\partial_y \psi,\, \partial_x \psi)$ is the velocity expressed in terms of the stream function $\psi$, which solves $-\Delta \psi = \omega$ and $dW_t$ denotes white-in-time colored random forcing. The quantities $\nu$, $\alpha$, and $\sigma$ are positive parameters. 
A pseudo-spectral scheme is used to solve Equ.~\eqref{sns}. The generation of the stochastic Navier--Stokes (SNS) data is based on the methodology proposed by \citet{chen2025new}. 
All training and evaluation data are generated by numerically solving the two-dimensional stochastic Navier--Stokes equations under random external forcing. The viscosity is set to $\nu = 10^{-3}$, with damping and forcing amplitudes given by $\alpha = 0.1$ and $\varepsilon = 1$, respectively.

The stochastic forcing is defined as a finite-dimensional Gaussian process of the form
\begin{align}
\eta(t,x,y) = {} &
\sum_{i=1}^{4} W_i(t)\,\phi_i(x,y)
+ \sum_{i=5}^{8} W_i(t)\,\phi_i(x,y),
\end{align}
where $\{W_i(t)\}_{i=1}^8$ are mutually independent standard Wiener processes, and the spatial basis functions are given by
\[
\{\sin(6x), \cos(7x), \sin(5(x+y)), \cos(8(x+y)),
\cos(6x), \sin(7x), \cos(5(x+y)), \sin(8(x+y))\}.
\]
This construction induces a Gaussian random field that is stationary with respect to spatial translations and injects energy at both low and intermediate Fourier modes.

For this class of stochastic forcing, the resulting stochastic Navier--Stokes dynamics admit a unique invariant measure, as established by \citet{hairer2006ergodicity}. The linear damping term $-\alpha \omega$ is introduced to suppress excessive energy accumulation at large spatial scales, thereby promoting numerical stability and enhancing ergodic mixing of the flow.

The stochastic partial differential equation is solved using a pseudo-spectral method in space combined with an Euler--Maruyama time integration scheme. All simulations are performed on a uniform $256 \times 256$ grid, which provides sufficient resolution for accurately capturing the relevant flow statistics under the considered parameter regime. 
The system is discretized on a \(256 \times 256\) spatial grid with a time step of \(\Delta t = 10^{-4}\), 
and vorticity snapshots are recorded every \(0.5\) time units. 
In total, 1{,}000 independent trajectories are simulated over the time interval \(t \in [0,100]\). 
To ensure convergence to stationarity, data from the initial transient period \(t \in [0,50]\) are discarded. 
The remaining trajectories yield approximately \(10^{5}\) snapshots, 
which are regarded as approximate samples from the invariant measure of Equ.~\eqref{sns}. 
The main models use snapshots downsampled to \(64\times64\); the \(256\times256\) solver output is retained to construct the resolution-transfer evaluations.

\subsection{Experiment 1: Generation at Fixed Resolution}\label{apdx:gafr}
\textbf{Mat\'ern field.}
In the Mat\'ern field case, the target measure is a two-dimensional Mat\'ern Gaussian random field, $\mu_1 = \mathcal{N}\!\left(0,\,(400-\Delta)^{-3}\right)$ with $(\nu_1,\ell_1)=(2,0.1)$. The source uses $(\nu_0,\ell_0)=(0.5,0.05)$, which satisfies the strict admissibility threshold $\nu_0<1$ and has the same $\kappa=20$ as the target. This equal-$\kappa$ choice is the idealized case in Thm.~\ref{thm:3_6}; the length-scale ablation in Appendix~\ref{apdx:source_selection_ablation} evaluates departures from it.
Each random field realization is discretized on an \(N\times N\) grid with \(N=64\). The resulting neural ODE is integrated using a fourth-order Runge--Kutta (RK4) scheme. To assess robustness, the effects of time-step size and noise magnitude on the generated samples are evaluated systematically, and the corresponding quantitative results are summarized in Table~\ref{tab_all}.

\textbf{Architectures and training configuration.}
In the main experiments, the Fourier Neural Operator (FNO) employs four spectral convolution blocks. We follow the \texttt{neuralop} convention in which \(\texttt{n\_modes}=[m_1,m_2]\) is the total number of retained discrete Fourier coefficients per axis, including positive and negative frequencies. Thus \([64,64]\) covers the full \(64\times64\) Fourier grid rather than 64 modes of each sign. The hidden channel width is 64, the time embedding has dimension 64, and the projection width is 256. For the U-Net backbone, four resolution levels are used with \(\texttt{dim\_mults}=[1,2,2,2]\), an attention head dimension of 32, and four attention heads. Training uses batch size 64, Adam, and learning rate \(3\times10^{-4}\). All random-field data are variance-normalized. Qualitative examples are provided in Fig.~\ref{fig:qualitative_all_cases}.

\textbf{SAC.}
In the SAC example, the FNO backbone consists of four spectral convolution layers, with the number of retained modes set to 64 and the projection channel width set to 128. Figure~\ref{fig:qualitative_all_cases}(b, e) presents representative samples from the ground-truth dataset and the corresponding generated samples.

\textbf{SNS.}
In the SNS example, the FNO has four spectral convolution layers, \(\texttt{n\_modes}=[64,64]\) at the \(64\times64\) training resolution, embedding and hidden widths of 128, and projection width 256. Training uses batch size 100, Adam, and learning rate \(3\times10^{-4}\). Figure~\ref{fig:qualitative_all_cases}(c,f) shows a target and a five-step NAGO sample. Timings are reported only under the controlled protocol in Appendix~\ref{apdx:speed_protocol}.

We additionally evaluate a grid-based U-Net under the same data and evaluation protocol. Mat\'ern sources improve several U-Net entries in Table~\ref{tab_all}, showing that the source effect is not exclusive to FNO. Because only two backbones are tested and their optimal hyperparameters may differ, we do not claim architecture-independent superiority.
\begin{figure}[!ht]
  \centering

  \begin{subfigure}[t]{0.34\textwidth}
    \centering
    \includegraphics[width=\textwidth]{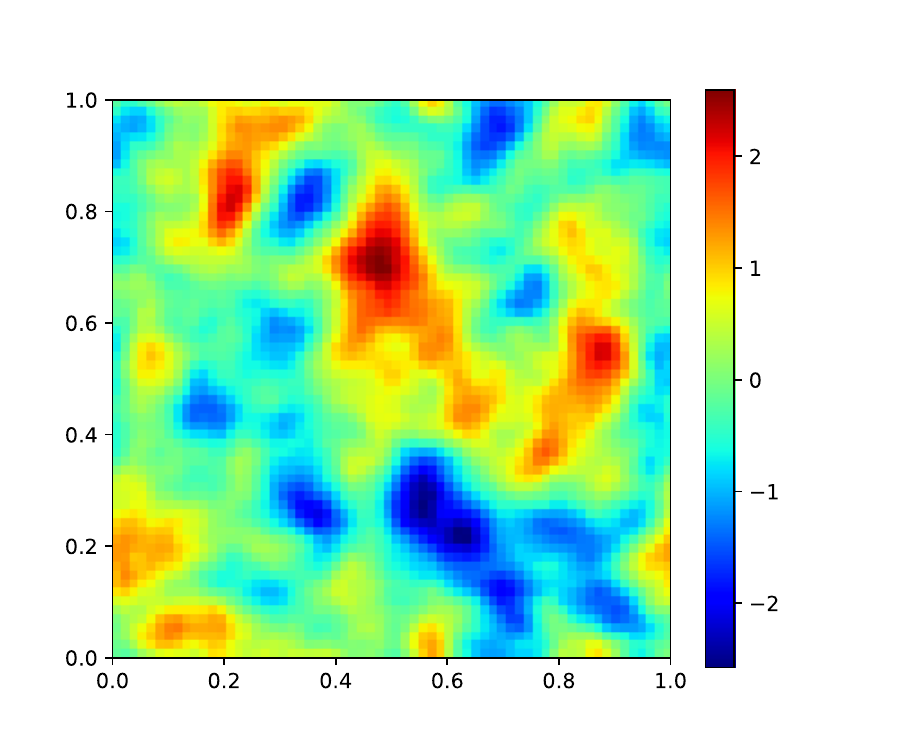}
    \caption{Mat\'ern: Ground truth}
  \end{subfigure}\hfill
  \begin{subfigure}[t]{0.30\textwidth}
    \centering
    \includegraphics[width=\textwidth]{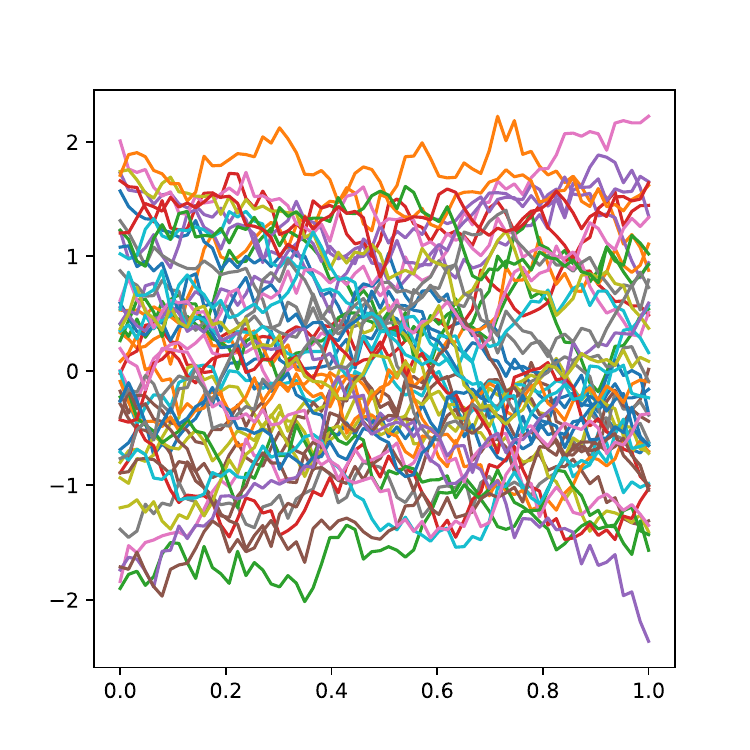}
    \caption{SAC: Ground truth}
  \end{subfigure}\hfill
  \begin{subfigure}[t]{0.34\textwidth}
    \centering
    \includegraphics[width=\textwidth]{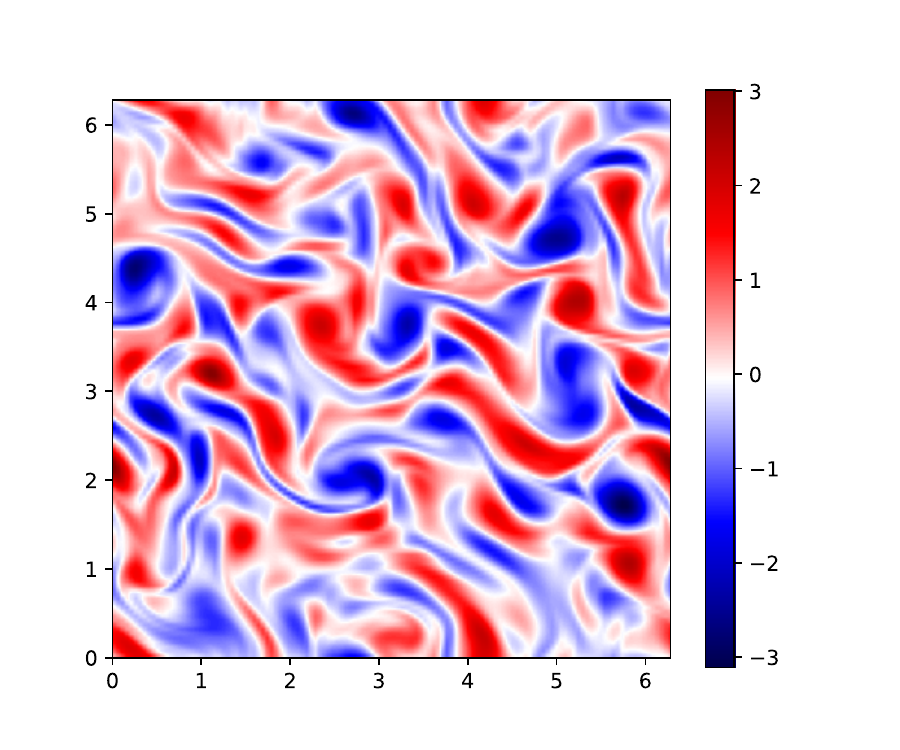}
    \caption{SNS: Ground truth}
  \end{subfigure}

  \vspace{0.6em}

  \begin{subfigure}[t]{0.34\textwidth}
    \centering
    \includegraphics[width=\textwidth]{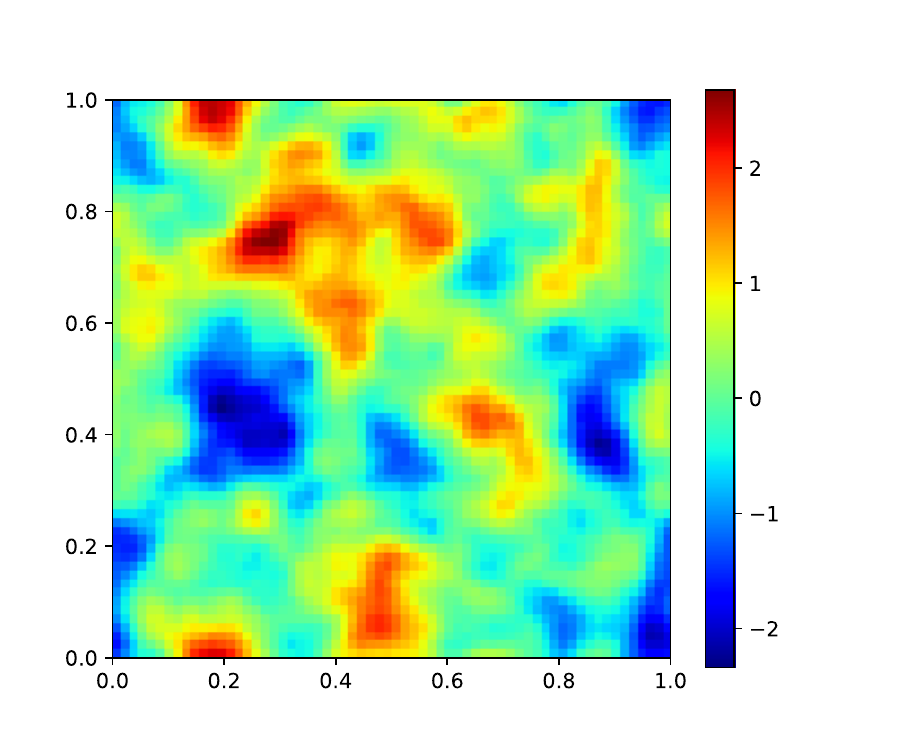}
    \caption{Mat\'ern: Generated by \themodel}
  \end{subfigure}\hfill
  \begin{subfigure}[t]{0.30\textwidth}
    \centering
    \includegraphics[width=\textwidth]{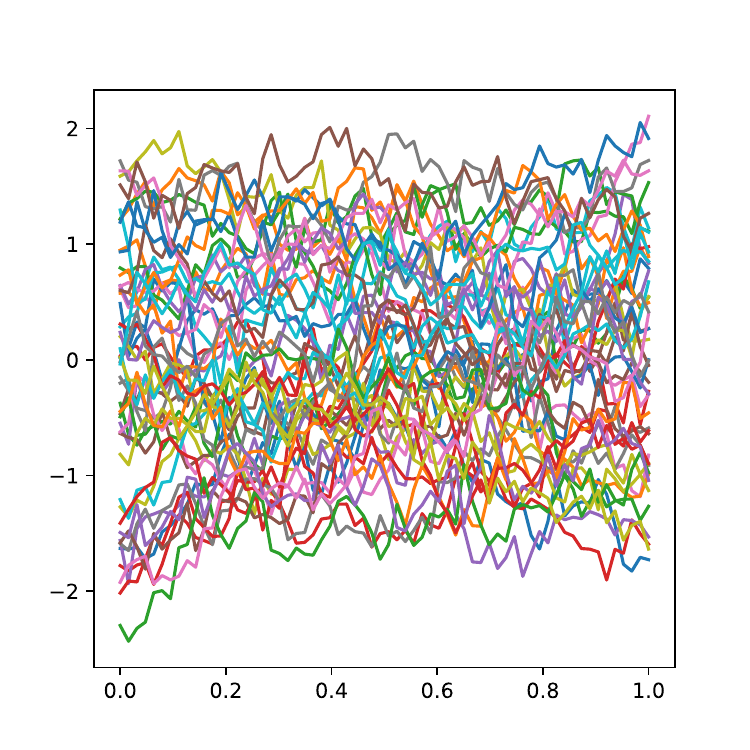}
    \caption{SAC: Generated by \themodel}
  \end{subfigure}\hfill
  \begin{subfigure}[t]{0.34\textwidth}
    \centering
    \includegraphics[width=\textwidth]{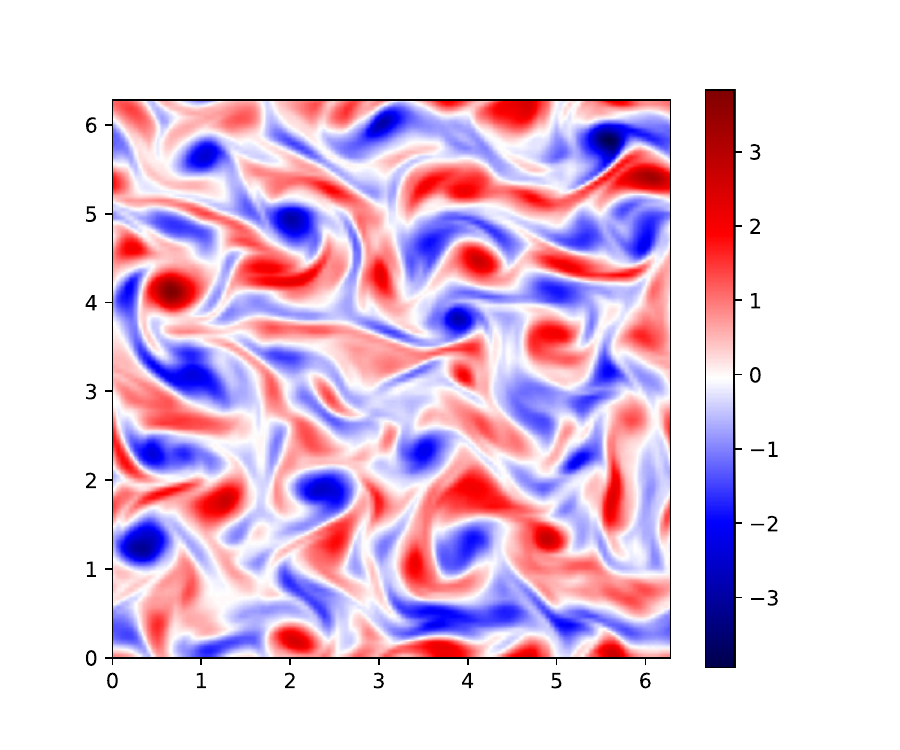}
    \caption{SNS: Generated by \themodel}
  \end{subfigure}

  \caption{Qualitative comparison across three cases: Mat\'ern, SAC, and SNS. The first row shows ground-truth samples, and the second row shows samples generated by \themodel.}
  \label{fig:qualitative_all_cases}
\vspace{-10pt}
  
\end{figure}
\begin{remark}[Non-monotonic error with respect to ODE steps]
\label{rem:non_monotone_steps}
In several experiments, we observe that increasing the number of ODE
integration steps does not always monotonically improve the generation error.
For example, in Table~\ref{tab_all}, the Mat\'ern FNO error increases when
the number of steps is increased from a small value to a larger one, and a
similar non-monotonic behavior appears in guided generation.

This phenomenon does not contradict the standard intuition that smaller ODE
time steps reduce numerical discretization error. In a learned flow model,
the total sampling error is affected not only by the solver discretization
error, but also by the approximation error of the learned marginal velocity
field. Increasing the number of integration steps reduces the former, but it
also increases the number of evaluations of the learned velocity field.
Therefore, small biased errors in the learned field may accumulate along the
sampling trajectory. This accumulation can be more significant in spectrally
stiff regimes, where high-frequency components are more sensitive to
velocity-field errors.

The effect can be further amplified in guided generation. The guidance term
modifies the learned velocity field at sampling time and may introduce an
additional bias that is not present during training. Although this guidance
can improve conditional alignment at moderate step counts, too many
integration steps may repeatedly amplify off-manifold or high-frequency
components, leading to degraded metrics. Therefore, the optimal number of
sampling steps reflects an empirical accuracy--stability trade-off rather
than purely the numerical accuracy of the ODE solver.
\end{remark}







\subsection{Experiment 2: Generation at Varying Resolution} \label{apdx:gavr}

Fig.~\ref{fig_all} (C3) illustrates the spectral density curves (i.e., spectral energy distributions) estimated from samples generated by different frameworks at multiple spatial resolutions. This experiment is designed to evaluate cross-resolution consistency, namely, whether a model trained under a given discretization can preserve the characteristic frequency content of the target distribution when generation is performed on coarser or finer grids. For the Mat\'ern field example, \themodel employs a five-step solver during sampling, whereas all other frameworks are configured with solvers using at least ten steps, reflecting their default inference-time settings and ensuring stable generation. In all cases, 1{,}000 independent samples are generated per resolution to estimate the corresponding spectral energy distributions with sufficient statistical reliability. For each generated sample, a discrete Fourier transform is computed on the associated grid, and the resulting power spectrum is aggregated (e.g., via radial averaging over frequency shells) to obtain a one-dimensional spectral density curve. The curves reported in Fig.~\ref{fig_all} (C4) are then obtained by averaging these spectral estimates across samples, which reduces Monte Carlo noise and enables a clearer comparison of frequency-domain discrepancies among methods.

For the SNS benchmark, \themodel again adopts a five-step solver, while FFM uses a ten-step solver, following the same evaluation principle while accounting for the different benchmark characteristics and computational costs. To compute the spectral energy distributions in this setting, the number of generated samples is adjusted according to resolution in order to balance estimation accuracy and runtime: 200 samples are generated at resolution 128, 500 samples at resolution 64, and 1{,}000 samples at resolution 32. As in the Mat\'ern experiment, spectral densities are computed from the Fourier-domain energy and averaged across the corresponding sample sets. This protocol allows the resolution dependence of the generated frequency content to be assessed in a controlled manner, while maintaining comparable statistical confidence across resolutions.

Fig.~\ref{fig:matern_varying_res} and Fig.~\ref{fig:sns_varying_res} present representative numerical samples of the Mat\'ern field and the SNS benchmark, respectively, generated at different resolutions to complement the spectral-domain analysis with qualitative evidence in the physical domain. Both figures are produced using models trained in Appendix~\ref{apdx:gafr}. For generations, \themodel employs five sampling steps, whereas FFM uses ten steps, consistent with the solver configurations used in Fig.~\ref{fig_all} (C3). It is observed that under the \themodel framework, both the numerical range (e.g., amplitude scale and distribution of extrema) and the spectral energy remain stable across resolutions, indicating that the generated samples preserve the target distributional characteristics under grid changes. In contrast, under the FFM framework, noticeable drift is observed as the resolution varies, manifested as changes in amplitude scaling and shifts in the frequency content, which collectively suggest a mismatch with the target distribution and reduced cross-resolution robustness.

\begin{figure}[!ht]
  \centering
  \begin{subfigure}[t]{0.32\linewidth}
    \centering
    \includegraphics[width=\linewidth]{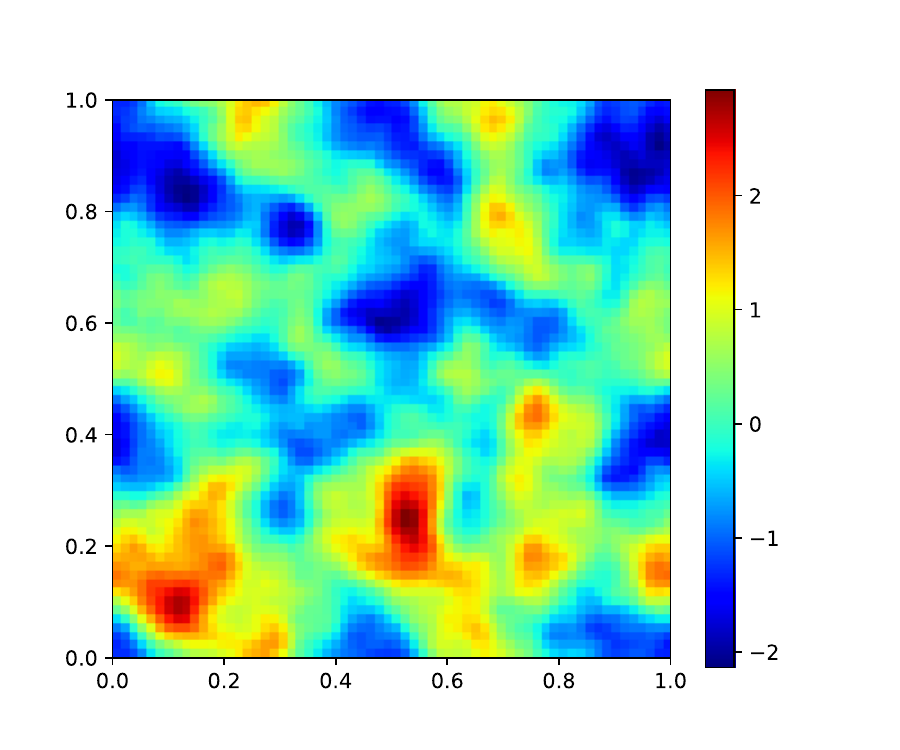}
    \subcaption{}
  \end{subfigure}%
  \begin{subfigure}[t]{0.32\linewidth}
    \centering
    \includegraphics[width=\linewidth]{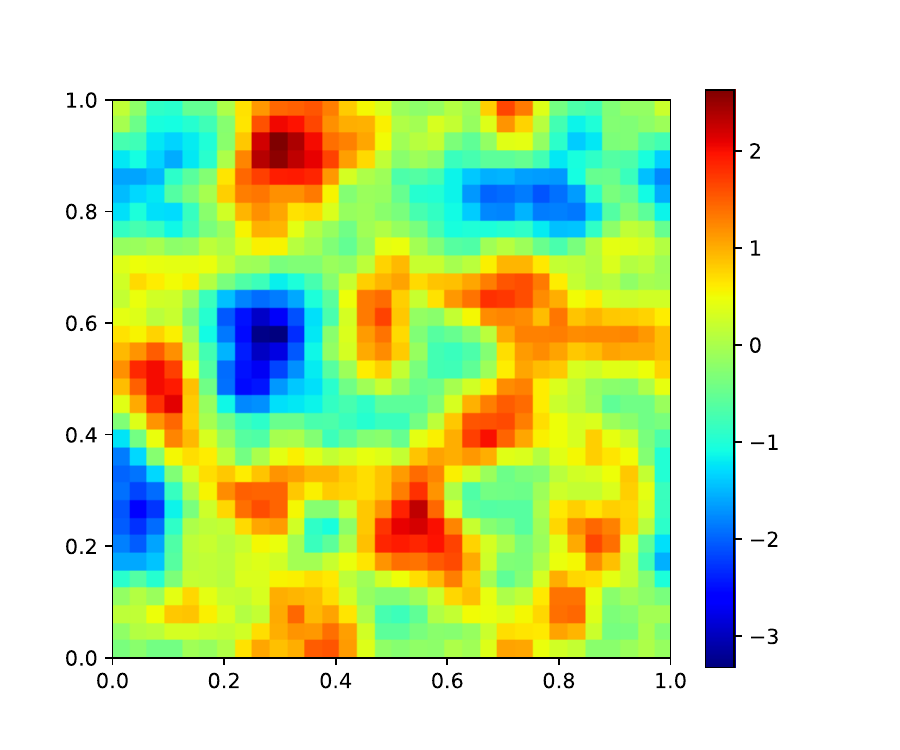}
    \subcaption{}
  \end{subfigure}%
  \begin{subfigure}[t]{0.32\linewidth}
    \centering
    \includegraphics[width=\linewidth]{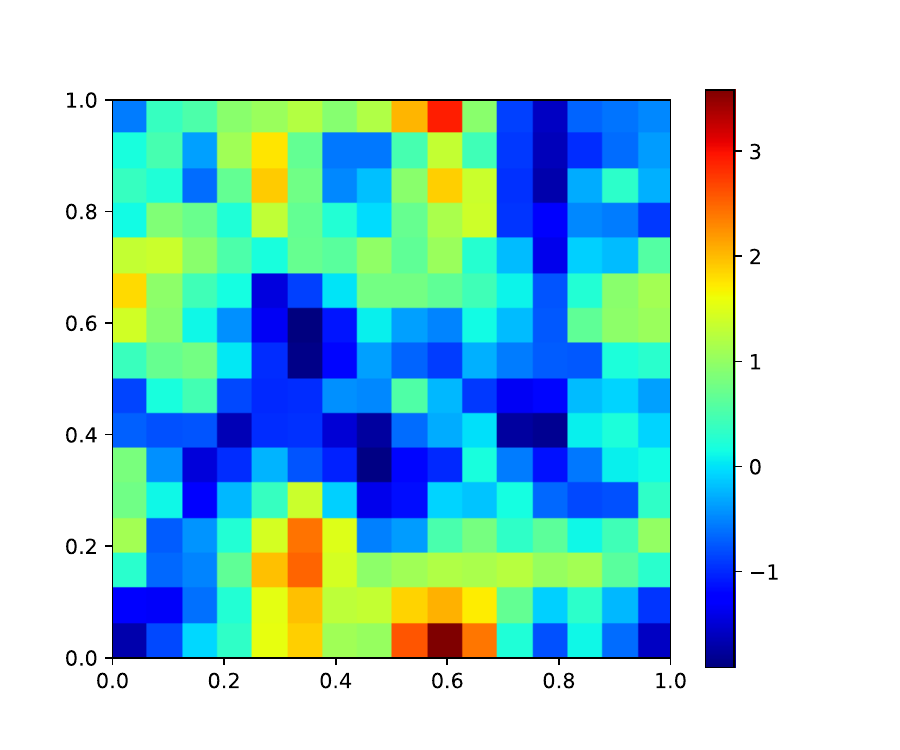}
    \subcaption{}
  \end{subfigure}
  
  \begin{subfigure}[t]{0.32\linewidth}
    \centering
    \includegraphics[width=\linewidth]{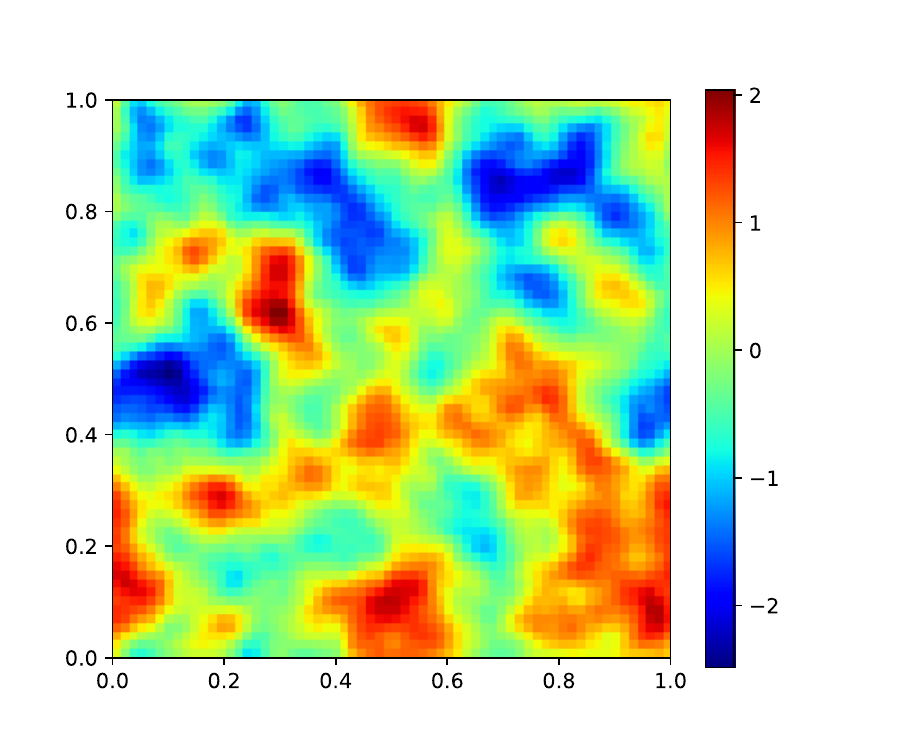}
    \subcaption{}
  \end{subfigure}%
  \begin{subfigure}[t]{0.32\linewidth}
    \centering
    \includegraphics[width=\linewidth]{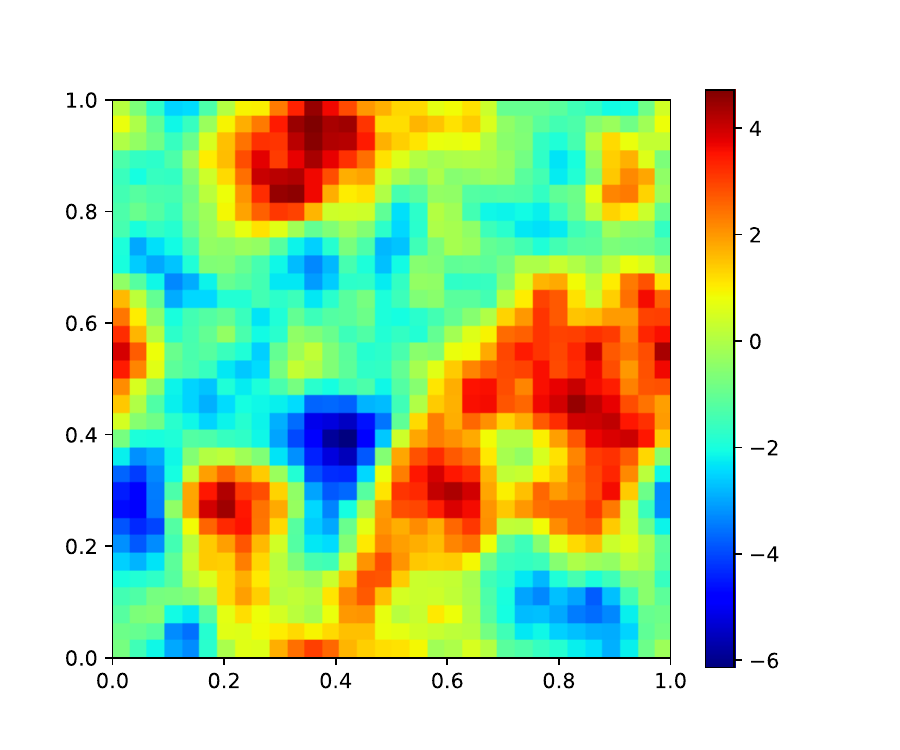}
    \subcaption{}
  \end{subfigure}%
  \begin{subfigure}[t]{0.32\linewidth}
    \centering
    \includegraphics[width=\linewidth]{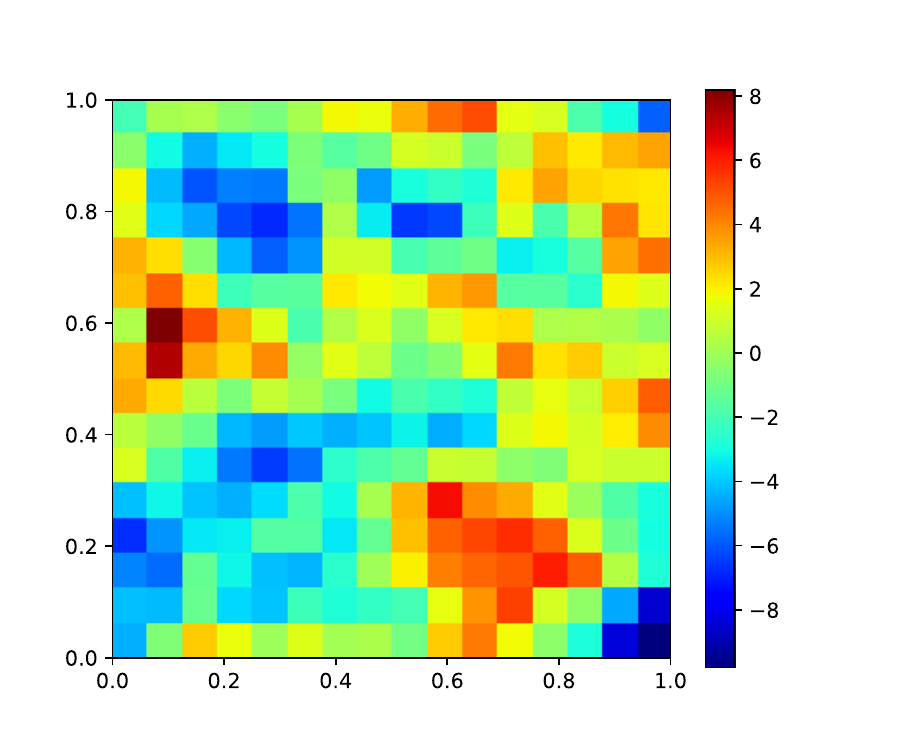}
    \subcaption{}
  \end{subfigure}
  \caption{Matern Field generated at different resolutions. Samples generated at resolution 64 (1st column), 32 (2nd column), 16 (3rd column) with (First row) \themodel and (Second row) FFM}
  \label{fig:matern_varying_res}
\end{figure}

\begin{figure}[!ht]
  \centering
  \begin{subfigure}[t]{0.48\linewidth}
    \centering
    \includegraphics[width=\linewidth]{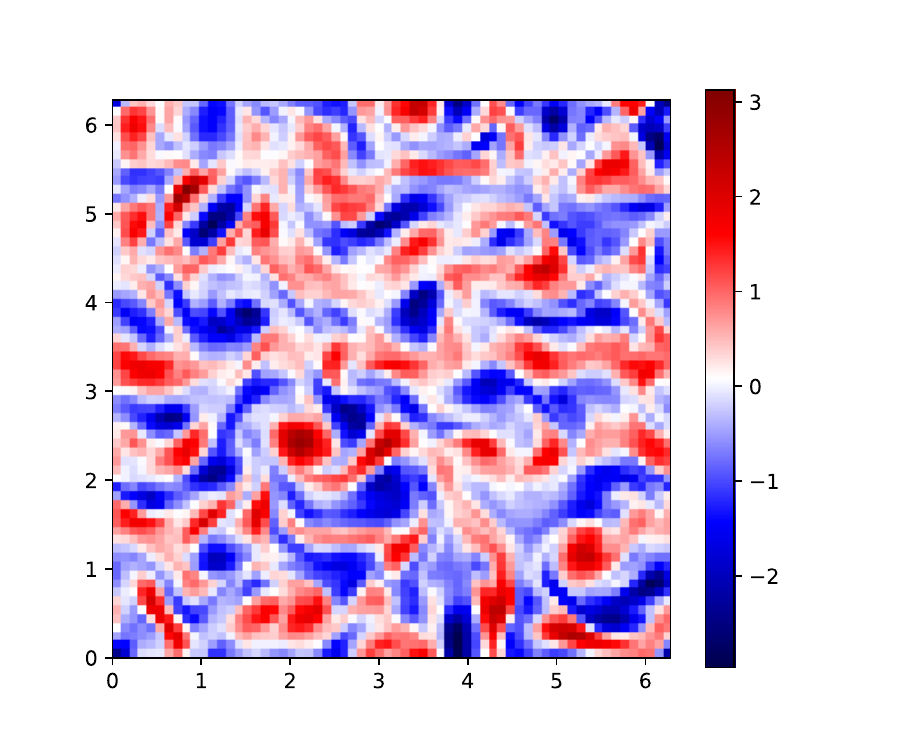}
    \subcaption{}
  \end{subfigure}%
  \begin{subfigure}[t]{0.48\linewidth}
    \centering
    \includegraphics[width=\linewidth]{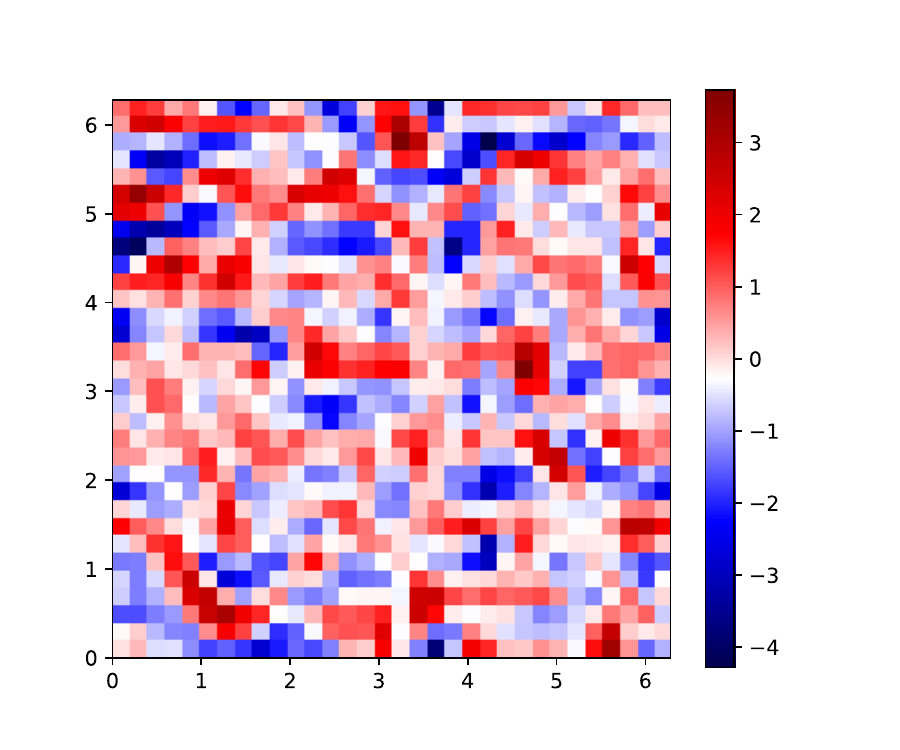}
    \subcaption{}
  \end{subfigure}
  \begin{subfigure}[t]{0.48\linewidth}
    \centering
    \includegraphics[width=\linewidth]{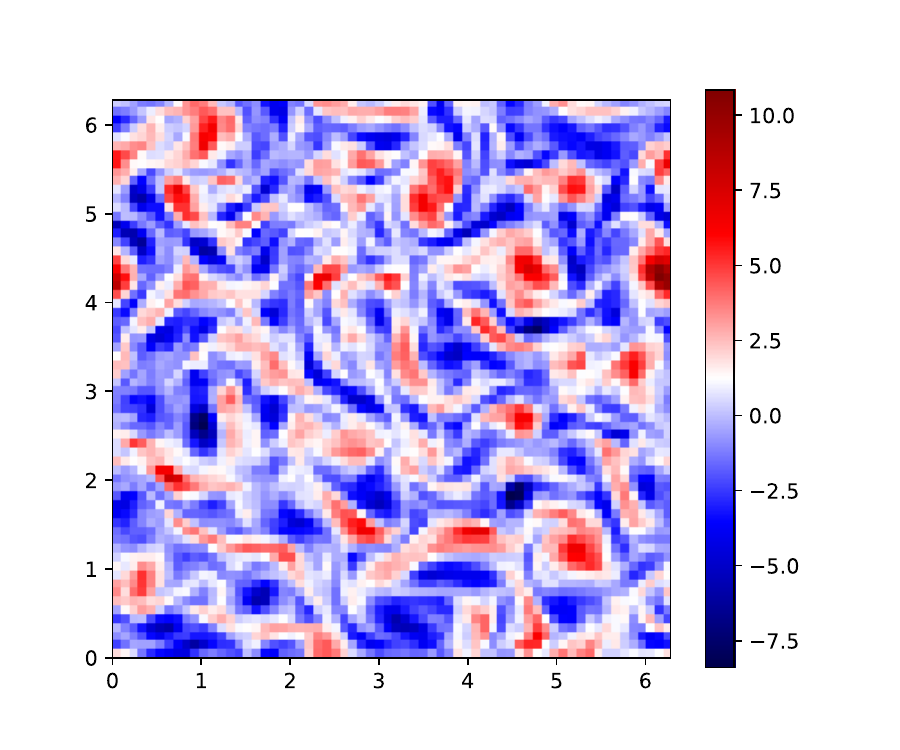}
    \subcaption{}
  \end{subfigure}%
  \begin{subfigure}[t]{0.48\linewidth}
    \centering
    \includegraphics[width=\linewidth]{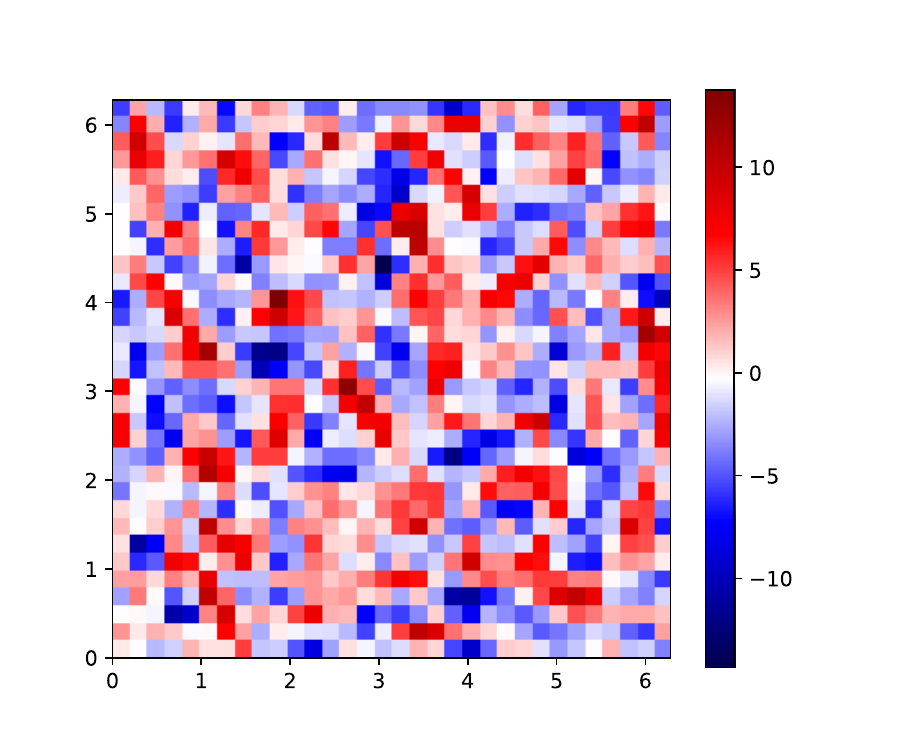}
    \subcaption{}
  \end{subfigure}
  
  \caption{Matern Field generated at different resolutions.}
  \label{fig:sns_varying_res}
\end{figure}

\subsection{Experiment 3: Generation with Guidance}

A detailed comparison of generation quality under guided settings is presented for generative frameworks equipped with different noise choices. In this set of experiments, guidance is provided through low-resolution observations, and the resulting conditional generation performance is evaluated both visually and quantitatively to assess how the noise design affects reconstruction fidelity and uncertainty calibration.

Figure~\ref{fig:matern_big_figure} reports the guided-generation results for the Mat\'ern Field, while Figure~\ref{fig:sns_big_figure} reports the corresponding results for the SNS dataset. In both figures, panel (a) displays the full-resolution ground-truth field, and panel (b) shows the low-resolution observation that is supplied to the generative model as the guidance signal. Panels (c--e) summarize the statistics of samples produced by \themodel, including the pixel-wise sample mean, the pixel-wise sample standard deviation, and the reconstruction mean squared error (MSE) with respect to the ground truth. Panels (f--h) provide the same set of statistics for samples generated by FFM, enabling a direct, method-aligned comparison under identical guidance information.

All reported statistical quantities are estimated from 200 independently generated samples per method. The sample mean is used to characterize the central tendency of the conditional distribution, the sample standard deviation is used to quantify uncertainty and spatial variability under guidance, and the reconstruction MSE map is used to localize regions where systematic bias or loss of fine-scale structure is most pronounced. This evaluation protocol is intended to jointly reflect reconstruction accuracy and uncertainty consistency, thereby providing a more comprehensive assessment of guided generation quality than single-sample visualization alone.

\begin{figure}[!ht]
  \centering
  \begin{subfigure}[t]{0.49\linewidth}
    \centering
    \includegraphics[width=\linewidth]{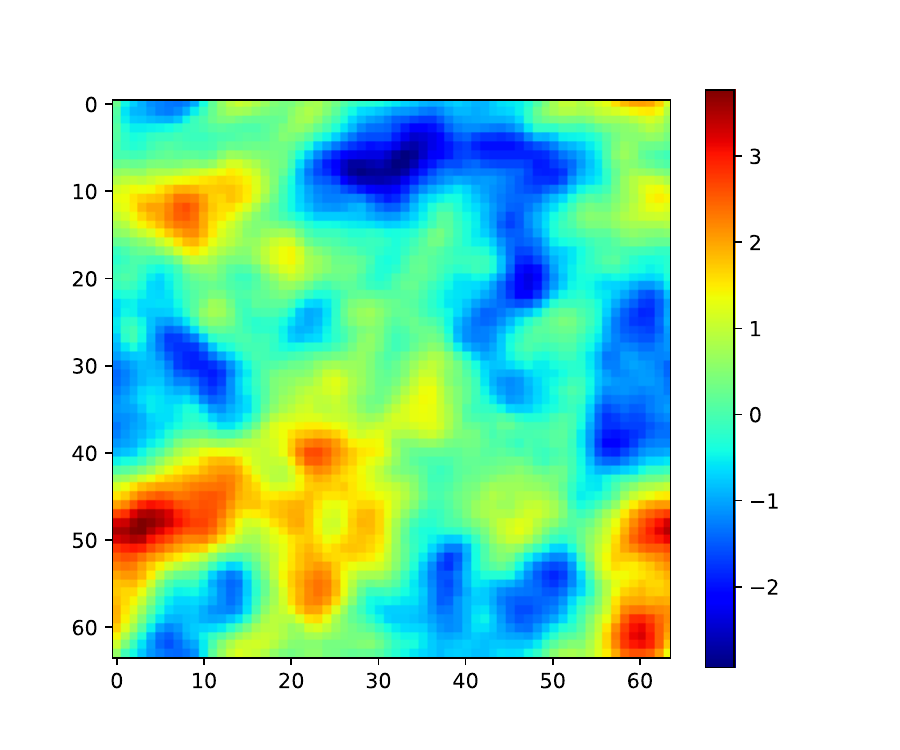}
    \subcaption{}
  \end{subfigure}%
  \begin{subfigure}[t]{0.49\linewidth}
    \centering
    \includegraphics[width=\linewidth]{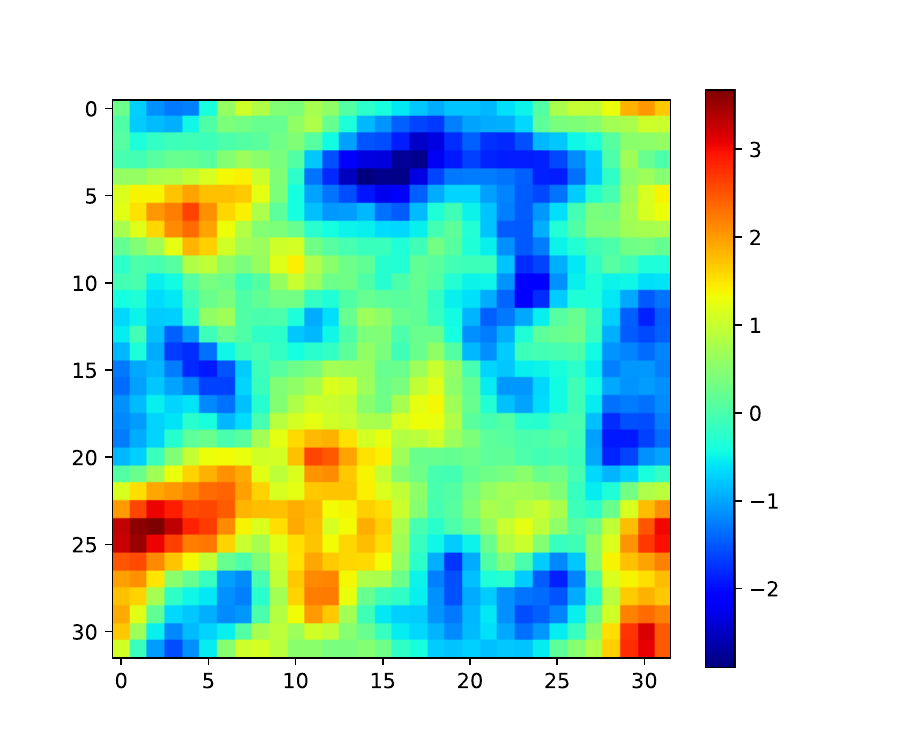}
    \subcaption{}
  \end{subfigure}
  
  \begin{subfigure}[t]{0.32\linewidth}
    \centering
    \includegraphics[width=\linewidth]{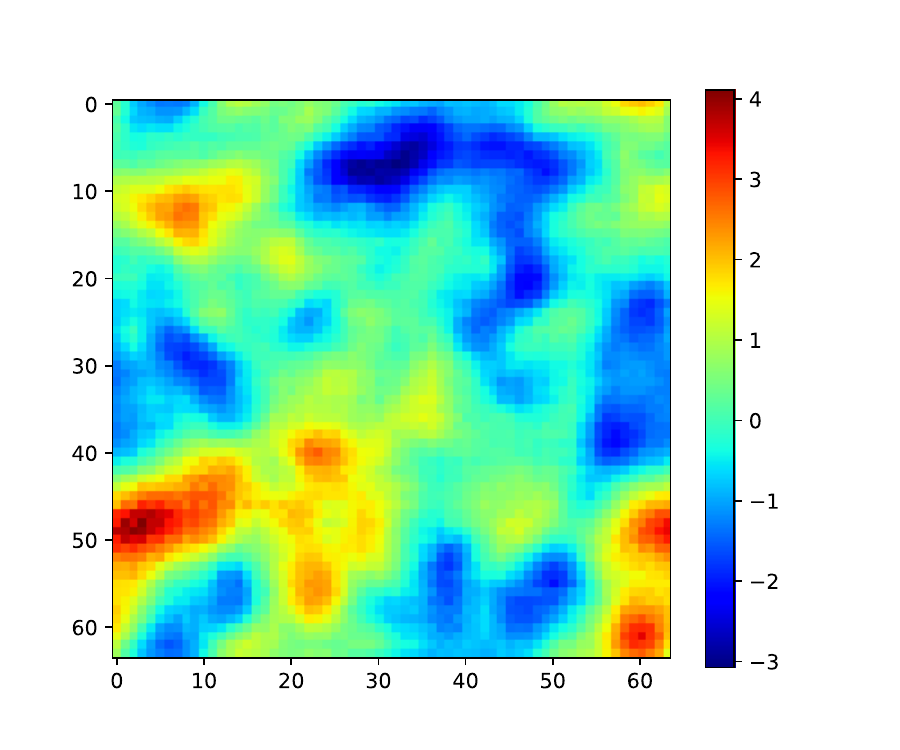}
    \subcaption{}
  \end{subfigure}%
   \begin{subfigure}[t]{0.32\linewidth}
    \centering
    \includegraphics[width=\linewidth]{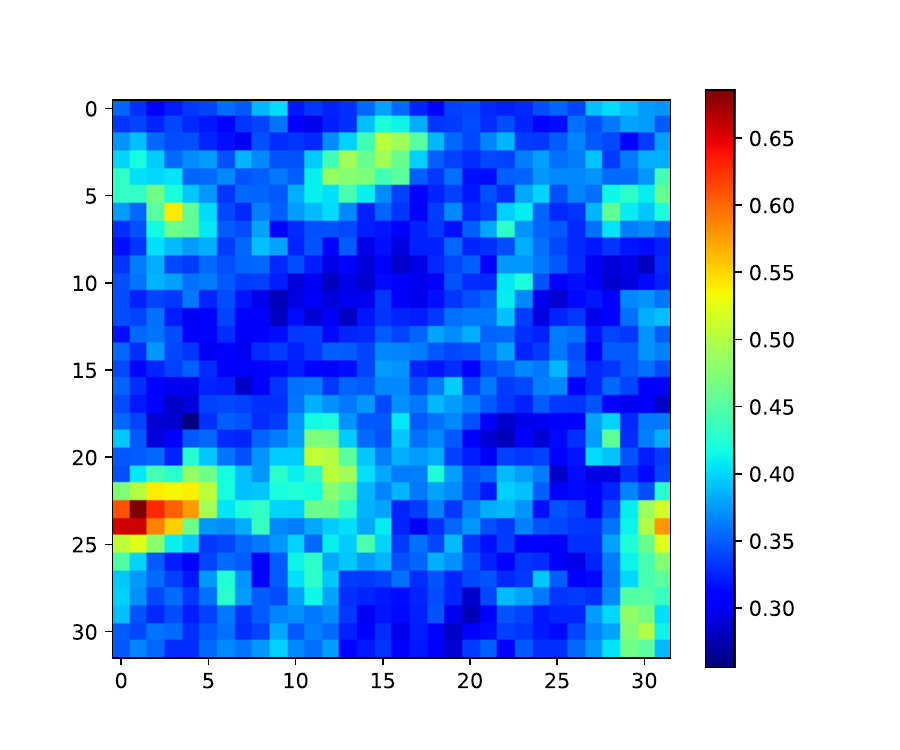}
    \subcaption{}
  \end{subfigure}%
    \begin{subfigure}[t]{0.32\linewidth}
    \centering
    \includegraphics[width=\linewidth]{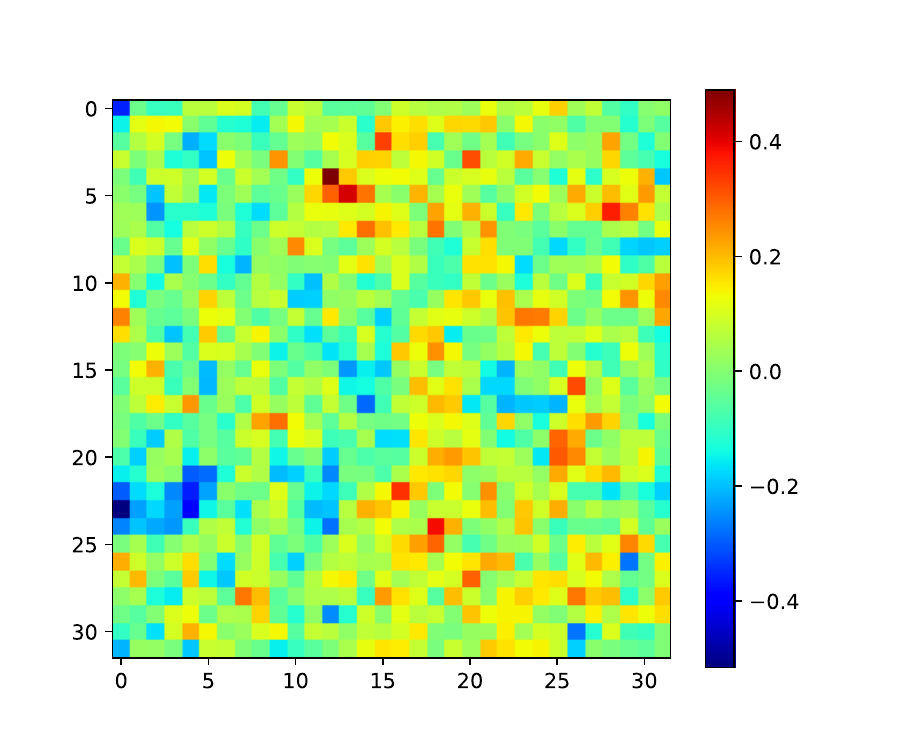}
    \subcaption{}
  \end{subfigure}
  \begin{subfigure}[t]{0.32\linewidth}
    \centering
    \includegraphics[width=\linewidth]{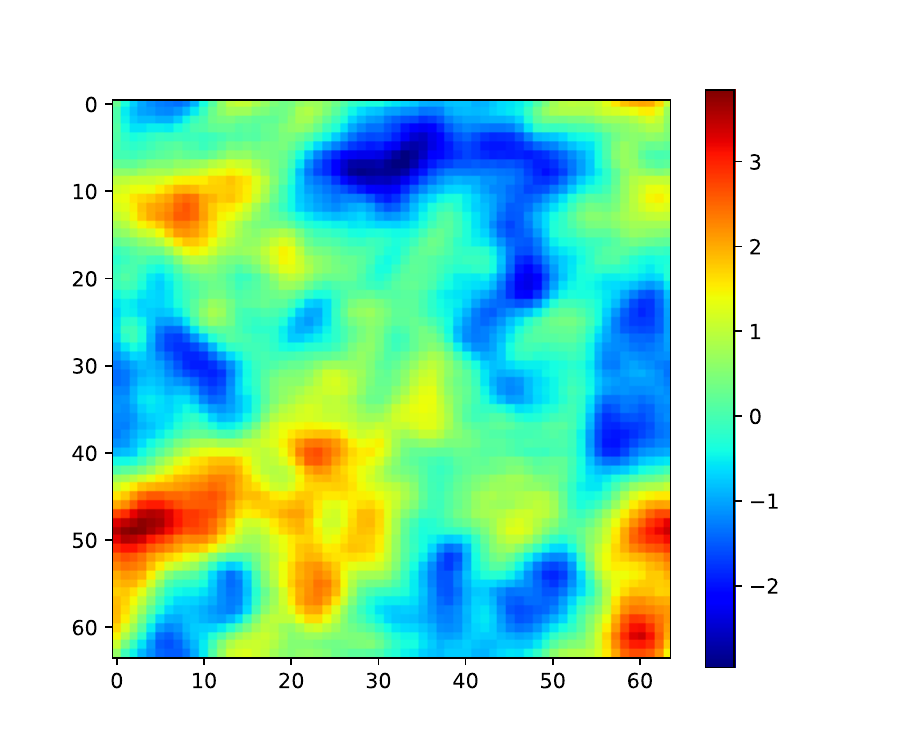}
    \subcaption{}
  \end{subfigure}%
    \begin{subfigure}[t]{0.32\linewidth}
    \centering
    \includegraphics[width=\linewidth]{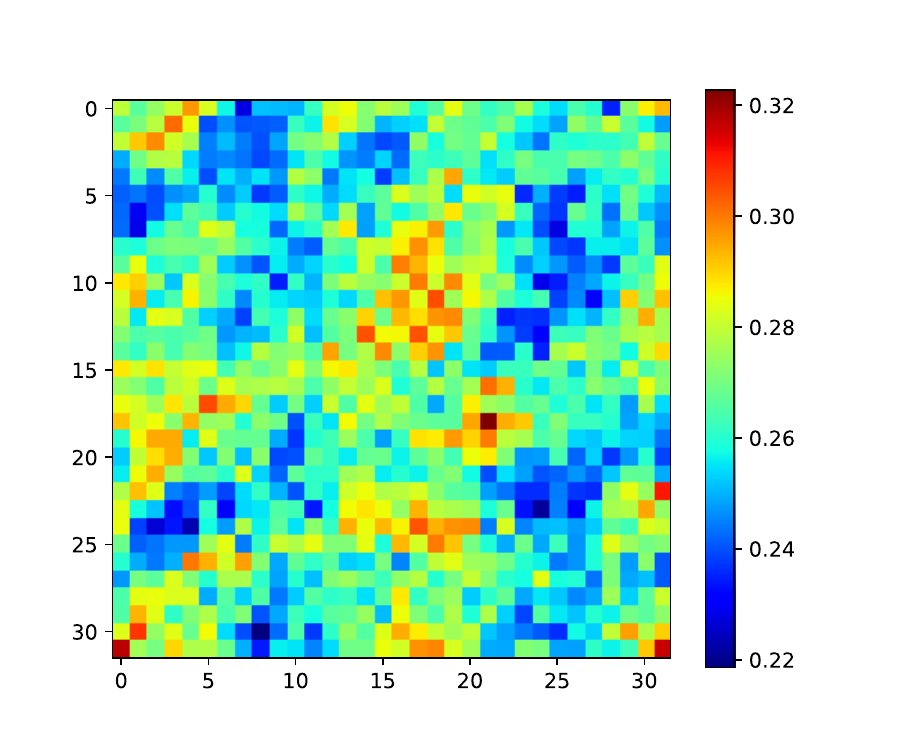}
    \subcaption{}
  \end{subfigure}%
    \begin{subfigure}[t]{0.32\linewidth}
    \centering
    \includegraphics[width=\linewidth]{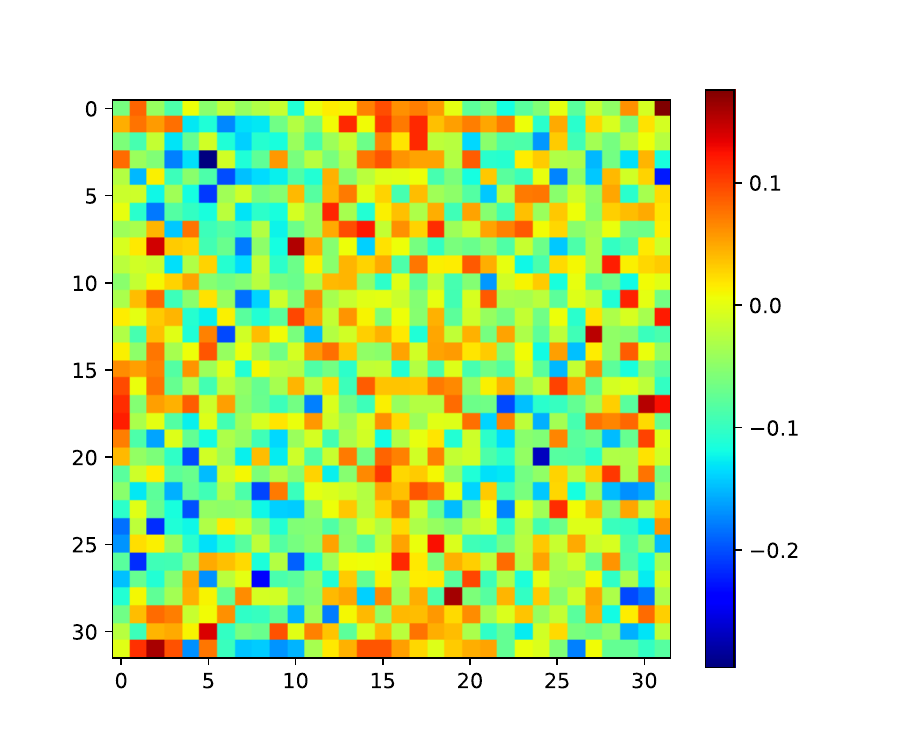}
    \subcaption{}
  \end{subfigure}%
  \caption{Mat\'ern cases (Generation with guidance). (a) Ground Truth. (b) Low resolution with 25\% observation. 
  Mean, standard deviation, and reconstruction MSE loss of 200 samples generated with (c, d, e) \themodel (f, g, h) gaussian noise.}
  \label{fig:matern_big_figure}
\end{figure}

\begin{figure}[!ht]
  \centering
  \begin{subfigure}[t]{0.49\linewidth}
    \centering
    \includegraphics[width=\linewidth]{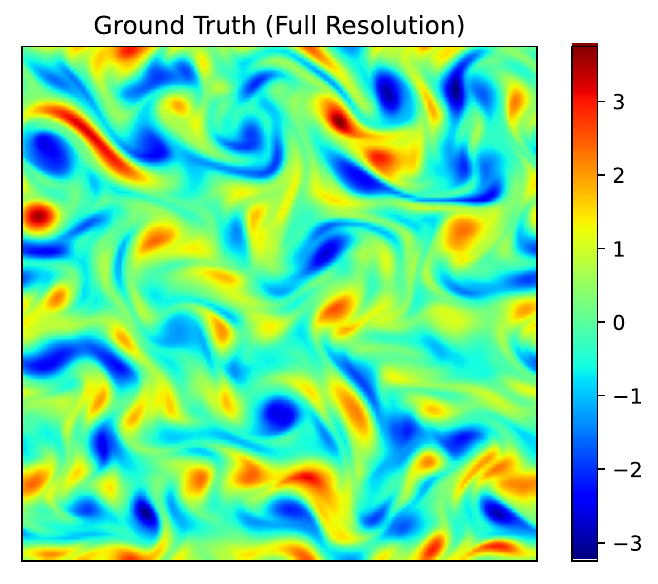}
    \subcaption{}
  \end{subfigure}%
  \begin{subfigure}[t]{0.49\linewidth}
    \centering
    \includegraphics[width=\linewidth]{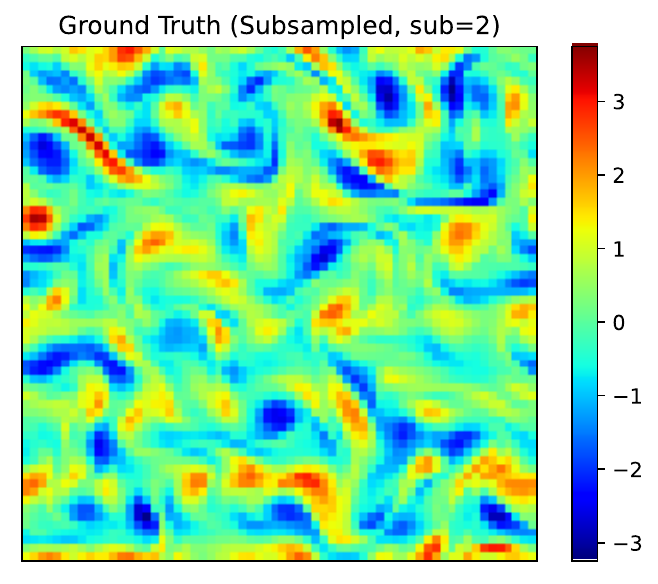}
    \subcaption{}
  \end{subfigure}
    \hfill
  \begin{subfigure}[t]{\linewidth}
    \centering
    \includegraphics[width=\linewidth]{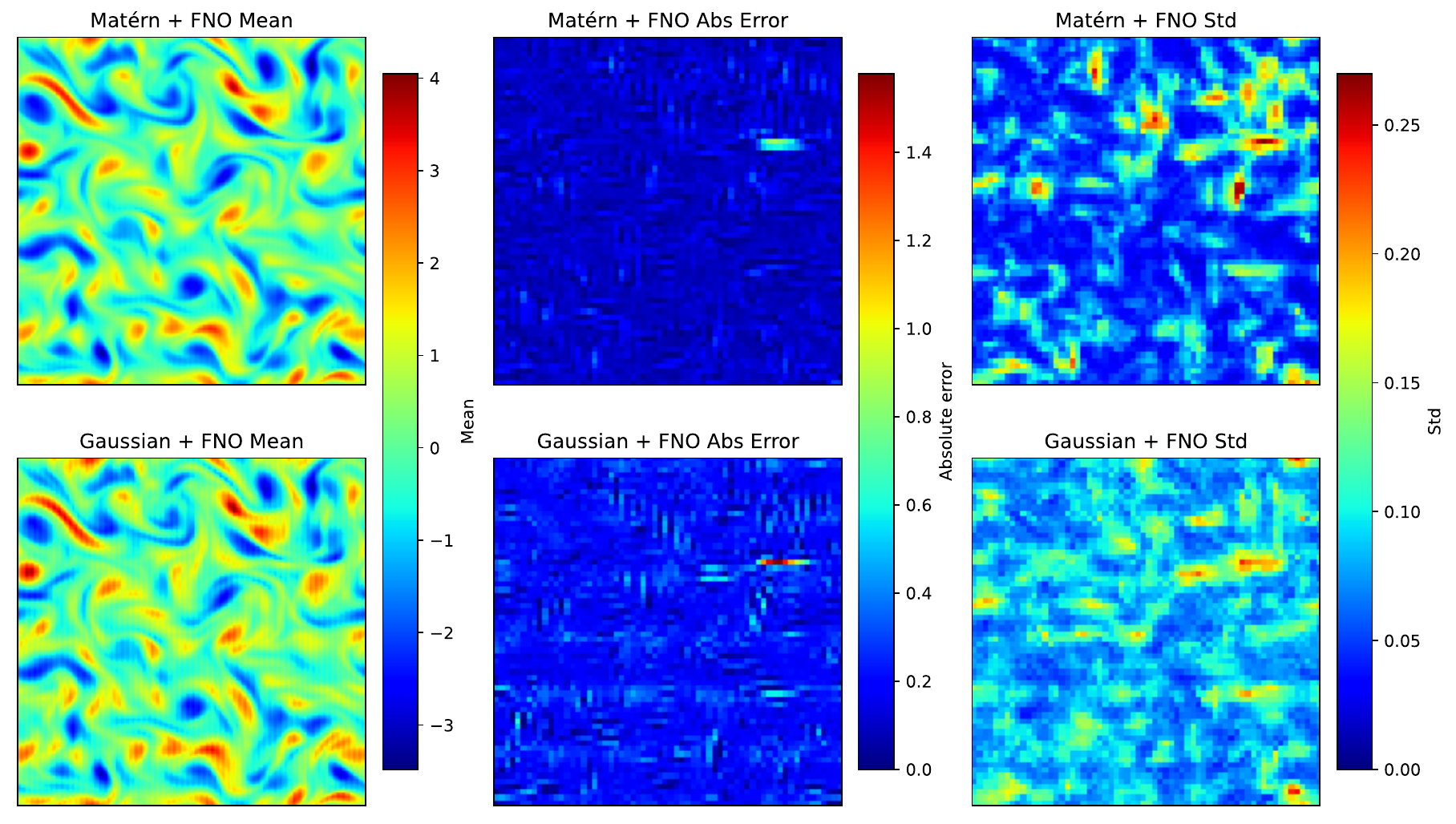}
    \subcaption{}
  \end{subfigure}%
  \caption{SNS cases (Generation with guidance). (a) Ground Truth. (b) Low resolution with 25\% observation. 
  Mean, reconstruction absolute error, and standard deviation of 200 samples generated with (First row) \themodel (Second row) gaussian noise.}
  \label{fig:sns_big_figure}
\end{figure}


\subsection{Ablation Experiments}

\label{apdx:source_selection_ablation}
The source-selection rule involves both Mat\'ern smoothness \(\nu_0\) and correlation length \(\ell_0\). We vary one parameter at a time on the target $(\nu_1,\ell_1)=(2,0.1)$ while keeping the data, FNO, training budget, and evaluation protocol fixed. The smoothness sweep includes inadmissible values to expose the boundary: the best tested choice is \(\nu_0=0.5\), inside the strict condition \(\nu_0<1\). Thus, the result supports choosing the smoothest effective source within a safety-margined admissible set, rather than matching target smoothness exactly.

\begin{table}[H]
\centering
    \caption{Source-smoothness sweep on the Mat\'ern-field target (lower is better).}
\begin{tabular}{lcccccc}
\toprule
       $\nu_0$  & 0.5  & 1.0 & 1.5 & 2.0 & 3.0 & 6.0 \\ \midrule
lMSE  & \textbf{1.90e-3} & 2.91e-3 & 2.92e-3 & 8.94e-3 & 2.28e-2 & 6.78  \\ \bottomrule
\end{tabular}
    \label{tab:smoothness_param}
\end{table}

With \(\nu_0=0.5\) fixed, the length-scale sweep tests the finite-band part of Eq.~\eqref{eq:finite_band_selection}. The equal-$\kappa$ choice \(\ell_0=0.05\) performs best; both shorter and longer correlations increase lMSE, while the discretized Gaussian source is worst. This non-monotone result shows that admissible smoothness alone does not determine a useful source.

\begin{table}[H]
\centering
    \caption{Source length-scale sweep at \(\nu_0=0.5\), with a discretized Gaussian reference.}
\begin{tabular}{lccccc}
\toprule
Source & $\ell_0=0.01$ & $\ell_0=0.05$ & $\ell_0=0.10$ & $\ell_0=0.20$ & Gaussian \\ \midrule
lMSE & 2.995e-3 & \textbf{5.520e-4} & 1.134e-3 & 2.241e-3 & 5.888e-3 \\ \bottomrule
\end{tabular}
    \label{tab:correlation_length}
\end{table}

\subsection{Controlled RK4 Step Sweep}\label{apdx:ode_step_sweep}
We compare fixed checkpoints trained on the same \(64^2\) Mat\'ern target with identical FNO architecture and training protocol. Each entry uses 1{,}000 generated samples. The selected Mat\'ern source reaches lMSE \(1.34\times10^{-3}\) at three steps and \(6.21\times10^{-4}\) at five steps. Gaussian FFM reaches a stable regime only around 30 steps, where its lMSE is \(2.14\times10^{-3}\). Hence the defensible source-specific efficiency gain is a \(6\)--\(10\times\) reduction in RK4 steps and neural-function evaluations, not the larger fixed-budget ratios in individual table cells.

\begin{table}[H]
\centering
\caption{Controlled RK4 sweep on the \(64^2\) Mat\'ern target (lMSE, lower is better).}
\label{tab:controlled_step_sweep}
\begin{tabular}{lrrrrrr}
\toprule
Steps & 3 & 5 & 10 & 20 & 30 & 50 \\
\midrule
Gaussian & 0.2673 & 0.09294 & 0.006363 & 0.005474 & 0.002138 & 0.002232 \\
Mat\'ern & \textbf{0.001339} & \textbf{0.000621} & \textbf{0.000688} & \textbf{0.000850} & \textbf{0.001247} & \textbf{0.001112} \\
\bottomrule
\end{tabular}
\end{table}

\subsection{Complementary SNS Distribution Metrics}\label{apdx:sns_distribution_metrics}
Spectral lMSE alone does not establish distributional fidelity. We therefore evaluate 1{,}000 target and 1{,}000 generated SNS vorticity fields at \(64^2\), using five-step NAGO and ten-step Gaussian FFM. Kernel bandwidths, histogram bins, and normalization are fixed from held-out target samples. NAGO is better on multiscale field MMD, the pointwise marginal distances, velocity structure functions, and three physical observables, whereas Gaussian FFM is better on spatial correlations and vorticity kurtosis. These mixed results support a broad but non-universal distributional improvement.

\begin{table}[H]
\centering
\caption{SNS distributional errors (lower is better).}
\label{tab:sns_distribution_metrics}
\resizebox{\textwidth}{!}{%
\begin{tabular}{lrrrrrrrrr}
\toprule
Method & MMD$^2$ & std. W1 & JS & $S_2$ & $S_4$ & KE & Enstrophy & Palinstrophy & Vort. corr. \\
\midrule
Gaussian (10) & 0.003676 & 0.07487 & 0.002247 & 0.1895 & 0.3506 & 0.1885 & 0.1786 & 0.2262 & \textbf{0.01711} \\
NAGO (5) & \textbf{0.001418} & \textbf{0.04575} & \textbf{0.000798} & \textbf{0.1450} & \textbf{0.2768} & \textbf{0.1330} & \textbf{0.1095} & \textbf{0.1470} & 0.01977 \\
\bottomrule
\end{tabular}}
\end{table}

\subsection{Resolution Diagnostics}\label{apdx:resolution_diagnostics}
Models are trained at \(64^2\) and evaluated at \(32^2\), \(64^2\), and \(128^2\). At \(128^2\), NAGO assigns \(93.99\%\) of generated energy to the band shared with training and \(2.39\%\) to newly available modes; Gaussian FFM assigns \(23.64\%\) and \(76.92\%\), respectively. This measures shared-band preservation and suppression of spurious new-band energy. It does not imply correct extrapolation of unobserved fine scales: the NAGO palinstrophy ratio to the reference remains 6.72.

\subsection{Sampling Throughput Protocol}\label{apdx:speed_protocol}
We time both samplers on one NVIDIA RTX~4090 at batch size 16 with explicit CUDA synchronization. The pseudo-spectral solver runs at \(256^2\), uses \(\Delta t=10^{-4}\), records every 0.5 physical-time units, and is downsampled to \(64^2\). During the stationary production interval it takes 2{,}038.139\,s for 100 recorded batches, or 20.3814\,s per batch. Five-step NAGO takes a median 0.228848\,s per batch over ten repeats (IQR 0.002085\,s), giving the reported raw ratio \(89.1\times\). Checkpoint loading, serialization, dataset generation, and model training are excluded. Because solver outputs are temporally correlated and five-step NAGO does not match the finite-reference sampling floor, this is a post-training framework-throughput comparison, not an equal-fidelity or end-to-end claim.

\subsection{FFT Mat\'ern Sampling Cost}\label{apdx:matern_sampling_cost}
For the stationary periodic-grid setting used here, the Mat\'ern covariance is diagonal in the Fourier basis. For a batch of \(B\) fields on an \(N^d\) grid, we draw white Fourier coefficients, multiply each coefficient by \((\kappa^2+\lVert k\rVert^2)^{-\alpha/2}\), impose Hermitian symmetry for real fields, and apply an inverse FFT. This requires \(O(BN^d\log N)\) time and \(O(BN^d)\) memory. No dense \(N^d\times N^d\) covariance matrix is constructed or factorized. This advantage does not directly extend to irregular domains where Fourier diagonalization is unavailable.

\subsection{Stationary Anisotropic Target}\label{apdx:anisotropic}
We place the anisotropic experiment in the appendix because the current method selects an isotropic source. The target is a stationary anisotropic Mat\'ern field with \(\nu_1=2\), \(\ell_x=0.1\), \(\ell_y=0.05\), and a \(30^\circ\) rotation. Both \(64^2\) FNOs retain the full Fourier grid and are trained from scratch for 4{,}000 steps. At ten RK4 steps, the isotropic Mat\'ern source improves orientation error, anisotropy-ratio error, and full 2D spectral errors over the Gaussian source (Table~\ref{tab:anisotropic}). Both sources nevertheless recover the anisotropy accurately, and paired fixed-checkpoint evaluation seeds do not show uniform superiority: at three-step Mat\'ern versus ten-step Gaussian, 2D log-power MSE is \(0.003001\pm0.000121\) versus \(0.002788\pm0.000176\). The intervals quantify evaluation-sampling variability, not independent training runs.

\begin{table}[H]
\centering
\caption{Stationary anisotropic Mat\'ern target with 1{,}024 samples and ten RK4 steps.}
\label{tab:anisotropic}
\begin{tabular}{lrrrr}
\toprule
Source & Angle error ($^\circ$) & Ratio error & 2D power L1 & Log-power MSE \\
\midrule
Gaussian & 0.1310 & 0.01192 & 0.04611 & 0.002929 \\
Isotropic Mat\'ern & \textbf{0.0815} & \textbf{0.01170} & \textbf{0.03955} & \textbf{0.002563} \\
\bottomrule
\end{tabular}
\end{table}

Figure~\ref{fig:anisotropic_samples} complements the aggregate spectral metrics with unpaired field realizations. The samples are drawn from the same saved 1{,}024-sample, ten-step evaluation used in Table~\ref{tab:anisotropic}; fixed, nonoverlapping array indices are used for the three rows. Both learned distributions reproduce the target's diagonal organization, while the quantitative differences between source choices are assessed by the distribution-level metrics above rather than by visual matching of individual draws.

\begin{figure}[H]
  \centering
  \includegraphics[width=0.94\textwidth]{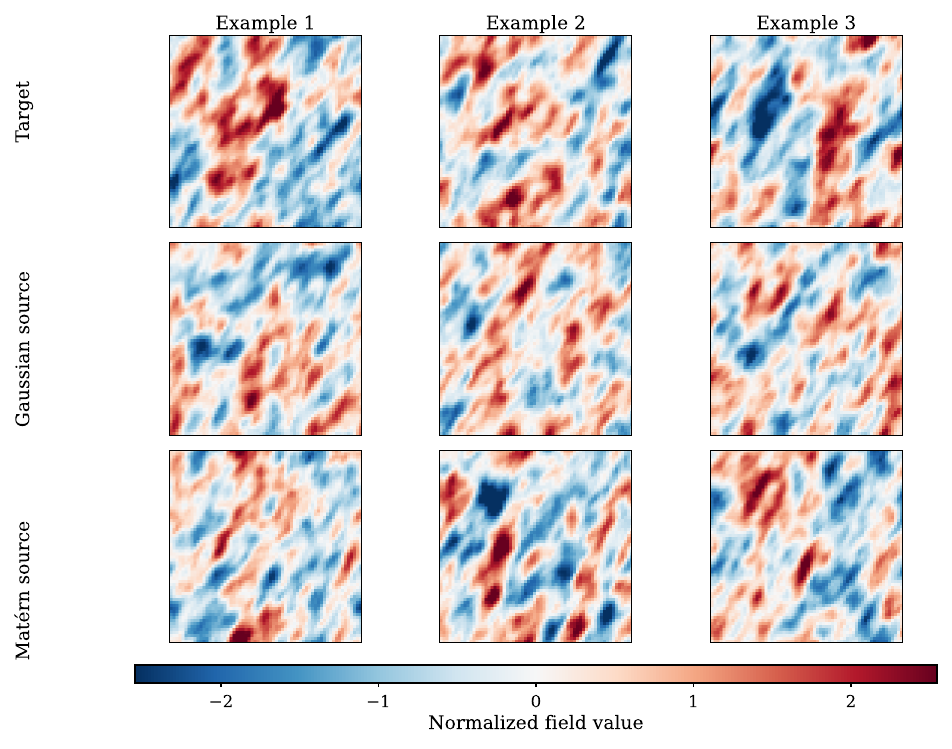}
  \caption{Qualitative samples for the stationary anisotropic Mat\'ern target. Rows show unpaired draws from the target distribution, the Gaussian-source model, and the isotropic Mat\'ern-source model. All panels use the same symmetric color scale.}
  \label{fig:anisotropic_samples}
\end{figure}

\begin{figure}[!ht]
  \centering
  \begin{subfigure}[t]{0.48\textwidth}
    \centering
    \includegraphics[width=\textwidth]{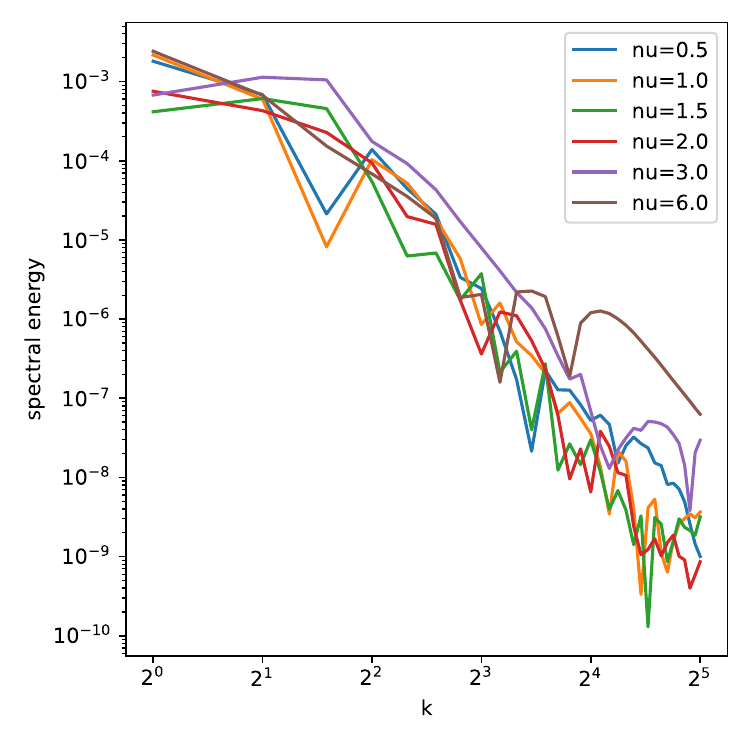}
    \caption{Guided loss trained by different $\nu$.}
    \label{fig:loss_nu}
  \end{subfigure}
  \begin{subfigure}[t]{0.48\textwidth}
    \centering
    \includegraphics[width=\textwidth]{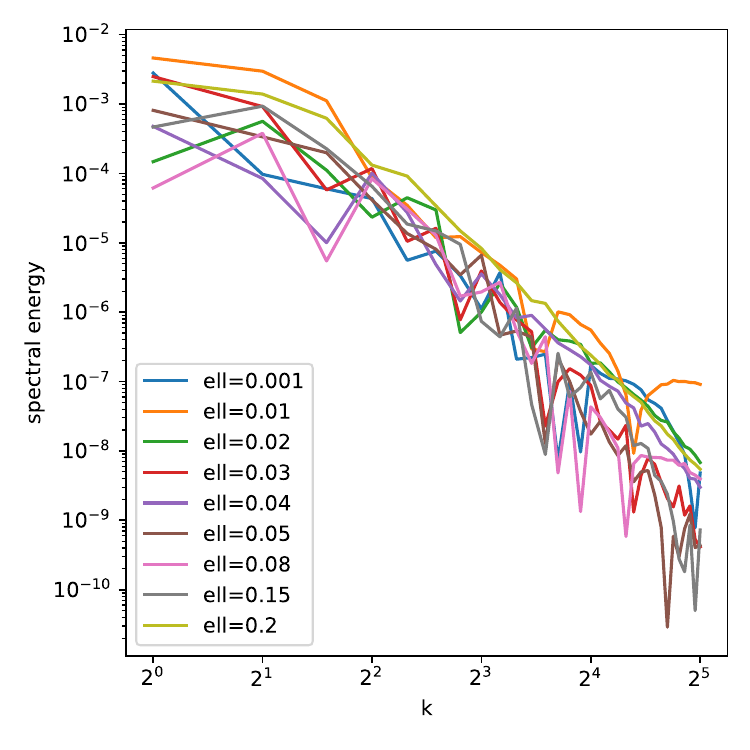}
    \caption{Guided loss trained by different $\ell$}
    \label{fig:loss_ell}
  \end{subfigure}
  \caption{Comparison of guided losses under different noise settings.}
  \label{fig:loss_matern}
\end{figure}

\begin{figure}[!ht]
  \centering
  \begin{subfigure}[t]{0.44\textwidth}
    \centering
    \includegraphics[width=\textwidth]{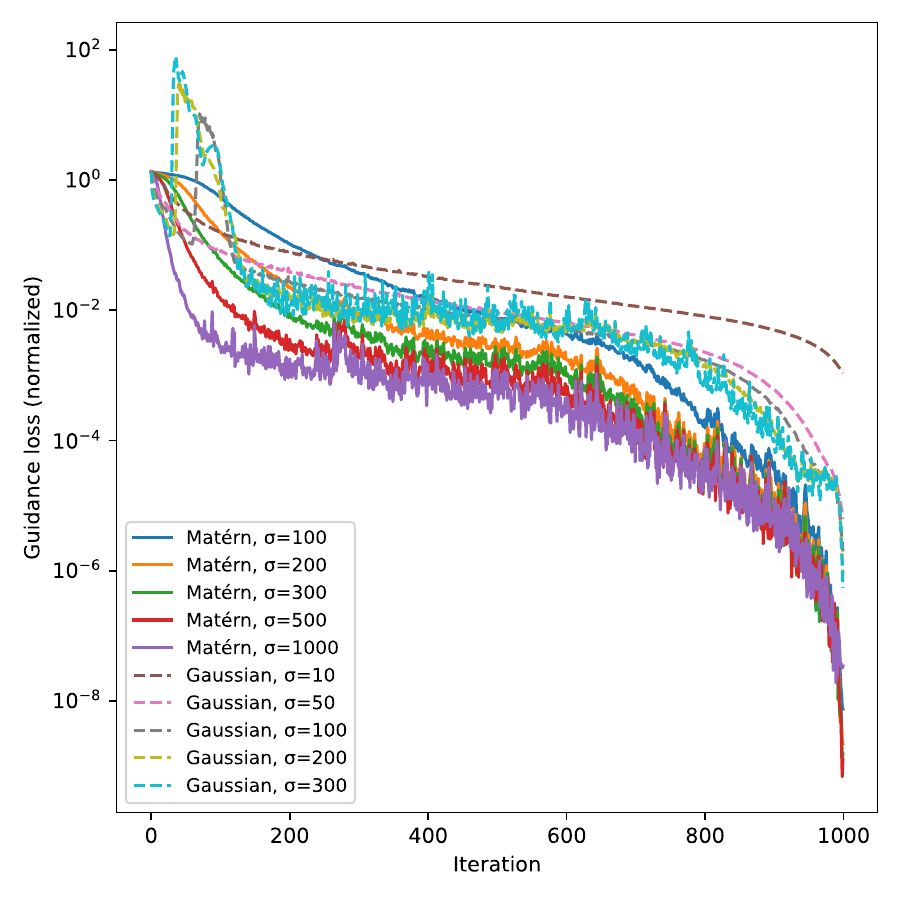}
    \caption{Guided Loss as Iteration Steps. (Matern)}
    \label{fig:matern_loss_guidance}
  \end{subfigure}
  \begin{subfigure}[t]{0.48\textwidth}
    \centering
    \includegraphics[width=\textwidth]{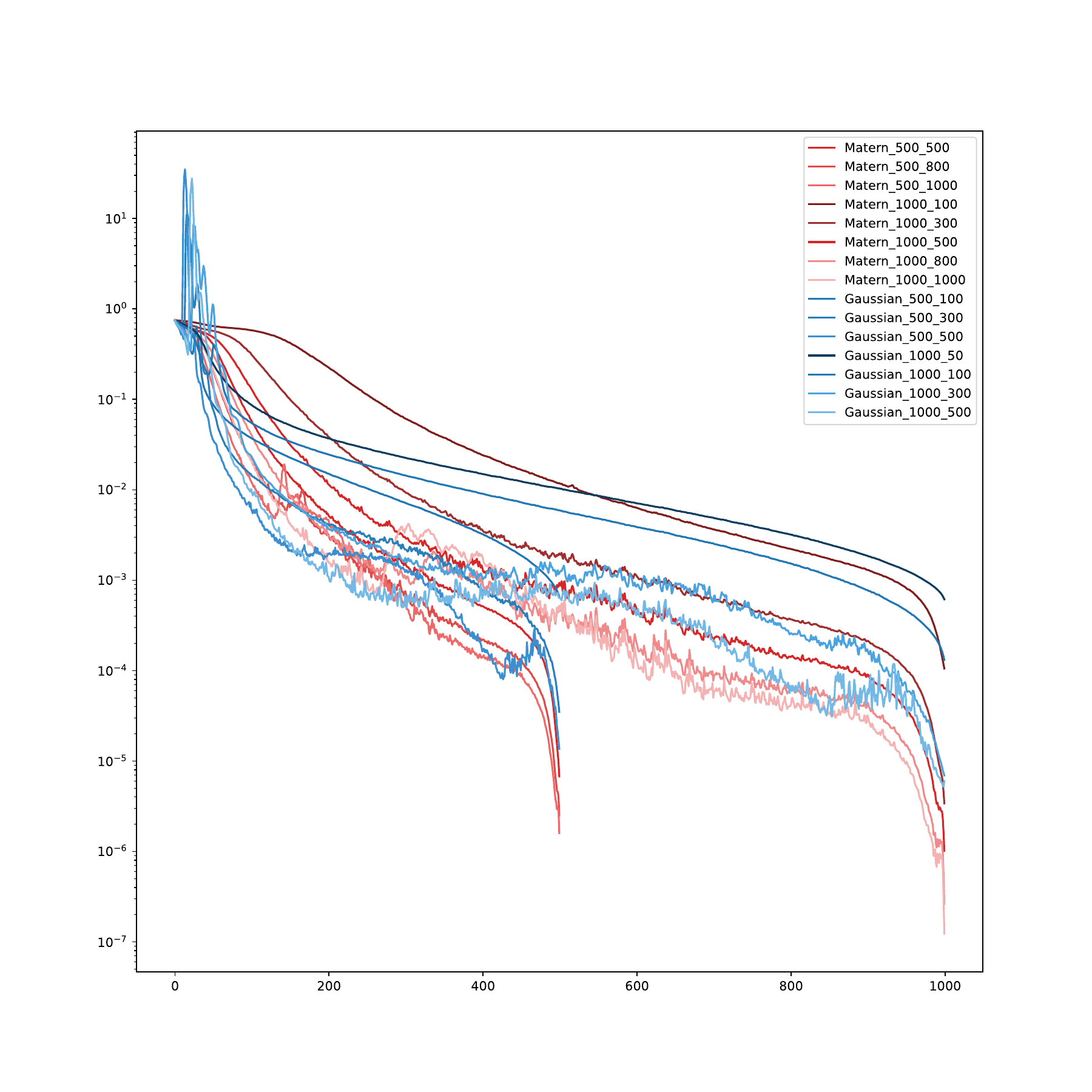}
    \caption{Guided Loss as Iteration Steps. (SNS)}
    \label{fig:sns_guide_loss_b}
  \end{subfigure}
  \caption{Comparison of guided losses under different noise settings.}
  \label{fig:guide_loss}
\end{figure}
\label{apdx:guide_loss}
Guided sampling introduces a second numerical intervention in addition to ODE refinement. If a per-step guidance coefficient is held fixed while the number of steps grows, the cumulative guidance also grows, confounding source effects with intervention strength. We therefore add a fixed-total protocol in which the per-step coefficient is scaled so that the total guidance budget remains constant.

On the \(64^2\) Mat\'ern target with 25\% observations, 1{,}000 generated samples, and total guidance budget 100, Gaussian sources yield lower observation and conditional-mean MSE at both 3 and 100 RK4 steps. The Mat\'ern source is substantially better in spectral lMSE at three steps but not at 100 steps (Table~\ref{tab:fixed_total_guidance}). Thus, source choice changes the trade-off among observation fit, posterior spread, and spectral fidelity; no single source uniformly dominates the guided metrics. The loss trajectories in Figs.~\ref{fig:loss_matern} and \ref{fig:guide_loss} remain useful diagnostics under their original parameterization, but they should not be read as causal isolation of the source.

\begin{table}[H]
\centering
\caption{Fixed-total-guidance control on the Mat\'ern target (lower is better).}
\label{tab:fixed_total_guidance}
\begin{tabular}{llrrr}
\toprule
Steps & Source & Observed MSE & Mean MSE & Spectral lMSE \\
\midrule
3 & Gaussian & \textbf{1.136} & \textbf{0.510} & 0.2665 \\
3 & Mat\'ern & 1.817 & 0.907 & \textbf{0.00259} \\
100 & Gaussian & \textbf{0.977} & \textbf{0.450} & \textbf{0.02620} \\
100 & Mat\'ern & 1.741 & 0.893 & 0.07456 \\
\bottomrule
\end{tabular}
\end{table}

\section{Limitations of This Study}
\label{app:limitations}
The current theory and implementation assume stationary measures on periodic regular grids. Under these assumptions, the spectrum is diagonal in a Fourier basis and Mat\'ern samples are generated efficiently by FFT. On irregular geometries, nonperiodic domains, or nonstationary targets, estimating a reliable spectrum and applying the same sampler are nontrivial. The finite-band selection rule also depends on the sample size and on the band used to estimate \(\widehat\beta\); uncertainty in this estimate is not yet propagated into source selection.

The empirical evidence is also bounded. Most comparisons use one trained checkpoint per configuration, so repeated evaluation seeds measure conditional sampling variability rather than training variability. The anisotropic experiment uses an isotropic source and shows accurate recovery of a stationary directional target, but it does not establish uniform superiority or address spatially varying anisotropy. Cross-resolution tests preserve the represented frequency band but do not reconstruct an unseen spectral tail. Finally, the guidance results depend on the metric and on how guidance strength is normalized across solver steps.

We also emphasize that the scope of this work is sampling from invariant measures of SPDEs, rather than general PDE solution generation or image generation. 
For deterministic PDE prediction, generative models are usually less efficient than neural operator methods when the goal is to approximate the forward solution map, since iterative sampling is not naturally optimized for fast surrogate modeling. 
Therefore, we do not position the proposed method as a replacement for operator-learning approaches in standard PDE forecasting tasks.

Similarly, this work does not focus on image-generation benchmarks. 
Our resolution-consistency analysis is motivated by the function-space nature of SPDEs, where samples are discretizations of underlying random fields and should remain consistent under changes of spatial resolution. 
This consideration is fundamentally different from common image-generation settings, where low- and high-resolution images are usually treated as distinct finite-dimensional data distributions rather than discretizations of the same function-space object.

%% file: checklist.tex
\newpage
\section*{NeurIPS Paper Checklist}

\begin{enumerate}

\item {\bf Claims}
    \item[] Question: Do the main claims made in the abstract and introduction accurately reflect the paper's contributions and scope?
    \item[] Answer:  \answerYes{} 
    \item[] Justification: We have made clear claims about contributions and scopes in the abstract and introduction.
    \item[] Guidelines:
    \begin{itemize}
        \item The answer \answerNA{} means that the abstract and introduction do not include the claims made in the paper.
        \item The abstract and/or introduction should clearly state the claims made, including the contributions made in the paper and important assumptions and limitations. A \answerNo{} or \answerNA{} answer to this question will not be perceived well by the reviewers. 
        \item The claims made should match theoretical and experimental results, and reflect how much the results can be expected to generalize to other settings. 
        \item It is fine to include aspirational goals as motivation as long as it is clear that these goals are not attained by the paper. 
    \end{itemize}

\item {\bf Limitations}
    \item[] Question: Does the paper discuss the limitations of the work performed by the authors?
    \item[] Answer:  \answerYes{} 
    \item[] Justification: We have the separated limitation section in the appendix.
    \item[] Guidelines:
    \begin{itemize}
        \item The answer \answerNA{} means that the paper has no limitation while the answer \answerNo{} means that the paper has limitations, but those are not discussed in the paper. 
        \item The authors are encouraged to create a separate ``Limitations'' section in their paper.
        \item The paper should point out any strong assumptions and how robust the results are to violations of these assumptions (e.g., independence assumptions, noiseless settings, model well-specification, asymptotic approximations only holding locally). The authors should reflect on how these assumptions might be violated in practice and what the implications would be.
        \item The authors should reflect on the scope of the claims made, e.g., if the approach was only tested on a few datasets or with a few runs. In general, empirical results often depend on implicit assumptions, which should be articulated.
        \item The authors should reflect on the factors that influence the performance of the approach. For example, a facial recognition algorithm may perform poorly when image resolution is low or images are taken in low lighting. Or a speech-to-text system might not be used reliably to provide closed captions for online lectures because it fails to handle technical jargon.
        \item The authors should discuss the computational efficiency of the proposed algorithms and how they scale with dataset size.
        \item If applicable, the authors should discuss possible limitations of their approach to address problems of privacy and fairness.
        \item While the authors might fear that complete honesty about limitations might be used by reviewers as grounds for rejection, a worse outcome might be that reviewers discover limitations that aren't acknowledged in the paper. The authors should use their best judgment and recognize that individual actions in favor of transparency play an important role in developing norms that preserve the integrity of the community. Reviewers will be specifically instructed to not penalize honesty concerning limitations.
    \end{itemize}

\item {\bf Theory assumptions and proofs}
    \item[] Question: For each theoretical result, does the paper provide the full set of assumptions and a complete (and correct) proof?
    \item[] Answer: \answerYes{} 
    \item[] Justification: We provide detailed theoretical proofs in the main paper and appendix.
    \item[] Guidelines:
    \begin{itemize}
        \item The answer \answerNA{} means that the paper does not include theoretical results. 
        \item All the theorems, formulas, and proofs in the paper should be numbered and cross-referenced.
        \item All assumptions should be clearly stated or referenced in the statement of any theorems.
        \item The proofs can either appear in the main paper or the supplemental material, but if they appear in the supplemental material, the authors are encouraged to provide a short proof sketch to provide intuition. 
        \item Inversely, any informal proof provided in the core of the paper should be complemented by formal proofs provided in appendix or supplemental material.
        \item Theorems and Lemmas that the proof relies upon should be properly referenced. 
    \end{itemize}

    \item {\bf Experimental result reproducibility}
    \item[] Question: Does the paper fully disclose all the information needed to reproduce the main experimental results of the paper to the extent that it affects the main claims and/or conclusions of the paper (regardless of whether the code and data are provided or not)?
    \item[] Answer: \answerYes{} 
    \item[] Justification: We describe the details of our model and training strategy in the paper for reproduction.
    \item[] Guidelines:
    \begin{itemize}
        \item The answer \answerNA{} means that the paper does not include experiments.
        \item If the paper includes experiments, a \answerNo{} answer to this question will not be perceived well by the reviewers: Making the paper reproducible is important, regardless of whether the code and data are provided or not.
        \item If the contribution is a dataset and\slash or model, the authors should describe the steps taken to make their results reproducible or verifiable. 
        \item Depending on the contribution, reproducibility can be accomplished in various ways. For example, if the contribution is a novel architecture, describing the architecture fully might suffice, or if the contribution is a specific model and empirical evaluation, it may be necessary to either make it possible for others to replicate the model with the same dataset, or provide access to the model. In general. releasing code and data is often one good way to accomplish this, but reproducibility can also be provided via detailed instructions for how to replicate the results, access to a hosted model (e.g., in the case of a large language model), releasing of a model checkpoint, or other means that are appropriate to the research performed.
        \item While NeurIPS does not require releasing code, the conference does require all submissions to provide some reasonable avenue for reproducibility, which may depend on the nature of the contribution. For example
        \begin{enumerate}
            \item If the contribution is primarily a new algorithm, the paper should make it clear how to reproduce that algorithm.
            \item If the contribution is primarily a new model architecture, the paper should describe the architecture clearly and fully.
            \item If the contribution is a new model (e.g., a large language model), then there should either be a way to access this model for reproducing the results or a way to reproduce the model (e.g., with an open-source dataset or instructions for how to construct the dataset).
            \item We recognize that reproducibility may be tricky in some cases, in which case authors are welcome to describe the particular way they provide for reproducibility. In the case of closed-source models, it may be that access to the model is limited in some way (e.g., to registered users), but it should be possible for other researchers to have some path to reproducing or verifying the results.
        \end{enumerate}
    \end{itemize}

\item {\bf Open access to data and code}
    \item[] Question: Does the paper provide open access to the data and code, with sufficient instructions to faithfully reproduce the main experimental results, as described in supplemental material?
    \item[] Answer: \answerYes{} 
    \item[] Justification: The code is provided in supplemental material.
    \item[] Guidelines:
    \begin{itemize}
        \item The answer \answerNA{} means that paper does not include experiments requiring code.
        \item Please see the NeurIPS code and data submission guidelines (\url{https://neurips.cc/public/guides/CodeSubmissionPolicy}) for more details.
        \item While we encourage the release of code and data, we understand that this might not be possible, so \answerNo{} is an acceptable answer. Papers cannot be rejected simply for not including code, unless this is central to the contribution (e.g., for a new open-source benchmark).
        \item The instructions should contain the exact command and environment needed to run to reproduce the results. See the NeurIPS code and data submission guidelines (\url{https://neurips.cc/public/guides/CodeSubmissionPolicy}) for more details.
        \item The authors should provide instructions on data access and preparation, including how to access the raw data, preprocessed data, intermediate data, and generated data, etc.
        \item The authors should provide scripts to reproduce all experimental results for the new proposed method and baselines. If only a subset of experiments are reproducible, they should state which ones are omitted from the script and why.
        \item At submission time, to preserve anonymity, the authors should release anonymized versions (if applicable).
        \item Providing as much information as possible in supplemental material (appended to the paper) is recommended, but including URLs to data and code is permitted.
    \end{itemize}

\item {\bf Experimental setting/details}
    \item[] Question: Does the paper specify all the training and test details (e.g., data splits, hyperparameters, how they were chosen, type of optimizer) necessary to understand the results?
    \item[] Answer: \answerYes{} 
    \item[] Justification: We have introduced experiment settings and details, and even add more details in the appendix.
    \item[] Guidelines: We have introduced experiment settings and details, and even add more details in the appendix.
    \begin{itemize}
        \item The answer \answerNA{} means that the paper does not include experiments.
        \item The experimental setting should be presented in the core of the paper to a level of detail that is necessary to appreciate the results and make sense of them.
        \item The full details can be provided either with the code, in appendix, or as supplemental material.
    \end{itemize}

\item {\bf Experiment statistical significance}
    \item[] Question: Does the paper report error bars suitably and correctly defined or other appropriate information about the statistical significance of the experiments?
    \item[] Answer: \answerYes{} 
    \item[] Justification: Our paper is about SPDE, so the error is all based on the mean of samples, and we also consider their spectral distribution.
    \item[] Guidelines:
    \begin{itemize}
        \item The answer \answerNA{} means that the paper does not include experiments.
        \item The authors should answer \answerYes{} if the results are accompanied by error bars, confidence intervals, or statistical significance tests, at least for the experiments that support the main claims of the paper.
        \item The factors of variability that the error bars are capturing should be clearly stated (for example, train/test split, initialization, random drawing of some parameter, or overall run with given experimental conditions).
        \item The method for calculating the error bars should be explained (closed form formula, call to a library function, bootstrap, etc.)
        \item The assumptions made should be given (e.g., Normally distributed errors).
        \item It should be clear whether the error bar is the standard deviation or the standard error of the mean.
        \item It is OK to report 1-sigma error bars, but one should state it. The authors should preferably report a 2-sigma error bar than state that they have a 96\% CI, if the hypothesis of Normality of errors is not verified.
        \item For asymmetric distributions, the authors should be careful not to show in tables or figures symmetric error bars that would yield results that are out of range (e.g., negative error rates).
        \item If error bars are reported in tables or plots, the authors should explain in the text how they were calculated and reference the corresponding figures or tables in the text.
    \end{itemize}

\item {\bf Experiments compute resources}
    \item[] Question: For each experiment, does the paper provide sufficient information on the computer resources (type of compute workers, memory, time of execution) needed to reproduce the experiments?
    \item[] Answer: \answerYes{} 
    \item[] Justification: We have discussed the information about computer resources.
    \item[] Guidelines:
    \begin{itemize}
        \item The answer \answerNA{} means that the paper does not include experiments.
        \item The paper should indicate the type of compute workers CPU or GPU, internal cluster, or cloud provider, including relevant memory and storage.
        \item The paper should provide the amount of compute required for each of the individual experimental runs as well as estimate the total compute. 
        \item The paper should disclose whether the full research project required more compute than the experiments reported in the paper (e.g., preliminary or failed experiments that didn't make it into the paper). 
    \end{itemize}
    
\item {\bf Code of ethics}
    \item[] Question: Does the research conducted in the paper conform, in every respect, with the NeurIPS Code of Ethics \url{https://neurips.cc/public/EthicsGuidelines}?
    \item[] Answer: \answerYes{} 
    \item[] Justification:  We follow the NeurIPS Code of Ethics in every respect.
    \item[] Guidelines:
    \begin{itemize}
        \item The answer \answerNA{} means that the authors have not reviewed the NeurIPS Code of Ethics.
        \item If the authors answer \answerNo, they should explain the special circumstances that require a deviation from the Code of Ethics.
        \item The authors should make sure to preserve anonymity (e.g., if there is a special consideration due to laws or regulations in their jurisdiction).
    \end{itemize}

\item {\bf Broader impacts}
    \item[] Question: Does the paper discuss both potential positive societal impacts and negative societal impacts of the work performed?
    \item[] Answer: \answerYes{} 
    \item[] Justification: Positive impacts include
reducing computational costs for sampling invariant measures of SPDEs and improving the stability of scientific generative modeling. We do not anticipate specific negative societal impacts from this theoretical and computational work.
The method is designed for scientific simulations of stochastic PDEs rather than human-facing decision making or deployment in sensitive domains.
    \begin{itemize}
        \item The answer \answerNA{} means that there is no societal impact of the work performed.
        \item If the authors answer \answerNA{} or \answerNo, they should explain why their work has no societal impact or why the paper does not address societal impact.
        \item Examples of negative societal impacts include potential malicious or unintended uses (e.g., disinformation, generating fake profiles, surveillance), fairness considerations (e.g., deployment of technologies that could make decisions that unfairly impact specific groups), privacy considerations, and security considerations.
        \item The conference expects that many papers will be foundational research and not tied to particular applications, let alone deployments. However, if there is a direct path to any negative applications, the authors should point it out. For example, it is legitimate to point out that an improvement in the quality of generative models could be used to generate Deepfakes for disinformation. On the other hand, it is not needed to point out that a generic algorithm for optimizing neural networks could enable people to train models that generate Deepfakes faster.
        \item The authors should consider possible harms that could arise when the technology is being used as intended and functioning correctly, harms that could arise when the technology is being used as intended but gives incorrect results, and harms following from (intentional or unintentional) misuse of the technology.
        \item If there are negative societal impacts, the authors could also discuss possible mitigation strategies (e.g., gated release of models, providing defenses in addition to attacks, mechanisms for monitoring misuse, mechanisms to monitor how a system learns from feedback over time, improving the efficiency and accessibility of ML).
    \end{itemize}
    
\item {\bf Safeguards}
    \item[] Question: Does the paper describe safeguards that have been put in place for responsible release of data or models that have a high risk for misuse (e.g., pre-trained language models, image generators, or scraped datasets)?
    \item[] Answer: \answerNA{} 
    \item[] Justification: We do not release any data or model that has high risks for misuse.
    \item[] Guidelines:
    \begin{itemize}
        \item The answer \answerNA{} means that the paper poses no such risks.
        \item Released models that have a high risk for misuse or dual-use should be released with necessary safeguards to allow for controlled use of the model, for example by requiring that users adhere to usage guidelines or restrictions to access the model or implementing safety filters. 
        \item Datasets that have been scraped from the Internet could pose safety risks. The authors should describe how they avoided releasing unsafe images.
        \item We recognize that providing effective safeguards is challenging, and many papers do not require this, but we encourage authors to take this into account and make a best faith effort.
    \end{itemize}

\item {\bf Licenses for existing assets}
    \item[] Question: Are the creators or original owners of assets (e.g., code, data, models), used in the paper, properly credited and are the license and terms of use explicitly mentioned and properly respected?
    \item[] Answer: \answerYes{} 
    \item[] Justification: We have cited the original paper that produced the code package and dataset.
    \item[] Guidelines:
    \begin{itemize}
        \item The answer \answerNA{} means that the paper does not use existing assets.
        \item The authors should cite the original paper that produced the code package or dataset.
        \item The authors should state which version of the asset is used and, if possible, include a URL.
        \item The name of the license (e.g., CC-BY 4.0) should be included for each asset.
        \item For scraped data from a particular source (e.g., website), the copyright and terms of service of that source should be provided.
        \item If assets are released, the license, copyright information, and terms of use in the package should be provided. For popular datasets, \url{paperswithcode.com/datasets} has curated licenses for some datasets. Their licensing guide can help determine the license of a dataset.
        \item For existing datasets that are re-packaged, both the original license and the license of the derived asset (if it has changed) should be provided.
        \item If this information is not available online, the authors are encouraged to reach out to the asset's creators.
    \end{itemize}

\item {\bf New assets}
    \item[] Question: Are new assets introduced in the paper well documented and is the documentation provided alongside the assets?
    \item[] Answer: \answerYes{} 
    \item[] Justification: We provide documentation with our code.
    \item[] Guidelines:
    \begin{itemize}
        \item The answer \answerNA{} means that the paper does not release new assets.
        \item Researchers should communicate the details of the dataset\slash code\slash model as part of their submissions via structured templates. This includes details about training, license, limitations, etc. 
        \item The paper should discuss whether and how consent was obtained from people whose asset is used.
        \item At submission time, remember to anonymize your assets (if applicable). You can either create an anonymized URL or include an anonymized zip file.
    \end{itemize}

\item {\bf Crowdsourcing and research with human subjects}
    \item[] Question: For crowdsourcing experiments and research with human subjects, does the paper include the full text of instructions given to participants and screenshots, if applicable, as well as details about compensation (if any)? 
    \item[] Answer: \answerNA{} 
    \item[] Justification: Our work does not involve crowdsourcing nor research with human subjects.
    \item[] Guidelines:
    \begin{itemize}
        \item The answer \answerNA{} means that the paper does not involve crowdsourcing nor research with human subjects.
        \item Including this information in the supplemental material is fine, but if the main contribution of the paper involves human subjects, then as much detail as possible should be included in the main paper. 
        \item According to the NeurIPS Code of Ethics, workers involved in data collection, curation, or other labor should be paid at least the minimum wage in the country of the data collector. 
    \end{itemize}

\item {\bf Institutional review board (IRB) approvals or equivalent for research with human subjects}
    \item[] Question: Does the paper describe potential risks incurred by study participants, whether such risks were disclosed to the subjects, and whether Institutional Review Board (IRB) approvals (or an equivalent approval/review based on the requirements of your country or institution) were obtained?
    \item[] Answer: \answerNA{} 
    \item[] Justification: Our work does not involve crowdsourcing nor research with human subjects.
    \item[] Guidelines:
    \begin{itemize}
        \item The answer \answerNA{} means that the paper does not involve crowdsourcing nor research with human subjects.
        \item Depending on the country in which research is conducted, IRB approval (or equivalent) may be required for any human subjects research. If you obtained IRB approval, you should clearly state this in the paper. 
        \item We recognize that the procedures for this may vary significantly between institutions and locations, and we expect authors to adhere to the NeurIPS Code of Ethics and the guidelines for their institution. 
        \item For initial submissions, do not include any information that would break anonymity (if applicable), such as the institution conducting the review.
    \end{itemize}

\item {\bf Declaration of LLM usage}
    \item[] Question: Does the paper describe the usage of LLMs if it is an important, original, or non-standard component of the core methods in this research? Note that if the LLM is used only for writing, editing, or formatting purposes and does \emph{not} impact the core methodology, scientific rigor, or originality of the research, declaration is not required.
    \item[] Answer: \answerNA{} 
    \item[] Justification: The core method development in this research does not involve LLMs as any important, original, or non-standard components.
    \item[] Guidelines:
    \begin{itemize}
        \item The answer \answerNA{} means that the core method development in this research does not involve LLMs as any important, original, or non-standard components.
        \item Please refer to our LLM policy in the NeurIPS handbook for what should or should not be described.
    \end{itemize}

\end{enumerate}